\newtheorem{lemma}{Lemma}
\newtheorem{theorem}{Theorem}
\theoremstyle{definition}
\theoremstyle{definition}
\newcounter{assump}
\renewcommand{\theassump}{\Alph{assump}}
\newcommand{\R}{\mathbb{R}}
\newcommand{\N}{\mathbb{N}}
\newcommand{\indi}{\mathds{1}}
\newcommand{\Ex}{\mathbb{E}}
\newcommand{\Var}{\text{Var}}
\DeclareMathOperator{\RC}{RatioCut}
\DeclareMathOperator{\NC}{NCut}
\DeclareMathOperator{\vol}{vol}
\DeclareMathOperator{\cut}{Cut}
\DeclareMathOperator{\trace}{Tr}
\DeclareMathOperator{\bal}{balance}
\DeclareMathOperator{\rank}{rank}
\DeclareMathOperator{\const}{const}
\newcommand{\charfct}{\indi}
\icmltitlerunning{Guarantees for Spectral Clustering with Fairness Constraints}
\begin{document}

\twocolumn[
\icmltitle{Guarantees for Spectral Clustering with Fairness Constraints}



\icmlsetsymbol{equal}{*}

\begin{icmlauthorlist}
\icmlauthor{Matth\"{a}us Kleindessner}{ru}
\icmlauthor{Samira Samadi}{gt}
\icmlauthor{Pranjal Awasthi}{ru}
\icmlauthor{Jamie Morgenstern}{gt}
\end{icmlauthorlist}

\icmlaffiliation{ru}{Department of Computer Science, Rutgers University, NJ}
\icmlaffiliation{gt}{College of Computing, Georgia Tech, GA}

\icmlcorrespondingauthor{Matth\"{a}us Kleindessner}{matthaeus.kleindessner@rutgers.edu}
\icmlcorrespondingauthor{Samira Samadi}{ssamadi6@gatech.edu}
\icmlcorrespondingauthor{Pranjal Awasthi}{pranjal.awasthi@rutgers.edu}
\icmlcorrespondingauthor{Jamie Morgenstern}{jamiemmt@cs.gatech.edu}



\icmlkeywords{Machine Learning, Spectral Clustering, Fairness, ICML}

\vskip 0.3in
]



\printAffiliationsAndNotice{}  

\begin{abstract}
  Given the widespread popularity of spectral clustering (SC) for
  partitioning graph data, we study a version of constrained SC in
  which we try to incorporate the fairness notion
proposed by
\citet{fair_clustering_Nips2017}. According to this notion, a clustering
is fair if every demographic group is approximately
proportionally
 represented in each cluster.
 To this end,
  we develop
variants of both normalized and unnormalized  constrained SC
and show that they
help find
fairer clusterings on both synthetic and real data.
We also provide a rigorous theoretical analysis of our
algorithms
on
a natural variant
of the stochastic block model, where $h$ groups have strong inter-group
connectivity, but also exhibit a ``natural'' clustering structure
which is fair. We prove that our algorithms can recover this fair
clustering with high probability.
\end{abstract}

\section{Introduction}\label{sec_introduction}

Machine learning (ML) has recently seen an explosion of applications
in settings to guide or make choices directly affecting
people. Examples include applications in lending, marketing,
education, and many more. Close on the heels of the adoption of ML
methods in these everyday domains have been any number of examples of
ML methods displaying unsavory behavior towards certain demographic
groups. These have spurred the study of \emph{fairness} of
ML
algorithms.  Numerous mathematical formulations of fairness
have been proposed for supervised learning settings, each with their
strengths and shortcomings in terms of what they disallow and how
difficult they may be to
satisfy~\citep[e.g., ][]{fta,hardt2016equality,kleinberg2017,zafar2017}. Somewhat
more recently, the community has begun to study appropriate notions of
fairness for unsupervised learning settings
\citep[e.g., ][]{fair_clustering_Nips2017,celis2018,celis_fair_ranking,samira2018,fair_k_center_2019}.

In particular, the
work of~\citet{fair_clustering_Nips2017}
proposes a notion of fairness for clustering: namely, that
each cluster has proportional representation from different
demographic groups.
Their paper
provides approximation algorithms that incorporate
this fairness
notion for
$k$-center and $k$-median clustering. The follow-up work of~\citet{sohler_kmeans} extends this
to
 $k$-means clustering.
 These papers open up an important
line of work
that aims at
studying the following
questions for clustering: a) {\it How to incorporate
  fairness constraints into popular clustering objectives and
  algorithms?} and b) {\it What is the price of fairness?} For
example, the
experimental
results of~\citet{fair_clustering_Nips2017} indicate that
fair
clusterings
come with
a significant increase
in the $k$-center or $k$-median objective value. While the above works
focus on clustering data sets in Euclidean / metric spaces,
many
clustering problems involve graph data.
On such
 data,
spectral
clustering~\citep[SC; ][]{Luxburg_tutorial} is the method of choice in
practice. In this paper, we extend the above line of work by studying
the implications of incorporating the fairness notion
of~\citet{fair_clustering_Nips2017}
into~SC.


The contributions of this paper are as follows:
\begin{itemize}[wide, labelwidth=!, labelindent=0pt]
\setlength{\itemsep}{-1pt}

\item We show how to incorporate the constraints
 that in each cluster, every group should be represented with the  same proportion  as in the original data set
 into the SC framework.
  For continuity with prior work (as discussed above; also see Section~\ref{sec_related_work}), we
  refer to these constraints as \emph{fairness}
  constraints and speak of \emph{fair} clusterings.  However,
  the terms \emph{proportionality} and \emph{proportional} would be a more formal description of our goal.
 Our approach to incorporate the fairness constraints is analogous to existing versions of constrained SC that try to incorporate must-link constraints (see Section~\ref{sec_related_work}).
In contrast to the work of \citet{fair_clustering_Nips2017},
which always yields  a fair clustering no matter how much the
objective value increases
compared to an unfair clustering,
our approach does not guarantee that we end up with a fair clustering. Rather, our approach  guides SC to
find a good
\emph{and}
fair clustering if such a one exists.

\item Indeed, we prove that our algorithms find a good and fair clustering in a natural variant of the
  famous
   stochastic block model that we propose. In our variant, $h$ demographic groups have strong inter-group
  connectivity, but also exhibit a ``natural'' clustering structure
 that is fair. We provide a
  rigorous analysis of our algorithms showing that
they
  can
  recover this fair
  clustering with high~probability. To the best of our knowledge, such an  analysis has not been done before
for constrained versions of SC.

\item We conclude by giving experimental
  results on real-world
  data sets
  where proportional clustering can be
  a desirable goal, comparing the proportionality and objective value
  of standard SC to our methods.
Our experiments confirm that our algorithms tend to find fairer clusterings compared to standard SC.
  A surprising finding
  is that in many real
  data sets {\it achieving higher proportionality often comes at minimal cost}, namely,
  that our methods produce clusterings that are fairer, but have
  objective values very close to those of clusterings produced by
  standard SC. This complements the results
  of~\citet{fair_clustering_Nips2017}, where achieving fairness constraints \emph{exactly} comes at a significant cost in the objective value, and indicates that in some
  scenarios fairness and objective value need not be at odds
  with one other.
\end{itemize}


\vspace{1pt}
\textbf{Notation~~}
For $n\in \N$, we use
$[n]=\{1,\ldots,n\}$.
 $I_n$ denotes the
 $n\times n$-identity matrix
 and $0_{n\times m}$ is the
 $n\times m$-zero matrix.
 $\mathbf{1}_n$ denotes a vector of length~$n$
 with all entries equaling  1.
 For
 a matrix
 $A\in \R^{n\times m}$, we denote the transpose of $A$ by $A^T \in \R^{m\times n}$.
 For $A\in \R^{n\times n}$, $\trace(A)$ denotes the trace of $A$,
 that is
  $\trace(A)=\sum_{i=1}^n A_{ii}$.
  If we say that a matrix is positive (semi-)definite, this implies
  that~the~matrix~is~symmetric.

\section{
Spectral Clustering
}\label{sec_SC}

To set the ground and introduce terminology,
we review
spectral clustering~(SC).
There are several versions of SC
\citep{Luxburg_tutorial}.
For ease of presentation, here we focus on unnormalized
SC
\citep{hagen1992}. In Appendix~\ref{appendix_fair_SC_normalized}, we
adapt
all
findings
of this section and the following Section~\ref{sec_Fair_SC} to normalized
SC
\citep{shi2000}.

Let
$G=(V,E)$ be an undirected graph
on~$V=[n]$.
 We assume that each edge between two vertices~$i$ and
$j$ carries a positive weight $W_{ij}>0$ encoding the strength of
similarity between the verices. If there is no edge between $i$ and
$j$, we set $W_{ij}=0$.  We assume that $W_{ii}=0$ for all $i\in[n]$.
Given $k\in \N$, unnormalized SC aims to partition
$V$ into $k$ clusters
with minimum value of the RatioCut objective function as follows
\citep[see][for details]{Luxburg_tutorial}: for a clustering
$V=C_1\dot{\cup}\ldots \dot{\cup}C_k$ we have
\begin{align}\label{def_ratio_cut}
\RC(C_1,\ldots,C_k)=\sum_{l=1}^k \frac{\cut(C_l,V\setminus C_l)}{|C_l|},
\end{align}
where
\begin{align*}
\cut(C_l,V\setminus C_l)=\sum_{i\in C_l, j\in V\setminus C_l}W_{ij}.
\end{align*}
Let $W=(W_{ij})_{i,j\in[n]}$ be the weighted adjacency matrix of~$G$ and $D$ be the degree matrix,
that is a diagonal matrix with the vertex degrees
$d_i=\sum_{j\in[n]}W_{ij}$, $i\in [n]$, on the diagonal. Let $L=D-W$
denote the unnormalized graph Laplacian matrix.  Note that $L$ is
positive semi-definite.
A key insight is that if we encode a clustering $V=C_1\dot{\cup}\ldots \dot{\cup}C_k$ by a matrix
$H\in \R^{n\times k}$
with
\begin{align}\label{enc_clustering}
H_{il}= \begin{cases}
     1/\sqrt{|C_l|}, & i\in C_l, \\
0,& i\notin C_l
   \end{cases},
\end{align}
then $\RC(C_1,\ldots,C_k)=\trace(H^TLH)$. Hence, in order to minimize
the RatioCut function over all possible clusterings, we could instead
solve
\begin{align}\label{ratiocut_problem}
 \min_{H\in\R^{n\times k}}\trace(H^T L H) ~~ \text{subject to~$H$ is of form \eqref{enc_clustering}}.
\end{align}
Spectral clustering relaxes this minimization problem by replacing the
requirement that $H$ has to be of form \eqref{enc_clustering} with the
weaker requirement that $H^TH=I_k$, that is it solves
\begin{align}\label{relaxed_problem}
 \min_{H\in\R^{n\times k}}\trace(H^T L H) ~~ \text{subject to~}H^TH=I_k.
\end{align}
Since $L$ is symmetric, it is well known that a solution to
\eqref{relaxed_problem} is given by a matrix $H$ that contains
some orthonormal eigenvectors corresponding to the $k$ smallest
eigenvalues (respecting multiplicities) of $L$ as columns
\citep[][Section 5.2.2]{luetkepohl1996}. Consequently, the first step
of SC
is to compute such an optimal~$H$ by computing the $k$ smallest
eigenvalues
and corresponding eigenvectors.  The second step is to infer a
clustering from~$H$. While there is a one-to-one correspondence
between a clustering and a matrix
of the form \eqref{enc_clustering}, this is not the case for a
solution $H$ to the relaxed problem \eqref{relaxed_problem}.  Usually,
a clustering of $V$ is inferred from $H$ by applying $k$-means
clustering to the rows of $H$.  We summarize unnormalized~SC
as Algorithm~\ref{SC_alg}. Note that, in general, there is no
guarantee
on how close the RatioCut value of the clustering obtained by Algorithm~\ref{SC_alg} to the RatioCut value of an optimal clustering (solving~\eqref{ratiocut_problem})  is.

\begin{algorithm}[t!]
   \caption{Unnormalized
   SC
   }\label{SC_alg}
\begin{algorithmic}
   \STATE {\bfseries Input:} weighted adjacency matrix $W\in\R^{n\times n}$; $k\in\N$

\vspace{1mm}
   \STATE {\bfseries Output:} a clustering of $[n]$ into $k$ clusters

   \begin{itemize}[leftmargin=*]
   \setlength{\itemsep}{-2pt}
   \item compute
     the Laplacian matrix $L=D-W$
\item compute the $k$ smallest (respecting multiplicities) eigenvalues of $L$ and the corresponding  orthonormal eigenvectors (written as columns of $H\in \R^{n\times k}$)
  \item apply $k$-means clustering to the rows of $H$
   \end{itemize}
\end{algorithmic}
\end{algorithm}

\section{Adding  Fairness Constraints}\label{sec_Fair_SC}

We now extend the above setting to incorporate fairness
constraints. Suppose that the  data set $V$ contains $h$
groups~$V_s$ such that $V=\dot{\cup}_{s\in[h]}
V_s$. \citet{fair_clustering_Nips2017} proposed a notion of fairness
for clustering asking that every cluster contains approximately the same
number of elements from each group~$V_s$. For a
clustering $V=C_1\dot{\cup}\ldots \dot{\cup}C_k$, define the balance of
cluster $C_l$ as
\begin{align}\label{def_balance}
\bal(C_l)=\min_{s\neq s'\in[h]}\frac{|V_s \cap C_l|}{|V_{s'} \cap C_l|}\in[0,1].
\end{align}
The higher the balance of  each cluster, the fairer is the
 clustering according to the notion of \citet{fair_clustering_Nips2017}. For any clustering, we have
$\min_{l\in [k]} \bal(C_l)\leq \min_{s\neq s'\in[h]} |V_s|/|V_{s'}|$,
so that this fairness notion is actually asking for
a clustering in which in every
cluster, each group is (approximately) represented with the same fraction as in the
whole data set $V$.  The following lemma shows how to
incorporate this goal into
the RatioCut minimization problem
\eqref{ratiocut_problem} using a linear constraint on $H$.

\begin{lemma}[Fairness constraints as linear constraint on $H$]\label{lemma_most_gen_const}
For $s\in[h]$, let $f^{(s)}\in \{0,1\}^n$ be the group-membership vector of $V_s$, that is $f^{(s)}_i=1$ if $i\in V_s$ and $f^{(s)}_i=0$ otherwise. Let $V=C_1\dot{\cup}\ldots \dot{\cup}C_k$ be a clustering that is encoded as in \eqref{enc_clustering}.
We have,
for every $l\in[k]$,
 \begin{align*}
 \forall s\in[h-1]: \sum_{i=1}^n \left(f^{(s)}_i-\frac{|V_s|}{n}\right) H_{il}=0  ~~~\Leftrightarrow~~~\\
 ~~~~~~~~~~~~~~~~~~~~ \forall s\in[h]:\frac{|V_s\cap C_l|}{|C_l|}=\frac{|V_s|}{n}.
 \end{align*}
\end{lemma}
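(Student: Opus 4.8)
The plan is to evaluate the left-hand side sum directly using the explicit form \eqref{enc_clustering} of $H$, and then to reconcile the mismatch in index ranges (the constraint on the left runs over $s\in[h-1]$, whereas the proportionality condition on the right runs over $s\in[h]$) by exploiting the linear dependence among the group-membership vectors.

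First I would fix a cluster index $l\in[k]$ and a single group $s$, and substitute the encoding \eqref{enc_clustering} into the sum. Since $H_{il}=1/\sqrt{|C_l|}$ for $i\in C_l$ and $H_{il}=0$ otherwise, the sum collapses to $\frac{1}{\sqrt{|C_l|}}\sum_{i\in C_l}\left(f^{(s)}_i-|V_s|/n\right)$. Recognizing that $\sum_{i\in C_l}f^{(s)}_i=|V_s\cap C_l|$ and $\sum_{i\in C_l}|V_s|/n=|C_l|\cdot|V_s|/n$, the quantity becomes $\frac{1}{\sqrt{|C_l|}}\left(|V_s\cap C_l|-|C_l|\cdot|V_s|/n\right)$. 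As clusters are nonempty the prefactor $1/\sqrt{|C_l|}$ is strictly positive, so for each fixed $s$ this single equation vanishes if and only if $|V_s\cap C_l|/|C_l|=|V_s|/n$. This already settles the equivalence group-by-group.

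The one point that requires an actual observation rather than routine computation is why imposing the constraint only for $s\in[h-1]$ forces the proportionality condition for all $s\in[h]$, including $s=h$. The key is that every vertex belongs to exactly one group, so $\sum_{s=1}^h f^{(s)}=\mathbf{1}_n$ and $\sum_{s=1}^h|V_s|=n$, whence $\sum_{s=1}^h\left(f^{(s)}_i-|V_s|/n\right)=0$ for every $i\in[n]$. Summing the left-hand expression over all $h$ groups therefore gives $\sum_{i=1}^n\left(\sum_{s=1}^h\left(f^{(s)}_i-|V_s|/n\right)\right)H_{il}=0$ identically. Hence the $h$-th constraint equals the negative of the sum of the first $h-1$, so if those all vanish the $h$-th vanishes automatically, and the $h-1$ constraints are equivalent to all $h$ of them.

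Combining the two parts yields the claim: for a fixed $l$, the $h-1$ linear constraints on column $l$ of $H$ hold if and only if all $h$ of them do, which by the per-group calculation is equivalent to $|V_s\cap C_l|/|C_l|=|V_s|/n$ for every $s\in[h]$. I expect the entire argument to be a short and routine calculation; the only step requiring any thought is spotting the linear dependence $\sum_{s=1}^h f^{(s)}=\mathbf{1}_n$ that makes the $h$-th constraint redundant and accounts for the asymmetric index ranges in the statement.
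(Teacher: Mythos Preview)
Your proof is correct and follows essentially the same approach as the paper: the paper computes the same expression $\sum_{i=1}^n (f^{(s)}_i-|V_s|/n)H_{il}=\frac{|V_s\cap C_l|}{\sqrt{|C_l|}}-\frac{|V_s|\cdot|C_l|}{n\sqrt{|C_l|}}$ and then handles the index mismatch via the identity $|C_l|=\sum_{s=1}^h |V_s\cap C_l|$, which is just the proportionality-side version of the linear dependence you invoke on the constraint side.
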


\begin{proof}
This
simply
follows from
\begin{align*}
 \sum_{i=1}^n \left(f^{(s)}_i-\frac{|V_s|}{n}\right) H_{il}=\frac{|V_s\cap C_l|}{\sqrt{|C_l|}}-\frac{|V_s|\cdot|C_l|}{n\sqrt{|C_l|}}.
  \end{align*}
and  $|C_l|=\sum_{s=1}^h |V_s\cap C_l|$.
\end{proof}

Hence,
if we want to find a clustering that minimizes the RatioCut objective function and is as fair as possible, we have to solve
\begin{align}\label{ratiocut_problem_fair}
\begin{split}
 \min_{H\in\R^{n\times k}}\trace(H^T L H) ~~ \text{subject to~$H$ is of form \eqref{enc_clustering}}\\[-5pt]
 \text{and $F^TH=0_{(h-1)\times k}$},
\end{split}
\end{align}
where $F\in\R^{n\times (h-1)}$ is the matrix that has the vectors $f^{(s)}-(|V_s|/n)\cdot\mathbf{1}_n$, $s\in[h-1]$, as columns. In the same way as we have
relaxed \eqref{ratiocut_problem} to \eqref{relaxed_problem}, we
may
relax the minimization problem \eqref{ratiocut_problem_fair} to
\begin{align}\label{ratiocut_problem_fair_relaxed}
\begin{split}
 \min_{H\in\R^{n\times k}}\trace(H^T L H) ~~ \text{subject to~$H^TH=I_k$}\\[-5pt]
 \text{and $F^TH=0_{(h-1)\times k}$}.
\end{split}
\end{align}
Our proposed  approach to incorporate the fairness notion by \citet{fair_clustering_Nips2017} into
the SC framework
 consists of solving \eqref{ratiocut_problem_fair_relaxed} instead of \eqref{relaxed_problem}
 (and, as before,  applying $k$-means clustering to the rows of an optimal~$H$ in order to infer a clustering).
 Our approach is analogous to the numerous versions of constrained SC that try to incorporate
must-link
 constraints (``vertices A and B should end up in the same cluster'') by
 putting
 a linear constraint on~$H$
(e.g., \citealp{stella_journal_2004,kawale2013}; see Section~\ref{sec_related_work}).

%
%

\begin{algorithm}[t!]
   \caption{
Unnormalized SC with fairness constraints
   }\label{fair_SC_alg}
\begin{algorithmic}
   \STATE {\bfseries Input:} weighted adjacency matrix $W\in\R^{n\times n}$; $k\in\N$; group-membership vectors  $f^{(s)}\in \{0,1\}^n$, $s\in [h]$

\vspace{1mm}
   \STATE {\bfseries Output:} a clustering of $[n]$ into $k$ clusters

   \begin{itemize}[leftmargin=*]
   \setlength{\itemsep}{-2pt}
   \item compute
the Laplacian matrix $L=D-W$
\item Let
$F$
be a matrix with columns $f^{(s)}-\frac{|V_s|}{n}\cdot\mathbf{1}_n$, $s\in[h-1]$
\item compute a matrix $Z$  whose columns form an orthonormal basis of the nullspace of $F^T$
  \item compute the $k$ smallest (respecting multiplicities) eigenvalues of $Z^TLZ$ and the corresponding  orthonormal eigenvectors (written as columns of
$Y$)
  \item apply $k$-means clustering to the rows of $H=ZY$
   \end{itemize}
\end{algorithmic}
\end{algorithm}

Next, we describe a straightforward way to solve \eqref{ratiocut_problem_fair_relaxed}, which is also discussed by \citet{stella_journal_2004}.
It is easy to see that
$\rank(F)=\rank(F^T)=h-1$.
%
We need to assume that $k\leq n-h+1$ since otherwise \eqref{ratiocut_problem_fair_relaxed} does not have any solution.
Let $Z\in \R^{n\times(n-h+1)}$ be a matrix whose columns form an orthonormal basis of the nullspace of $F^T$.
We can substitute $H=ZY$ for  $Y\in \R^{(n-h+1)\times k}$, and then, using that $Z^TZ=I_{(n-h+1)}$, problem \eqref{ratiocut_problem_fair_relaxed} becomes
\begin{align}\label{ratiocut_problem_fair_relaxed_transformed}
 \min_{Y\in \R^{(n-h+1)\times k}}\trace(Y^TZ^T LZY) ~~ \text{subj. to~$Y^TY=I_k$}.
\end{align}
Similarly to problem \eqref{relaxed_problem}, a solution to \eqref{ratiocut_problem_fair_relaxed_transformed} is given by a matrix $Y$ that contains some
orthonormal eigenvectors corresponding to the $k$ smallest eigenvalues
(respecting multiplicities) of $Z^TLZ$ as columns.
We then set $H=ZY$.

This way of solving \eqref{ratiocut_problem_fair_relaxed}
gives rise to our ``fair'' version of unnormalized~SC
as stated in Algorithm~\ref{fair_SC_alg}.
Note that just as there is no guarantee on the RatioCut value of the output of Algorithm~\ref{SC_alg} or Algorithm~\ref{fair_SC_alg}
compared to the RatioCut value of an optimal clustering,
in general, there is also no guarantee on how fair the output of Algorithm~\ref{fair_SC_alg} is.
We still refer to Algorithm~\ref{fair_SC_alg} as our \emph{fair} version of unnormalized SC.
Similarly to how we proceeded here,
in Appendix~\ref{appendix_fair_SC_normalized}, we
incorporate the fairness constraints into normalized SC
 and state
 our fair version of normalized SC
 as Algorithm~\ref{fair_SC_alg_normalized}.

 One might wonder why we do not simply run standard SC on each group $V_s$ separately in order to derive a fair version. In Appendix~\ref{appendix_baseline_strategy} we show why such an idea does not work.

 \vspace{1pt}
 \textbf{Computational complexity~~}
We provide a complete discussion
of the complexity of our algorithms
in Appendix~\ref{appendix_complexity}.
With the implementations as stated, the complexity of both
Algorithm~\ref{fair_SC_alg} and Algorithm~\ref{fair_SC_alg_normalized} is
$\mathcal{O}(n^3)$ regarding time and $\mathcal{O}(n^2)$ regarding space, which is the same as the worst-case complexity of standard SC when the number of clusters can be arbitrary. One could apply one of the techniques suggested in the existing literature
on constrained spectral clustering to speed up computation  (e.g., \citealp{stella_journal_2004}, or \citealp{xu2009}; see Section~\ref{sec_related_work}), but most of these techniques only work for $k=2$ clusters.

\section{Analysis on  Variant of
the
Stochastic Block Model}\label{section_SBmodel}

In this section, our goal is to model data sets that have two or
more meaningful ground-truth clusterings, of which only one is fair,
and show that our algorithms recover the fair ground-truth
clustering. If there was only one meaningful ground-truth clustering
and this clustering was fair,
then any clustering algorithm that is able
to recover the ground-truth clustering (e.g., standard SC) would be a fair algorithm.
To this end, we define a variant of the famous stochastic block model
\citep[SBM; ][]{Holland1983}. The
SBM is a random graph model that has been widely used to study the performance of
clustering algorithms, including standard SC (see
Section~\ref{sec_related_work} for related work). In the traditional
SBM there is a ground-truth clustering of the vertex set $V=[n]$ into
$k$ clusters, and in a random graph generated from the model, two
vertices $i$ and $j$ are connected with a probability that only
depends on which clusters $i$ and $j$ belong to.

In our variant of the SBM we assume that $V=[n]$ comprises $h$ groups $V=V_1\dot{\cup}\ldots \dot{\cup}V_h$ and is partitioned
into $k$ ground-truth clusters  $V=C_1\dot{\cup}\ldots \dot{\cup}C_k$ such that
$|V_s\cap C_l|/|C_l|=\eta_s$, $s\in[h],l\in[k]$,
for some $\eta_1,\ldots,\eta_h\in (0,1)$ with $\sum_{s=1}^h\eta_s=1$.
Hence, in every cluster each group is represented with the same fraction as in the whole data set $V$ and this ground-truth clustering is fair.
Now we define a random graph on $V$ by connecting two vertices $i$ and $j$
with a certain probability~$\Pr(i,j)$
that
only depends on whether $i$ and $j$ are in the same cluster (or not) and on whether $i$ and $j$ are in the same group (or not). More specifically, we have
\begin{align}\label{prob_stoch_mod}
\begin{split}
&\Pr(i,j)=\\
&\begin{cases}
  a, & \text{$i$ and $j$  in  same cluster and in same group}, \\
b, & \text{$i$ and $j$ not in same cluster, but in same group}, \\
c, & \text{$i$ and $j$ in same cluster, but not in same group}, \\
d, & \text{$i$ and $j$ not in same cluster and not in same group},
    \end{cases}
\end{split}
    \end{align}
and assume that $a>b>c>d$.
As in the
ordinary
SBM, connecting $i$ and $j$ is independent of connecting $i'$ and $j'$ for $\{i,j\}\neq\{i',j'\}$. Every edge is assigned a weight of~$+1$,
that is no two connected vertices are considered more similar to each other than any two other connected vertices.

\begin{figure}[t]
\centering
\begin{overpic}[scale=0.25]{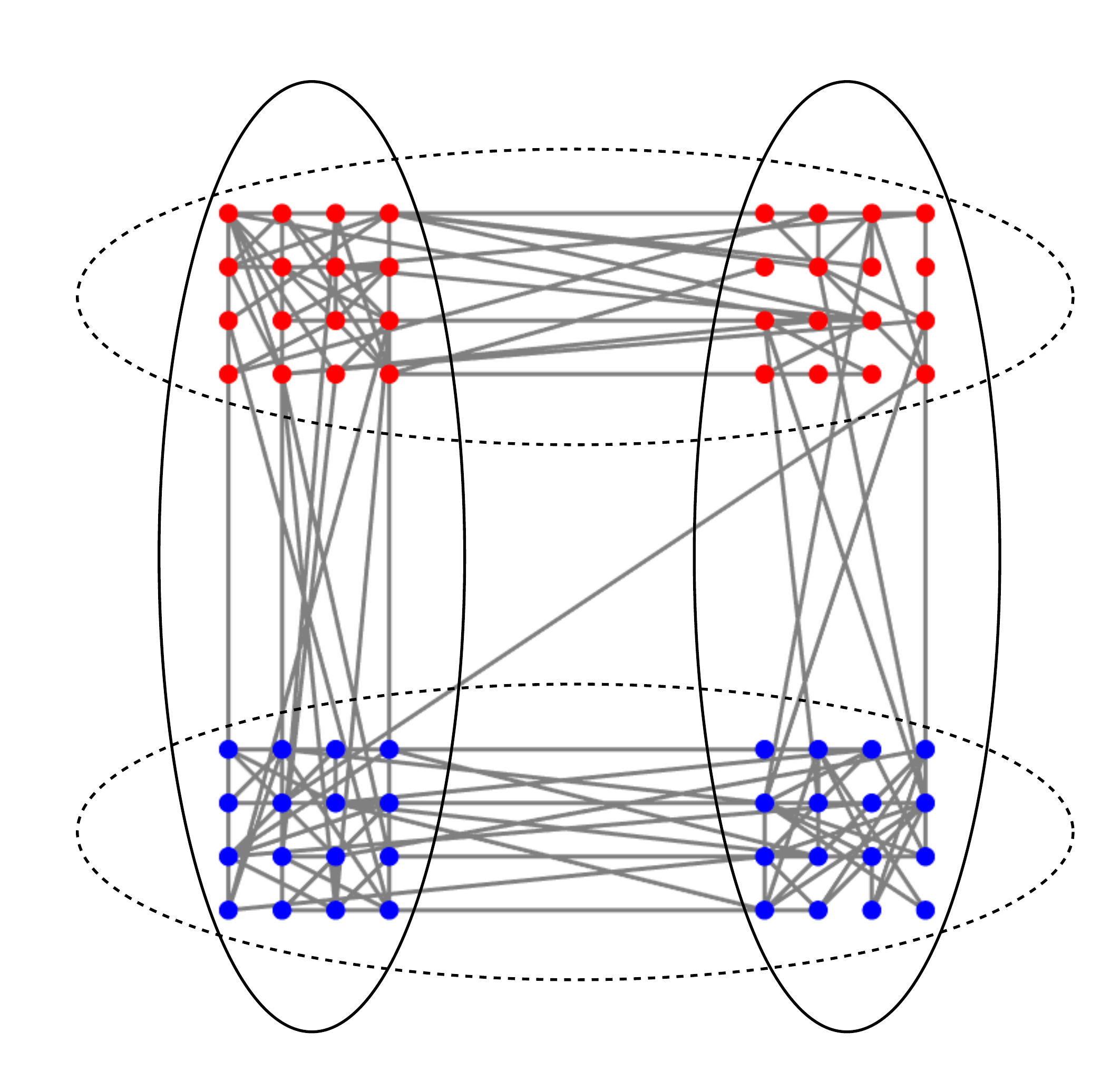}
\put(48,88){$V_1$}
\put(92.5,45){$C_2$}
\put(4.5,45){$C_1$}
\put(48,2){$V_2$}
\end{overpic}
\caption{Example of a graph generated from our variant of the
SBM.
There are two meaningful ground-truth clusterings into two clusters: $V=C_1\dot{\cup}C_2$ and $V=V_1\dot{\cup}V_2$.
Only the first one is fair.}\label{example_graph}
\end{figure}

An example of a graph generated from our model (with $h=k=2$ and $\eta_1=\eta_2=1/2$) can be seen in Figure~\ref{example_graph}.
We can see that there are two meaningful ground-truth clusterings into two clusters: $V=C_1\dot{\cup}C_2$ and $V=V_1\dot{\cup}V_2$. Among these two clusterings,
only $V=C_1\dot{\cup}C_2$ is fair since $\bal(C_i)=1$ while $\bal(V_i)=0$ for
$i\in\{1,2\}$.
Note that
the clustering $V=V_1\dot{\cup}V_2$ has a smaller RatioCut value than
 $V=C_1\dot{\cup}C_2$ because there are more edges between $V_s\cap C_1$ and $V_s\cap C_2$ ($s=1$ or $s=2$) than between $V_1\cap C_l$ and $V_2\cap C_l$ ($l=1$ or $l=2$).
As we will see in the experiments in Section~\ref{sec_experiments} (and can also be seen from the proof of the following Theorem~\ref{theorem_SB_model}), for such a graph, standard SC is very likely to
return the unfair clustering $V=V_1\dot{\cup}V_2$ as output. In contrast, our fair versions of SC return the fair clustering $V=C_1\dot{\cup}C_2$ with high probability:

\begin{theorem}[SC with fairness constraints succeeds on
variant of
stochastic block model]\label{theorem_SB_model}
Let $V=[n]$ comprise $h=h(n)$ groups $V=V_1\dot{\cup}\ldots \dot{\cup}V_h$ and be partitioned
into $k=k(n)$ ground-truth clusters  $V=C_1\dot{\cup}\ldots \dot{\cup}C_k$ such that for all $s\in[h]$ and $l\in[k]$
\begin{align}\label{bal_assum_theorem}
|V_s|=\frac{n}{h},\quad |C_l|=\frac{n}{k},\quad \frac{|V_s\cap C_l|}{|C_l|}=\frac{1}{h}.
\end{align}
Let $G$ be a random graph constructed according to our variant of the stochastic block model~\eqref{prob_stoch_mod} with probabilities
$a=a(n)$, $b=b(n)$, $c=c(n)$, $d=d(n)$ satisfying
$a>b>c>d$ and $a\geq C\ln n/n$
for some $C>0$.

Assume that we run Algorithm \ref{fair_SC_alg} or Algorithm \ref{fair_SC_alg_normalized} (stated in Appendix~\ref{appendix_fair_SC_normalized}) on $G$, where we apply a $(1+M)$-approximation algorithm
to the $k$-means problem
encountered
in the last step of Algorithm \ref{fair_SC_alg} or Algorithm \ref{fair_SC_alg_normalized},
for some $M>0$. Then, for every $r>0$, there exist
constants
$\widehat{C}_i=\widehat{C}_i(C,r)$ and $\widetilde{C}_i=\widetilde{C}_i(C,r)$, $i\in\{1,2\}$, such that the following is true:

\begin{itemize}[leftmargin=*]
\item \textbf{
Unnormalized SC with fairness constraints
}

If
\begin{align}\label{condition_on_probabilities}
\frac{a \cdot k^3 \cdot \ln n}{(c-d)^2 \cdot n}< \frac{\widehat{C}_1}{1+M},
\end{align}
then with probability at least $1-n^{-r}$, the clustering returned by Algorithm~\ref{fair_SC_alg} misclassifies at most
\begin{align}\label{quant_guaran_theo}
\widetilde{C}_1\cdot(1+M)\cdot\frac{a\cdot k^2\cdot \ln n}{(c-d)^2}
\end{align}
many vertices.

\item \textbf{
Normalized SC with fairness constraints}

Let $\lambda_1=\frac{n}{kh}(a+(h-1)c)+\frac{n(k-1)}{kh}(b+(h-1)d)$.
If
\begin{align}\label{condition_on_probabilities_norm}
\frac{\sqrt{k\cdot a\cdot n\ln n}}{\lambda_1-a}< \frac{\widehat{C}_2}{1+M}~~\text{and}~~ \frac{a \cdot k^4 \cdot \ln n}{(c-d)^2 \cdot n}< \frac{\widehat{C}_2}{1+M},
\end{align}
then
with probability at least $1-n^{-r}$, the clustering returned by Algorithm~\ref{fair_SC_alg_normalized} misclassifies at most
\begin{align}\label{quant_guaran_theo_norm}
\widetilde{C}_2\cdot(1+M)\cdot\left[\frac{a\cdot k^3\cdot \ln n}{(c-d)^2} +\frac{a\cdot n^2\cdot \ln n}{(\lambda_1-a)^2}\right]
\end{align}
many vertices.

\end{itemize}

\end{theorem}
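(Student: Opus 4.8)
The plan is to follow the standard three-stage template for analysing spectral clustering on a random graph model --- population spectral analysis, matrix concentration, and eigenspace perturbation followed by a $k$-means rounding argument --- but to carry it out \emph{inside} the subspace $\mathrm{range}(Z)=\mathrm{null}(F^T)$ onto which Algorithms~\ref{fair_SC_alg} and \ref{fair_SC_alg_normalized} project. The key structural observation is that, because the model is symmetric across the $hk$ cells $V_s\cap C_l$ (each of size $m=n/(hk)$ under \eqref{bal_assum_theorem}), the expected adjacency matrix $\mathcal{W}=\Ex[W]$ has a clean tensor structure: up to the spurious diagonal term $aI$ (recall $W_{ii}=0$), the cell-level connection matrix is $a\,I_h\otimes I_k+b\,I_h\otimes(J_k-I_k)+c\,(J_h-I_h)\otimes I_k+d\,(J_h-I_h)\otimes(J_k-I_k)$, and $\mathcal{W}$ acts as this matrix tensored with the all-ones block $J_m$.

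First I would diagonalise $\mathcal{W}$. Its eigenvectors factor into a group part and a cluster part, giving four cell-constant eigenspaces: the all-ones direction with eigenvalue $m\mu_1=\lambda_1$ (its zero-diagonal version has eigenvalue $\lambda_1-a$, which is the common expected degree); the \emph{cluster-structure} space (constant across groups, summing to zero across clusters) of dimension $k-1$ with eigenvalue $m\mu_2$, $\mu_2=a-b+(h-1)(c-d)$; the \emph{group-structure} space of dimension $h-1$; and a mixed space with eigenvalue $m\mu_4=m(a-b-c+d)$. I would then check that the columns $f^{(s)}-\tfrac{|V_s|}{n}\mathbf{1}_n$ of $F$ span exactly the group-structure space, so that $\mathrm{range}(Z)$ contains the all-ones and cluster-structure spaces but is orthogonal to the group space (this is precisely why the \emph{unprojected} problem favours the unfair group clustering, whereas the projection removes it). Consequently the $k$ eigenvectors realising the smallest eigenvalues of the projected population Laplacian $Z^T\mathcal{L}Z=(\lambda_1-a)I-Z^T\mathcal{W}Z$ are the all-ones vector together with the cluster-structure eigenvectors; these are piecewise constant on $C_1,\dots,C_k$, so their row representation encodes the fair clustering exactly, with rows of distinct clusters separated by $\Theta(\sqrt{k/n})$. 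The binding population eigengap is to the mixed space, $m(\mu_2-\mu_4)=mh(c-d)=\tfrac{n}{k}(c-d)$, which accounts for the $(c-d)^2$ in the denominators of \eqref{condition_on_probabilities}--\eqref{quant_guaran_theo_norm}.

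Next I would establish concentration. Since $F$, hence $Z$, depends only on the deterministic group memberships, $Z$ is a fixed matrix with orthonormal columns and $\|Z^T(L-\mathcal{L})Z\|\le\|L-\mathcal{L}\|$, so I only need concentration of the unprojected matrices. A spectral-norm bound for the centred adjacency matrix of an inhomogeneous random graph (matrix-Bernstein / Feige--Ofek type), valid in the regime $a\ge C\ln n/n$, gives $\|W-\mathcal{W}\|=O(\sqrt{an\ln n})$ with probability at least $1-n^{-r}$, the constant absorbing the tail exponent $r$; for the unnormalised case I would additionally control $\|D-\mathcal{D}\|$ by a Bernstein bound on individual degrees. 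Feeding $\|L-\mathcal{L}\|^2=O(an\ln n)$ and the gap $\tfrac{n}{k}(c-d)$ into the Davis--Kahan $\sin\Theta$ theorem bounds the Frobenius distance $\|H-H^\ast Q\|_F$ between the empirical and population invariant subspaces (up to an orthogonal rotation $Q$) by $O\!\bigl(\sqrt{k}\,\sqrt{an\ln n}\,/\,(\tfrac{n}{k}(c-d))\bigr)$. A standard rounding argument then shows that a $(1+M)$-approximate $k$-means solution on the rows of $H=ZY$ misclassifies at most $O\bigl((1+M)\,(\text{separation})^{-2}\,\|H-H^\ast Q\|_F^2\bigr)=O\bigl((1+M)\,ak^2\ln n/(c-d)^2\bigr)$ vertices, matching \eqref{quant_guaran_theo}; condition \eqref{condition_on_probabilities} is exactly what forces this count below a small fraction of the cluster size $n/k$, as the rounding step requires.

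For the normalised versions the template is identical but applied to $D^{-1/2}WD^{-1/2}$ (resp.\ $D^{-1}W$), whose population eigenvalue scale is the zero-diagonal top eigenvalue $\lambda_1-a$. The extra ingredient is bounding the error from replacing $\mathcal{D}^{-1/2}$ by the random $D^{-1/2}$; controlling $\|D^{-1/2}WD^{-1/2}-\mathcal{D}^{-1/2}\mathcal{W}\mathcal{D}^{-1/2}\|$ produces a second error term of order $\sqrt{an\ln n}/(\lambda_1-a)$, which is the origin of the additional requirement in \eqref{condition_on_probabilities_norm} and of the summand $an^2\ln n/(\lambda_1-a)^2$ in \eqref{quant_guaran_theo_norm}. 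I expect the main obstacle to be twofold: obtaining the spectral-norm concentration with explicit dependence on $r$ and on the possibly growing $h=h(n)$, $k=k(n)$; and --- more delicately --- pushing the degree-normalisation perturbation through the normalised case, where the projection, the normalisation, and the eigengap interact and one must keep the three error contributions separate in order to recover the precise form of \eqref{condition_on_probabilities_norm}--\eqref{quant_guaran_theo_norm}.
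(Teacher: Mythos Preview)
Your proposal is correct and follows essentially the same route as the paper: population spectral analysis of $Z^T\mathcal{L}Z$ identifying the cluster-structure eigenspace and the gap of order $\tfrac{n}{k}(c-d)$, concentration of $\|L-\mathcal{L}\|$ via separate bounds on $\|W-\mathcal{W}\|$ and $\|D-\mathcal{D}\|$, Davis--Kahan, and the Lei--Rinaldo $k$-means rounding lemma; the tensor-product framing of $\mathcal{W}$ is a clean repackaging of the paper's block-matrix computation. Two small remarks: the binding eigengap in the projected Laplacian is actually $\min\{\tfrac{n}{k}(c-d),\,\lambda_{h+1}\}$, since the null space of $\widetilde{\mathcal{W}}$ (eigenvalue $\lambda_1$ of $Z^T\mathcal{L}Z$) competes with the mixed space --- both are $\Theta(\tfrac{n}{k}(c-d))$ so your bound survives; and in the normalised case the algorithm works with $Q^{-1}Z^TLZQ^{-1}$ for $Q=(Z^TDZ)^{1/2}$ rather than $Z^TD^{-1/2}LD^{-1/2}Z$, though since $\mathcal{D}=(\lambda_1-a)I_n$ the population versions coincide and the perturbation analysis you sketch carries over with only notational changes.
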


We
 make
several
remarks on Theorem~\ref{theorem_SB_model}:
\begin{enumerate}[wide, labelwidth=!, labelindent=0pt]
\setlength{\itemsep}{-1pt}
\item By
``misclassifies
at most
$x$ many
vertices'' we mean that, considering the index $l$ of the cluster $C_l$ that a vertex belongs to as the vertex's class label, there exists a permutation
of cluster indices $1,\ldots, k$ such that up to this permutation the clustering returned by our algorithm predicts the correct class label for all but $x$ many vertices.

 \item The condition \eqref{condition_on_probabilities} is satisfied, for $n$ sufficiently large and assuming that $M\in \mathcal{O}(\ln k)$ (see the next remark),  in various regimes: assuming that $k\in \mathcal{O}(n^s)$ for some  $s\in[0,1/3)$,
 it is satisfied
 in the dense regime
  $a,b,c,d\sim \const$,
 but also in the sparse regime
$a,b,c,d\sim \const\cdot (\ln n/n)^q$
 for some $q\in[0,1-3s)$.

 The same is true for condition \eqref{condition_on_probabilities_norm}, but here we require $s\in[0,1/4)$ and $q\in[0,1-4s)$. We suspect that condition \eqref{condition_on_probabilities_norm}, with respect to $k$, is stronger than necessary. We also suspect that the error bound in \eqref{quant_guaran_theo_norm} is not
 tight
  with respect to $k$.
Note that in \eqref{quant_guaran_theo_norm},
both in the dense and in the sparse regime,
the term $a\cdot k^3\cdot \ln n/(c-d)^2$ is dominating over the term $a\cdot n^2\cdot \ln n/(\lambda_1-a)^2$ by the factor $k^3$.

 Both in the dense and in the sparse regime, under these assumptions on $s$, $q$ and $M$, the error bounds~\eqref{quant_guaran_theo} and \eqref{quant_guaran_theo_norm} divided by $n$, that is the fraction of misclassified vertices, tends to zero as $n$ goes to infinity. Using the terminology prevalent in the literature on community detection in
SBMs
 (see Section~\ref{sec_related_work}),
  we may say that our algorithms are \emph{weakly consistent} or solve the \emph{almost exact recovery} problem.

 \item There are efficient approximation algorithms for the $k$-means problem in $\R^l$.
 An algorithm by
\citet{ahmadian_kmeans_2017}
 achieves a constant approximation factor and has running time polynomial in $n$, $k$ and $l$, where $n$ is the number of data points.
 There is also
the famous
$(1+\varepsilon)$-approximation algorithm
 by \citet{kumar_kmeans_2004}
 with running time linear in $n$ and $l$, but exponential in $k$ and $1/\varepsilon$. The algorithm most widely used in practice (e.g., as default method in \textsc{Matlab}) is $k$-means++, which is a
 randomized
 $\mathcal{O}(\ln k)$-approximation algorithm \citep{kmeans_plusplus}.

 \item We
 show empirically in
  Section~\ref{sec_experiments}
 that our algorithms are  also able to
 find
  the fair ground-truth
 clustering in a graph constructed according to our variant of the SBM when
 \eqref{bal_assum_theorem}
 is not satisfied, that is when~the clusters are of different size
 or
 the balance of the fair ground-truth  clustering~is~smaller than $1$
 (i.e.,
 $\eta_s\neq 1/h$ for some $s\in[h]$).
 %
 For Algorithm~\ref{fair_SC_alg_normalized}, the violation of
 \eqref{bal_assum_theorem}
 can be more severe than for Algorithm~\ref{fair_SC_alg}. In general, we observe  Algorithm~\ref{fair_SC_alg_normalized}~to
 outperform Algorithm~\ref{fair_SC_alg}.
 This is in accordance with standard SC, for which normalized SC has been observed to outperform unnormalized SC
 \citep{Luxburg_tutorial,sarkar2015}.
\end{enumerate}

The proof of Theorem~\ref{theorem_SB_model} can be found in Appendix~\ref{appendix_proofs}.
It consists of two technical challenges (described here only for the unnormalized case).
The first one is to compute  the eigenvalues and eigenvectors of the matrix $Z^T\mathcal{L}Z$, where $\mathcal{L}$ is the expected Laplacian matrix of the random graph~$G$ and $Z$ is the matrix computed
in
Algorithm~\ref{fair_SC_alg}. Let $\mathcal{Y}$ be a matrix containing some orthonormal eigenvectors corresponding to the $k$ smallest eigenvalues of $Z^T\mathcal{L}Z$ as columns and $Y$ be a matrix containing orthonormal eigenvectors corresponding to the $k$ smallest eigenvalues of $Z^TLZ$, where $L$ is the
observed
Laplacian matrix of $G$.
The second challenge is to  prove that with high probability,
$ZY$ is close to $Z\mathcal{Y}$.
For doing so we make use of the famous  Davis-Kahan sin$\Theta$ Theorem \citep{davis_kahan_1970}.
After that,
we can use existing results about  $k$-means clustering of perturbed eigenvectors \citep{lei2015} to derive the theorem.

%
%

\section{Related Work}\label{sec_related_work}

\textbf{Spectral clustering and stochastic block model~~}
%
SC
is one of the most prominent clustering techniques, with a long history and an abundance of related papers. See \citet{Luxburg_tutorial} or \citet{Nascimento_2011} for  general introductions and an overview of the literature. There are numerous papers
on constrained SC, where the goal is to incorporate prior knowledge about the target clustering (usually in the form of
must-link and / or cannot-link constraints) into the SC framework \citep[e.g., ][]{stella_nips2001,stella_journal_2004,joachims_icml2003,lu2008,xu2009,wang2010,eriksson2011,maji2011,kawale2013,khoreva14gcpr,wang2014,cucuringu2016}.  Most of these papers are motivated by the use of SC in image or video segmentation.
Closely related to our work are the papers by
\citet{stella_journal_2004,xu2009,eriksson2011,kawale2013},
which incorporate the prior knowledge by
imposing a linear constraint in the RatioCut or NCut optimization problem analogously to how we derived our fair versions of SC. These papers provide efficient algorithms to solve the resulting optimization problems. However, the iterative algorithms by \citet{xu2009,eriksson2011,kawale2013} only work for $k=2$ clusters. The method by  \citet{stella_journal_2004} works for arbitrary $k$ and could be used to speed up the computation of a solution of \eqref{ratiocut_problem_fair_relaxed} or \eqref{relaxed_problem_normalized_fair} compared to our straightforward way as implemented by Algorithm~\ref{fair_SC_alg} and Algorithm~\ref{fair_SC_alg_normalized}, respectively, but
requires to modify the eigensolver in use.

The stochastic block model \citep[SBM; ][]{Holland1983} is the canonical model to study the performance of clustering algorithms.
There
exist  several variants of the
original
model such as the degree-corrected SBM or the labeled SBM.
For a recent survey see \citet{abbe2018}.
In the labeled SBM, vertices can carry a label that is correlated with the ground-truth clustering.  This is quite the opposite of our model, in which the group-membership  information is ``orthogonal'' to the
ground-truth clustering.
Several papers show the consistency (i.e., the capability to recover the ground-truth clustering) of different versions of SC on the SBM or the degree-corrected SBM
under different assumptions \citep{rohe2011,fishkind2013,qin2013,lei2015,joseph2016,su_arxiv_2017}.
%
%
%
%
%
%
%
%
%
For example, \citet{rohe2011} show consistency of normalized SC assuming that the minimum expected vertex degree is in $\Omega(n/\sqrt{\log n})$,
while \citet{lei2015}  show that SC based on the adjacency matrix is consistent requiring only that the maximum expected degree is in $\Omega(\sqrt{\log n})$.
Note that these
papers also make assumptions on the
eigenvalues
of the expected Laplacian or adjacency matrix
while all assumptions and guarantees stated in our Theorem~\ref{theorem_SB_model} directly  depend on   the
connection probabilities $a,b,c,d$ of our model.
We are not aware of any work providing consistency results for constrained SC methods as we do in this paper.

\vspace{1pt}
\textbf{Fairness~~}
By now, there is a huge body of work on fairness in machine learning. For a recent
paper providing
an
overview
of the literature on fair classification
see \citet{donini2018}.
 Our paper adds to the literature on fair methods
 for unsupervised learning tasks
\citep{fair_clustering_Nips2017,celis2018,celis_fair_ranking,samira2018,sohler_kmeans}.
Note that all these papers assume
to know
which demographic group a data point belongs to just as we do.
We discuss the pieces of work
 most
closely
 related to our~paper.

\citet{fair_clustering_Nips2017} proposed
the notion of fairness for clustering underlying our paper. It is based on the fairness notion of disparate impact
\citep{feldman2015} and the $p\%$-rule \citep{zafar2017}, respectively, which essentially say that the output of a machine learning algorithm should be independent of a sensitive attribute.
In their paper, \citeauthor{fair_clustering_Nips2017} focus on $k$-median and
$k$-center clustering. For the case of a binary sensitive attribute, that is there are only two demographic groups, they provide approximation algorithms for the problems of finding a clustering with
minimum $k$-median / $k$-center cost under the constraint that all clusters have some
prespecified level of balance.
Subsequently, \citet{roesner2018} provide an approximation algorithm for such a fair $k$-center problem
with
multiple  groups.
\citet{sohler_kmeans} build upon the fairness notion and techniques of \citeauthor{fair_clustering_Nips2017} and devise an approximation algorithm for the fair $k$-means problem, assuming that there are only two  groups  of the same size.

\section{Experiments}\label{sec_experiments}

In this section, we present a number of experiments. We first study our fair versions of spectral clustering, Algorithm~\ref{fair_SC_alg} and Algorithm~\ref{fair_SC_alg_normalized}, on synthetic data generated according to our variant of the
SBM
and compare our algorithms to standard SC. We also study how robust our algorithms are with respect to a certain perturbation of our model. We then 
compare our algorithms to standard SC on real network~data. We implemented all algorithms in \textsc{Matlab}\footnote{The code
is available on
\url{https://github.com/matthklein/fair_spectral_clustering}.}. We used the built-in function for $k$-means clustering with all parameters set to their default values except for the number of replicates, which we set to 10. In the following, all plots show  average results obtained from
running an experiment for 100 times.

\newcommand{\sizeA}{0.20}
\newcommand{\sizeB}{4mm}

\begin{figure*}[t]
\centering{
\includegraphics[scale=\sizeA]{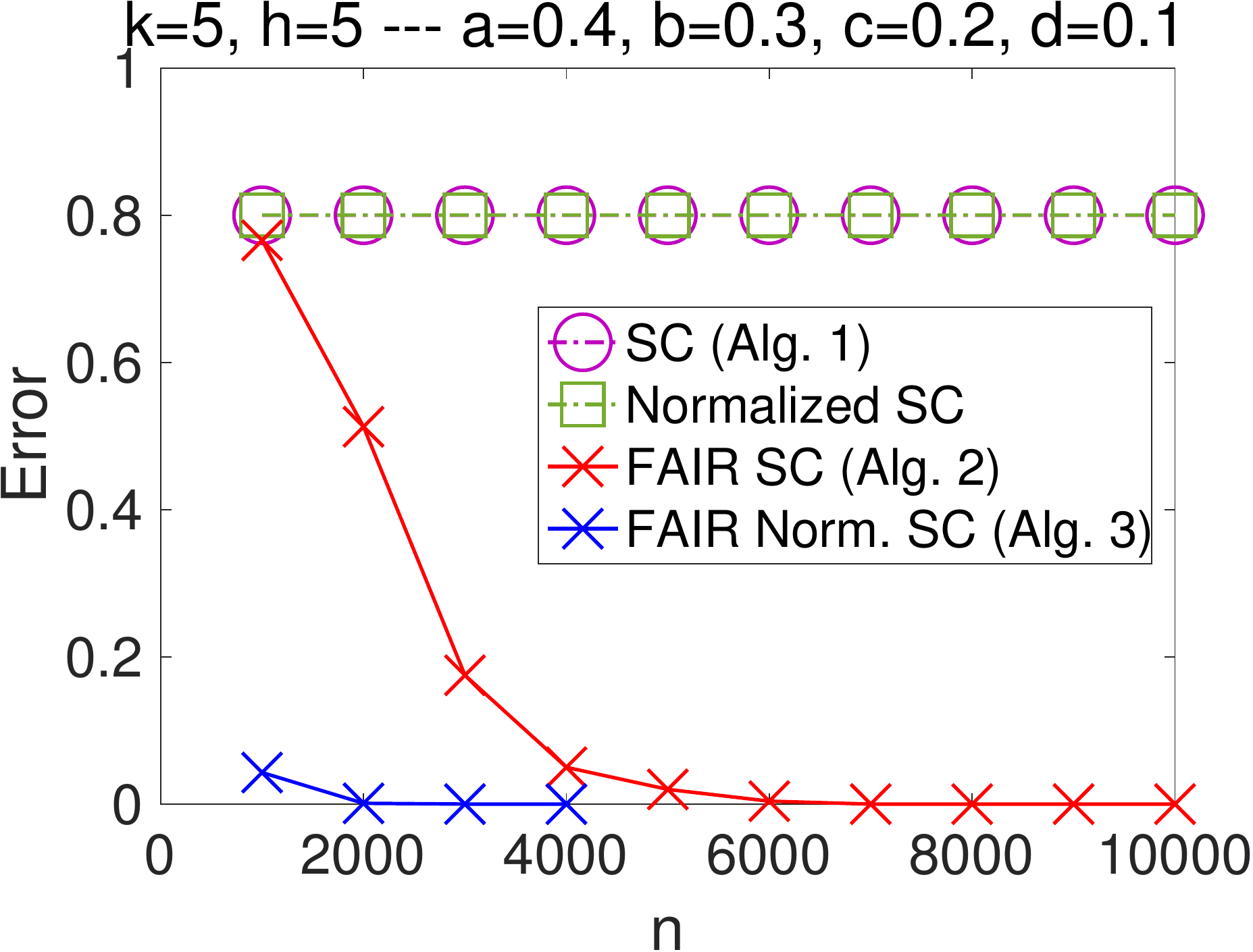}
\hspace{\sizeB}
\includegraphics[scale=\sizeA]{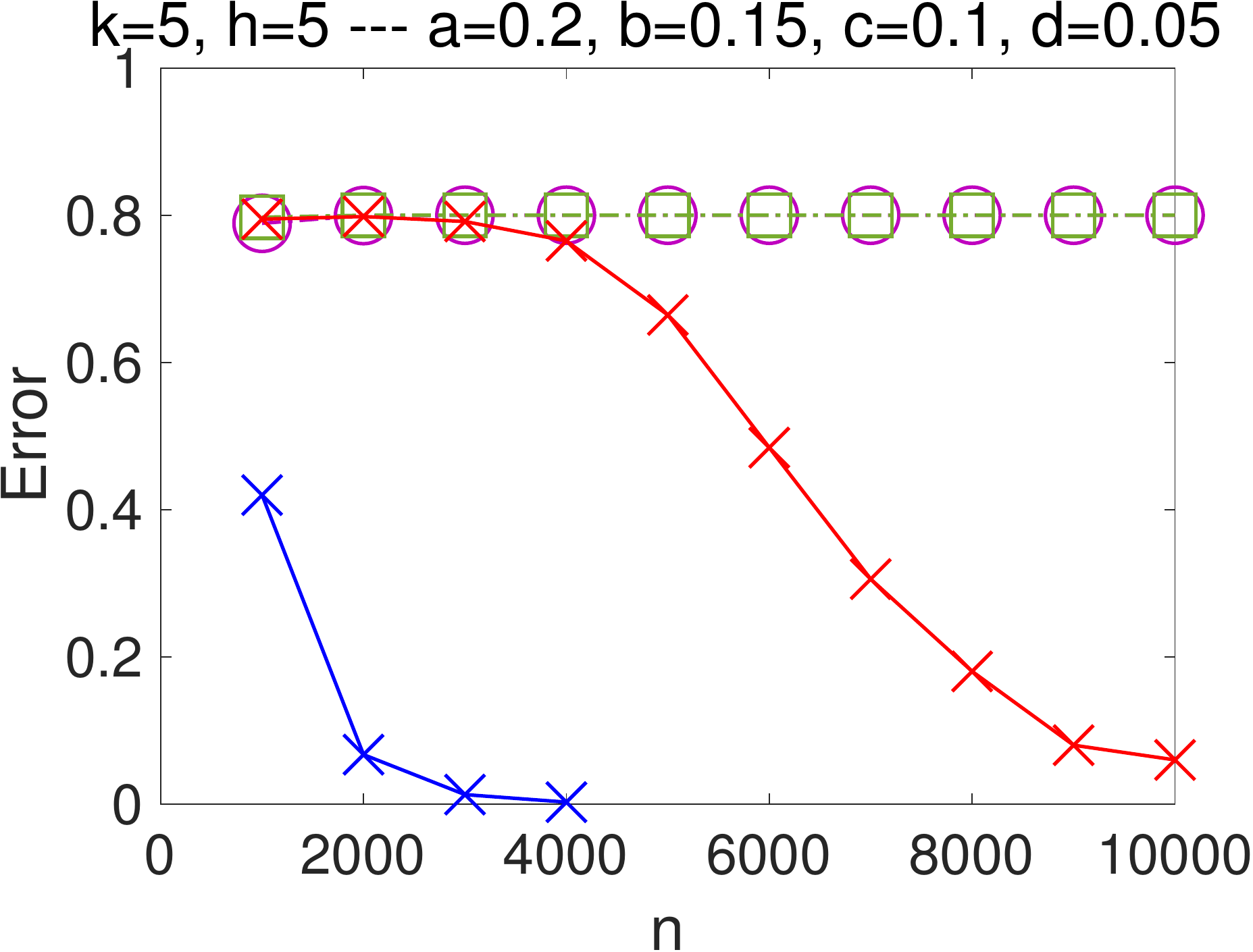}
\hspace{\sizeB}
\includegraphics[scale=\sizeA]{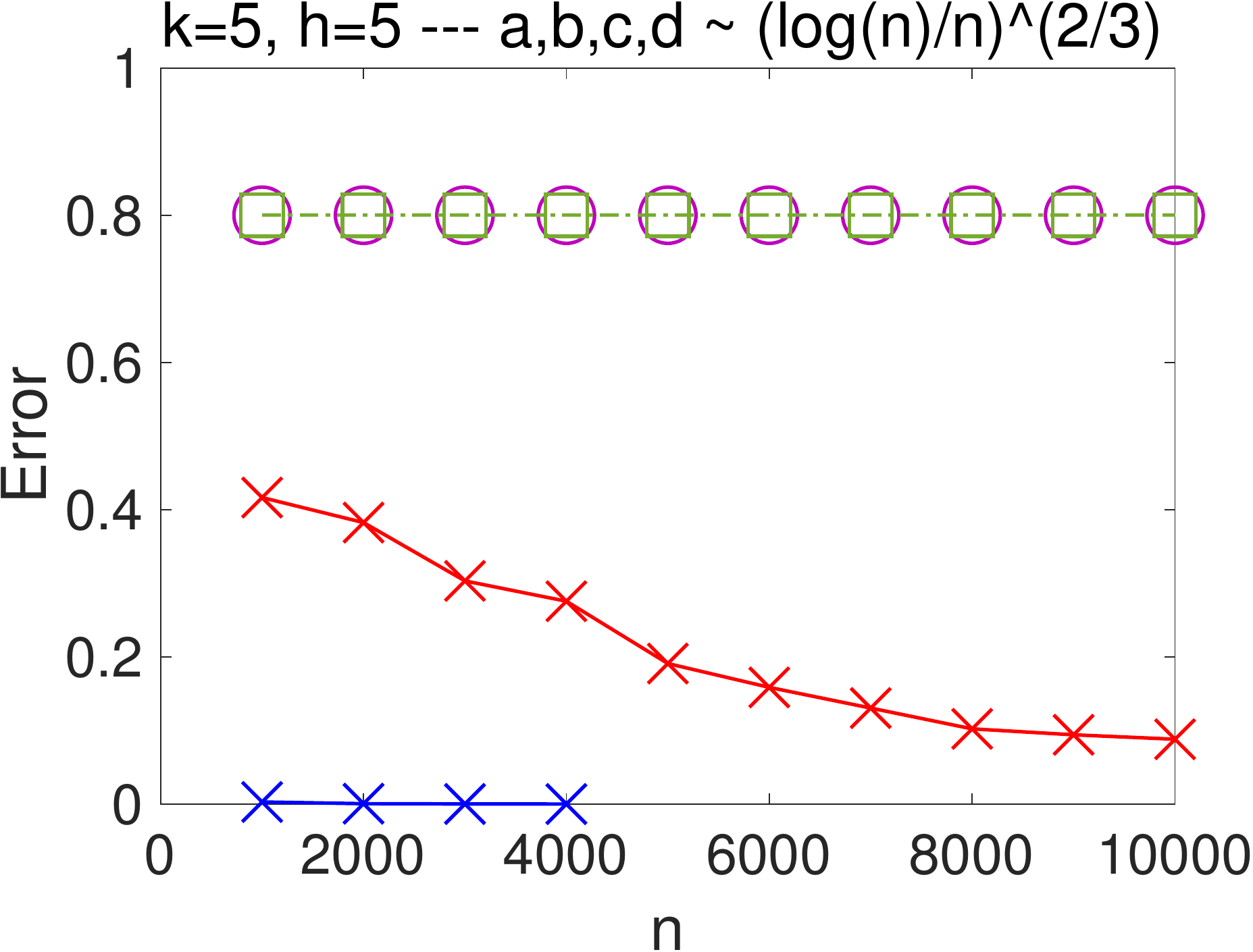}
\hspace{\sizeB}
\includegraphics[scale=\sizeA]{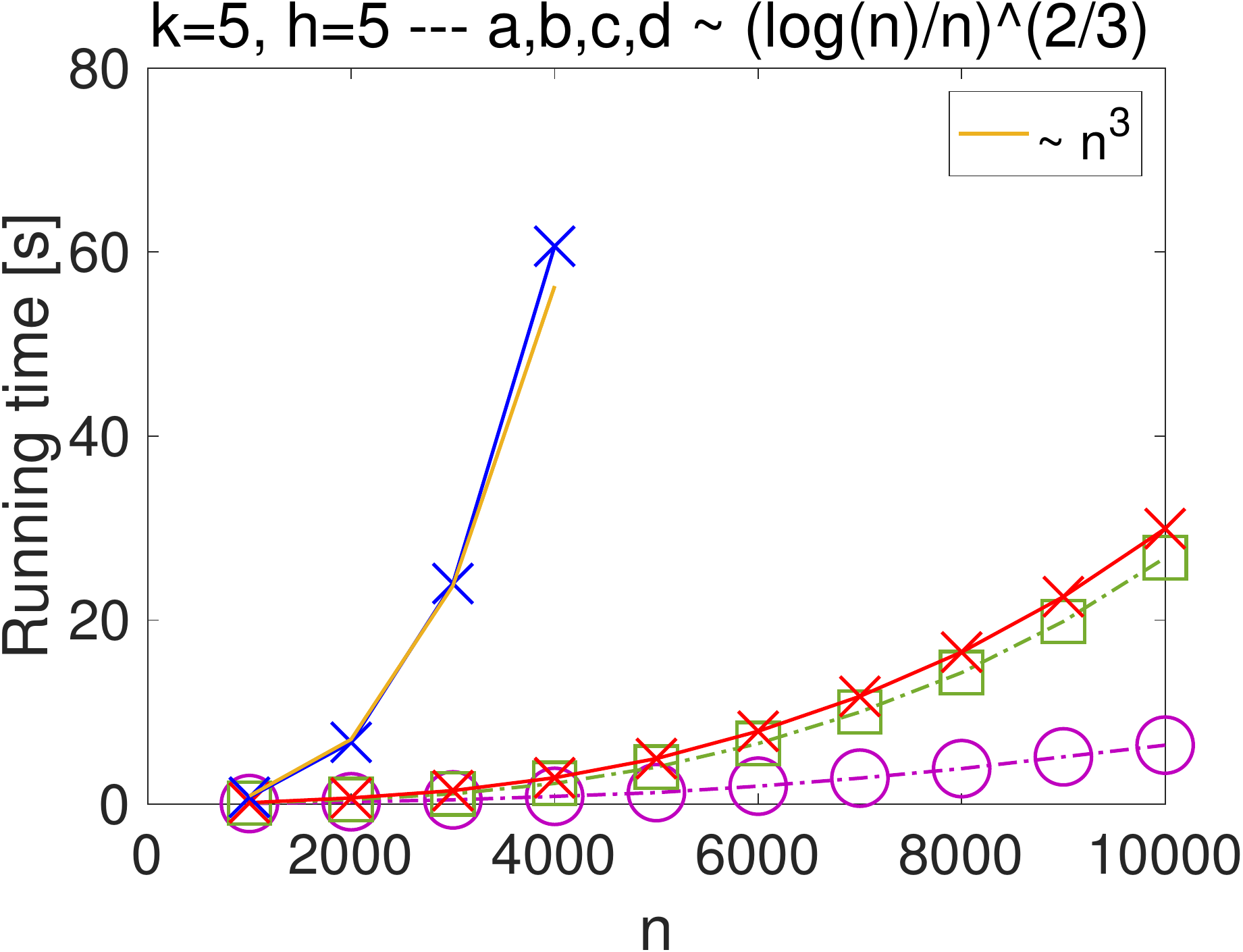}

\vspace{2mm}
\includegraphics[scale=\sizeA]{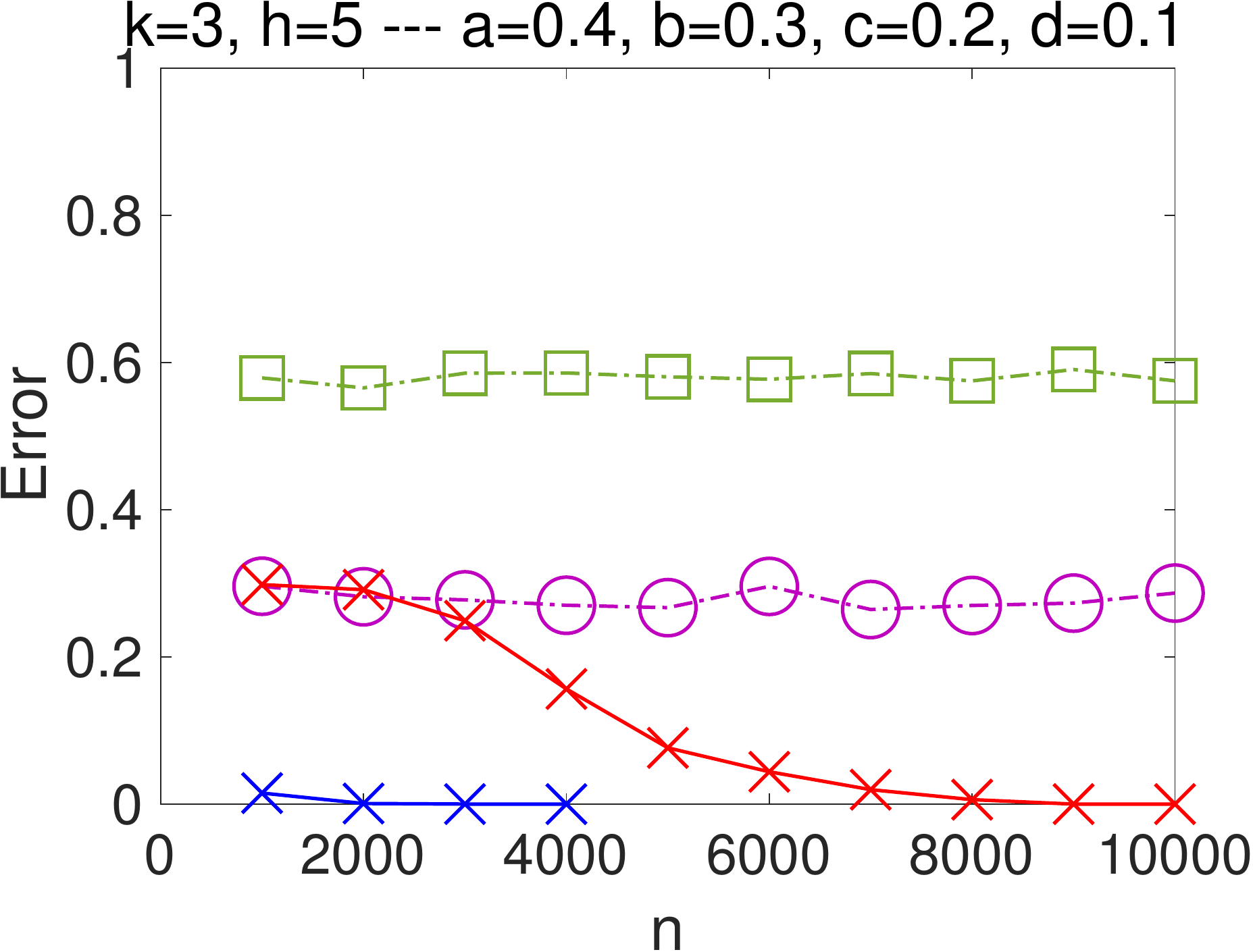}
\hspace{\sizeB}
\includegraphics[scale=\sizeA]{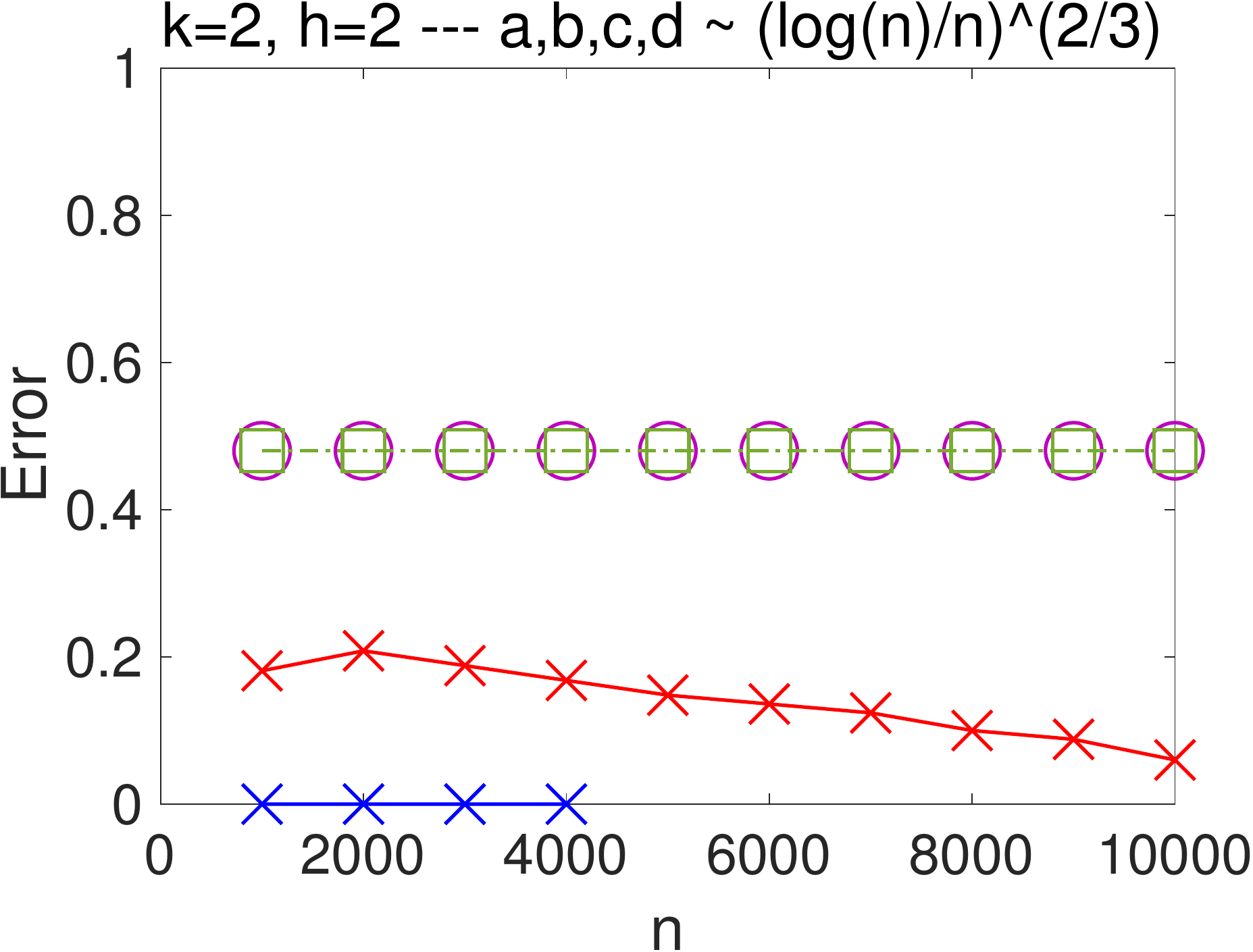}
\hspace{\sizeB}
\includegraphics[scale=\sizeA]{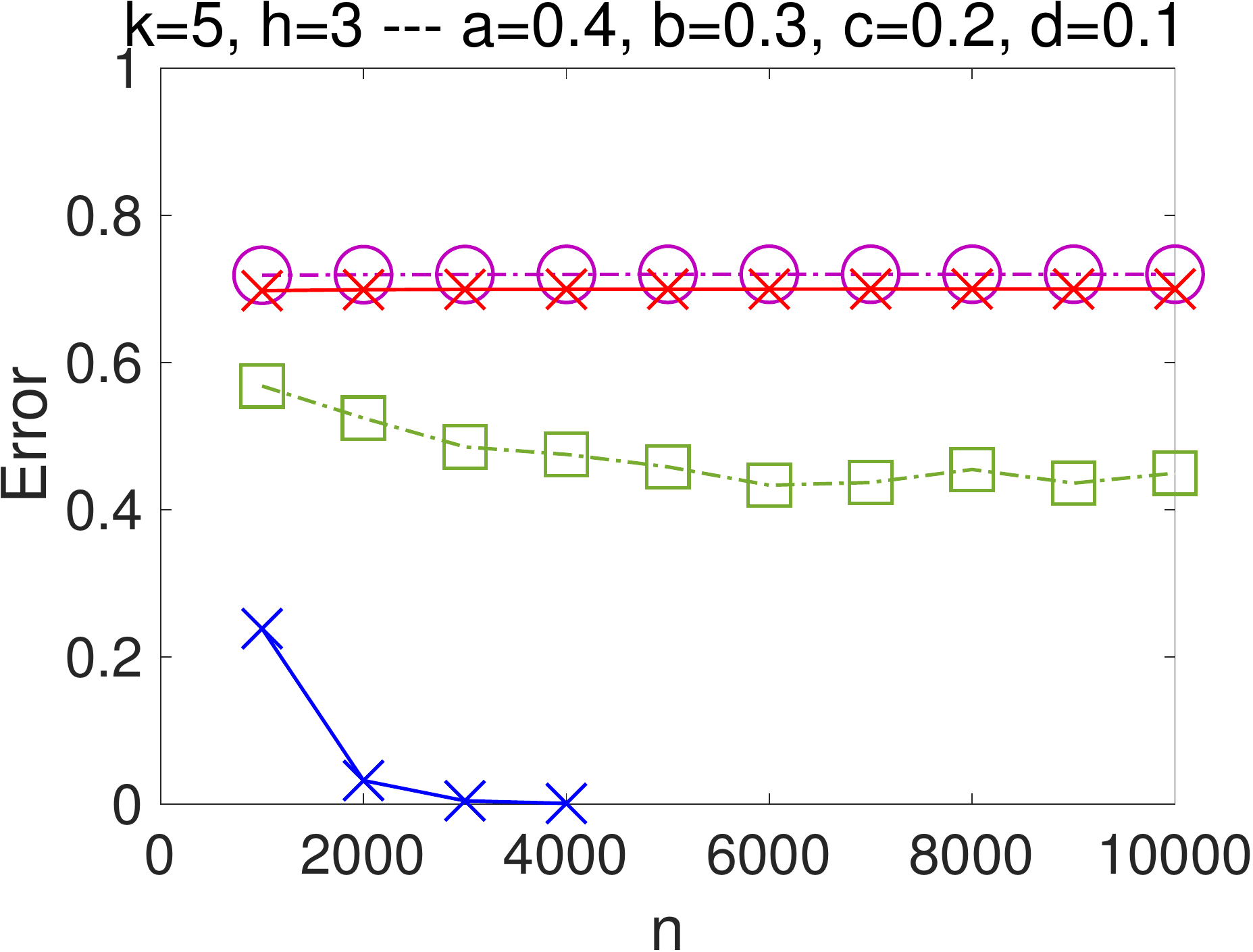}
\hspace{\sizeB}
\includegraphics[scale=\sizeA]{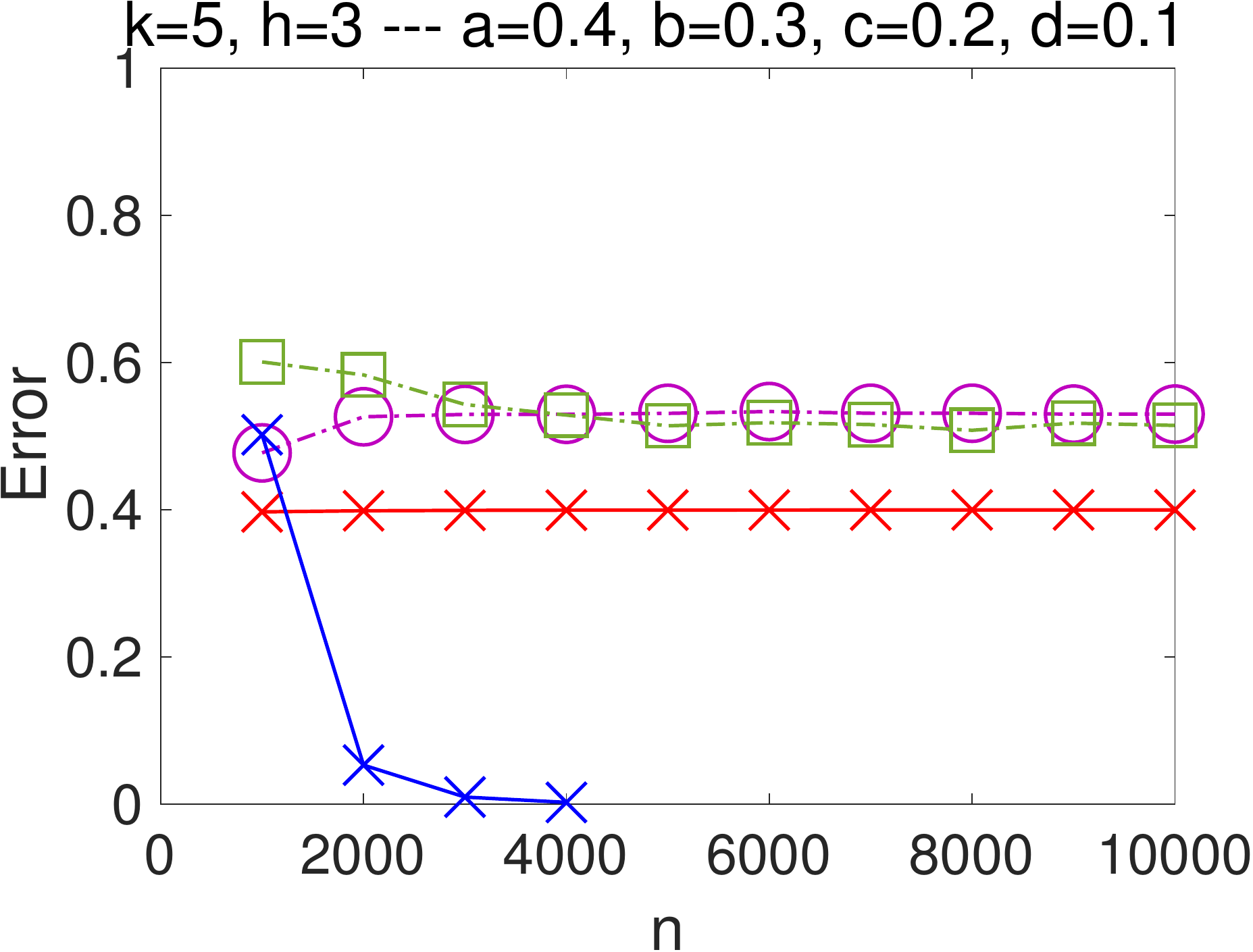}}
\caption{Performance of standard spectral clustering and our fair versions on our variant of the stochstic block model as a function of~$n$
for various
parameter
settings.
Error is the fraction of misclassified vertices
w.r.t.
the fair ground-truth clustering (see Section~\ref{section_SBmodel}).
\textbf{First row:} Assumption~\eqref{bal_assum_theorem}
in Theorem~\ref{theorem_SB_model}
is satisfied, that is
$|V_s\cap C_l|=\frac{n}{kh}$,
$s\in[h]$, $l\in[k]$. \textbf{Second row:} Assumption~\eqref{bal_assum_theorem} is~not~satisfied.\\
\textbf{2nd row, 1st plot:}
$\frac{|C_1|}{n}=\frac{7}{10}$, $\frac{|C_2|}{n}=\frac{|C_3|}{n}=\frac{15}{100}$ and $\frac{|V_s\cap C_l|}{|C_l|}=\frac{1}{5}$, $s\in[5]$, $l\in[5]$;
%
%
\textbf{2nd plot:}
$\frac{|C_1|}{n}=\frac{6}{10}$, $\frac{|C_2|}{n}=\frac{4}{10}$ and $\frac{|V_1\cap C_l|}{|C_l|}=\frac{6}{10}$, $\frac{|V_2\cap C_l|}{|C_l|}=\frac{4}{10}$, $l\in[2]$;
%
%
\textbf{3rd plot:}
$\frac{|C_1|}{n}=\frac{|C_2|}{n}=\frac{3}{10}$, $\frac{|C_3|}{n}=\frac{2}{10}$, $\frac{|C_4|}{n}=\frac{|C_5|}{n}=\frac{1}{10}$ and $\frac{|V_1\cap C_l|}{|C_l|}=\frac{5}{10}$, $\frac{|V_2\cap C_l|}{|C_l|}=\frac{3}{10}$, $\frac{|V_3\cap C_l|}{|C_l|}=\frac{2}{10}$, $l\in[5]$;
%
%
\textbf{4th plot:}
$\frac{|C_1|}{n}=\frac{6}{10}$,
$\frac{|C_2|}{n}=\ldots=\frac{|C_5|}{n}=\frac{1}{10}$
and $\frac{|V_1\cap C_l|}{|C_l|}=\frac{5}{10}$, $\frac{|V_2\cap C_l|}{|C_l|}=\frac{3}{10}$, $\frac{|V_3\cap C_l|}{|C_l|}=\frac{2}{10}$, $l\in[5]$.
}\label{fig_exp_SBmodel_1}
\end{figure*}

\subsection{Synthetic Data}\label{sec_syn_experiments}

We run experiments
on our variant of the
SBM
 introduced~in Section~\ref{section_SBmodel}.
To asses the quality of a clustering we measure the fraction of misclassified vertices
w.r.t.
the fair ground-truth clustering (see Section~\ref{section_SBmodel}), which we refer to as error.

In the experiments
of
Figure~\ref{fig_exp_SBmodel_1},
we study the performance of standard unnormalized and normalized SC and  of our fair versions, Algorithm~\ref{fair_SC_alg} and Algorithm~\ref{fair_SC_alg_normalized},   as a function of $n$.
Due to the high running time of Algorithm~\ref{fair_SC_alg_normalized}
(see
Section~\ref{sec_Fair_SC}),
we only run it up to~$n=4000$. All plots show the error of the methods, except for the fourth plot in the first row, which shows
their
running time.
We study several parameter settings. For the plots in the first row, Assumption~\eqref{bal_assum_theorem} in Theorem~\ref{theorem_SB_model} is satisfied, that is
 $|V_s\cap C_l|=\frac{n}{kh}$ for all $s\in[h]$ and $l\in[k]$. In this case, in accordance with Theorem~\ref{theorem_SB_model}, both Algorithm~\ref{fair_SC_alg} and Algorithm~\ref{fair_SC_alg_normalized} are able to recover the fair ground-truth clustering if $n$ is just large enough while standard SC always fails to
do so.
 Algorithm~\ref{fair_SC_alg_normalized} yields significantly better results than Algorithm~\ref{fair_SC_alg} and requires much smaller values of~$n$ for
achieving
zero error.
This comes at the cost of a
higher running time of Algorithm~\ref{fair_SC_alg_normalized}
(still it is in $\mathcal{O}(n^3)$ as
claimed in Section~\ref{sec_Fair_SC}). The
run-time
of Algorithm~\ref{fair_SC_alg}
is
the same as the
run-time
of standard normalized SC.
\renewcommand{\sizeA}{0.200}
\renewcommand{\sizeB}{0mm}
\begin{figure}[t]
\centering{
\includegraphics[scale=\sizeA]{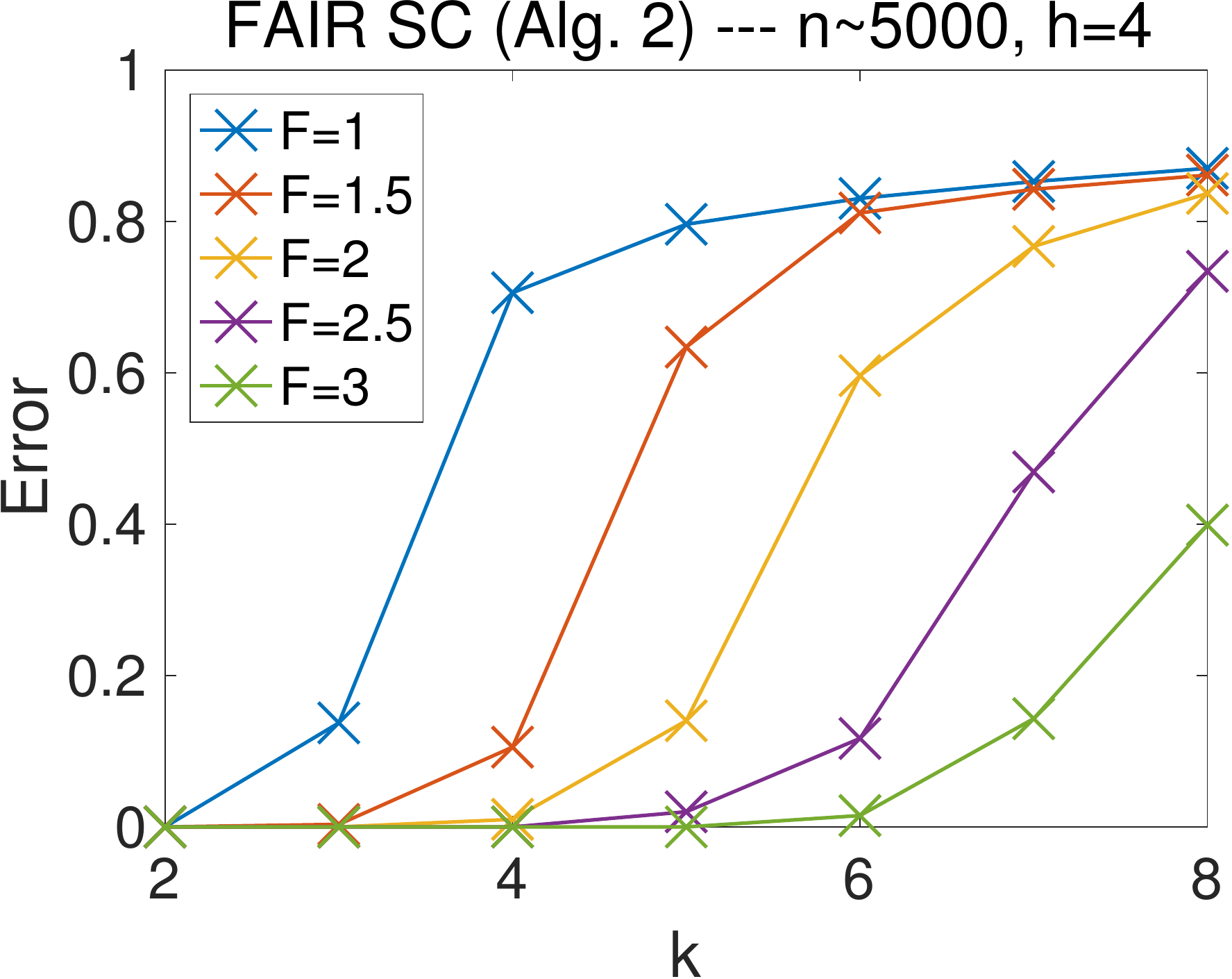}
\hspace{\sizeB}
\includegraphics[scale=\sizeA]{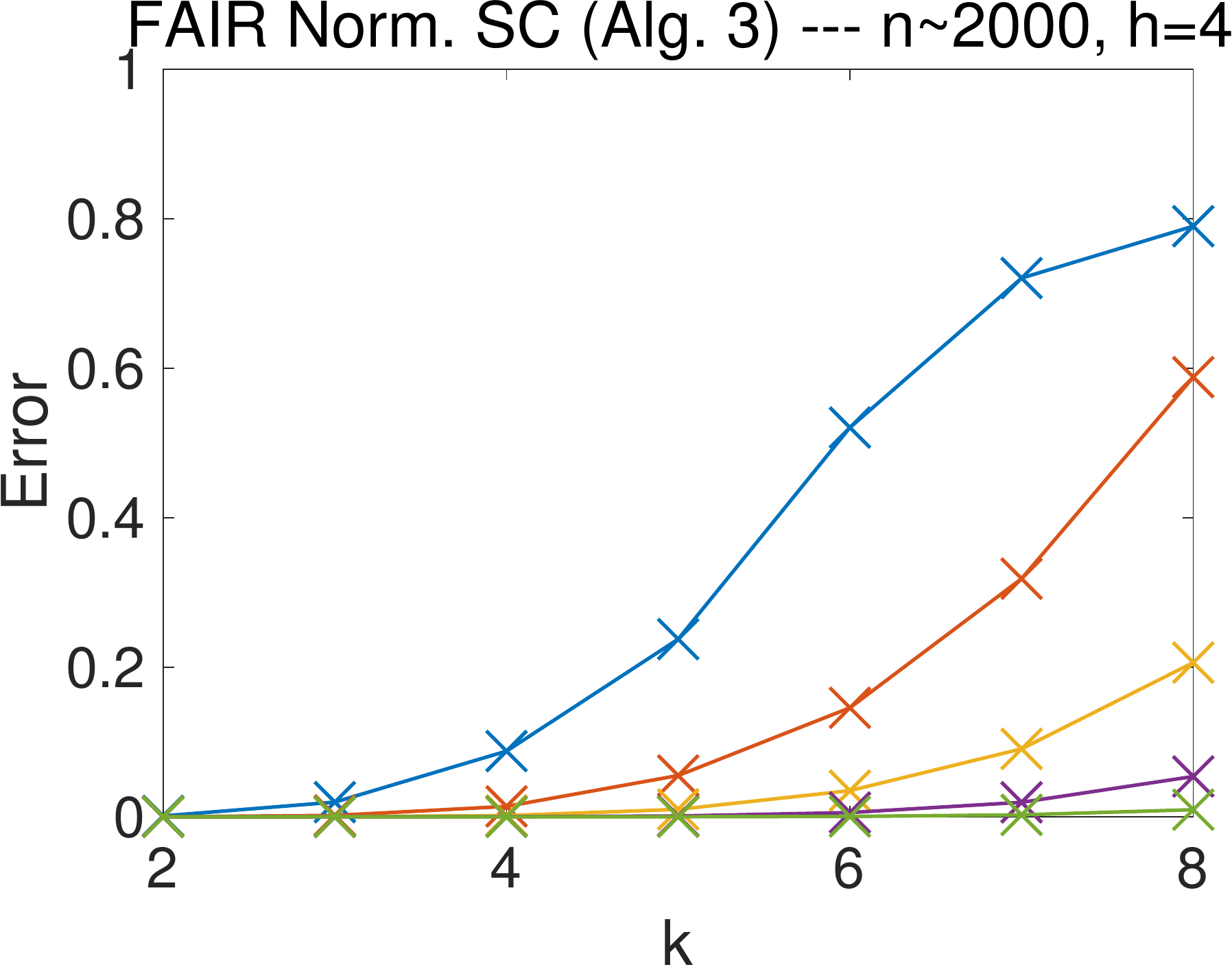}
}
\caption{Error of our algorithms as a function of $k$. We consider
$a=F\cdot \frac{25}{100}$, $b=F\cdot \frac{2}{10}$, $c=F\cdot \frac{15}{100}$, $d=F\cdot \frac{1}{10}$
 for
 various values of $F$.
\textbf{Left:} Alg.~\ref{fair_SC_alg}, $n\approx5000$. \textbf{Right:} Alg.~\ref{fair_SC_alg_normalized}, $n\approx2000$.}
\label{fig_exp_SBmodel_AsFuncOfK}
\end{figure}
\renewcommand{\sizeA}{0.200}
\renewcommand{\sizeB}{0mm}
\begin{figure}[t]
\centering{
\includegraphics[scale=\sizeA]{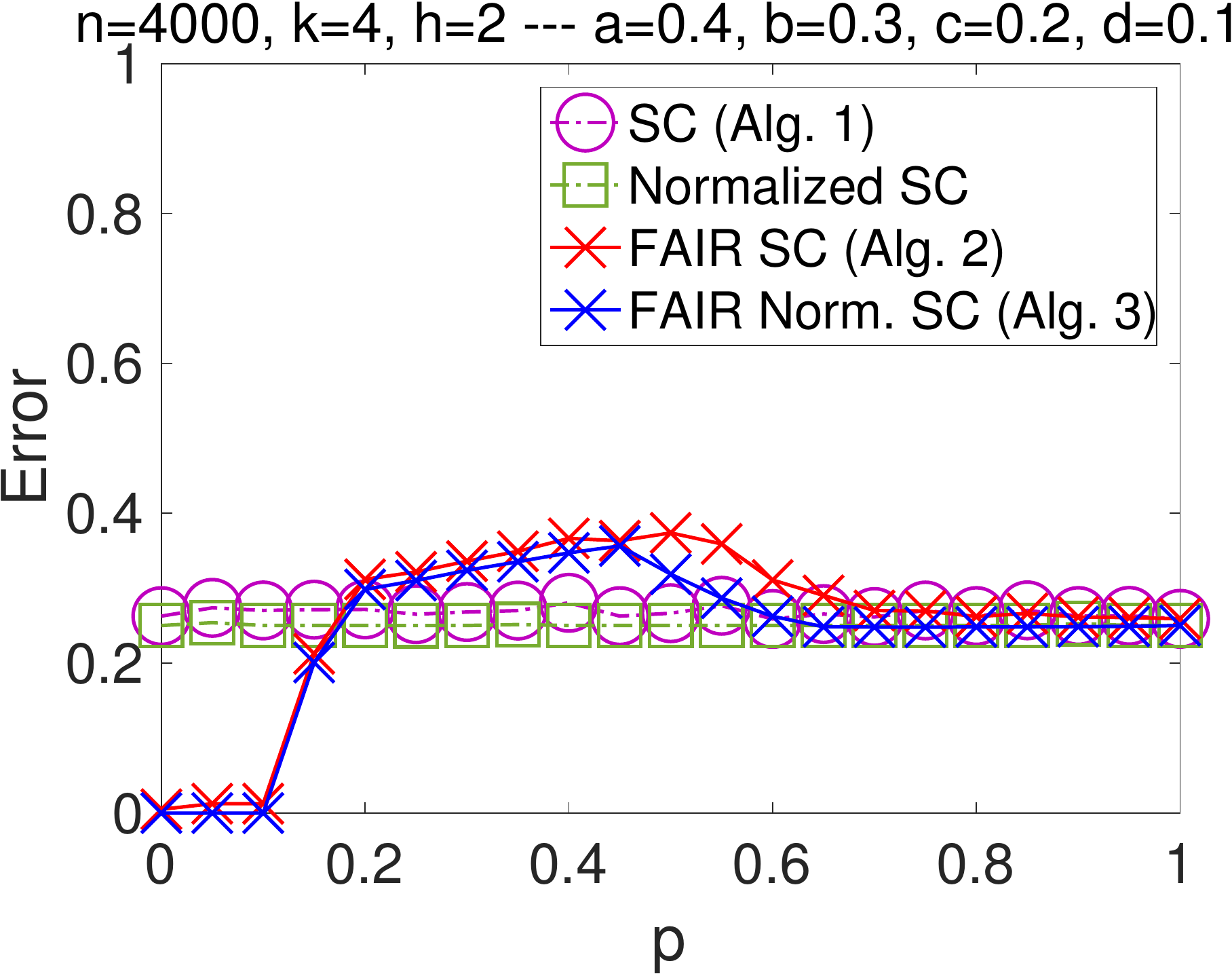}
\hspace{\sizeB}
\includegraphics[scale=\sizeA]{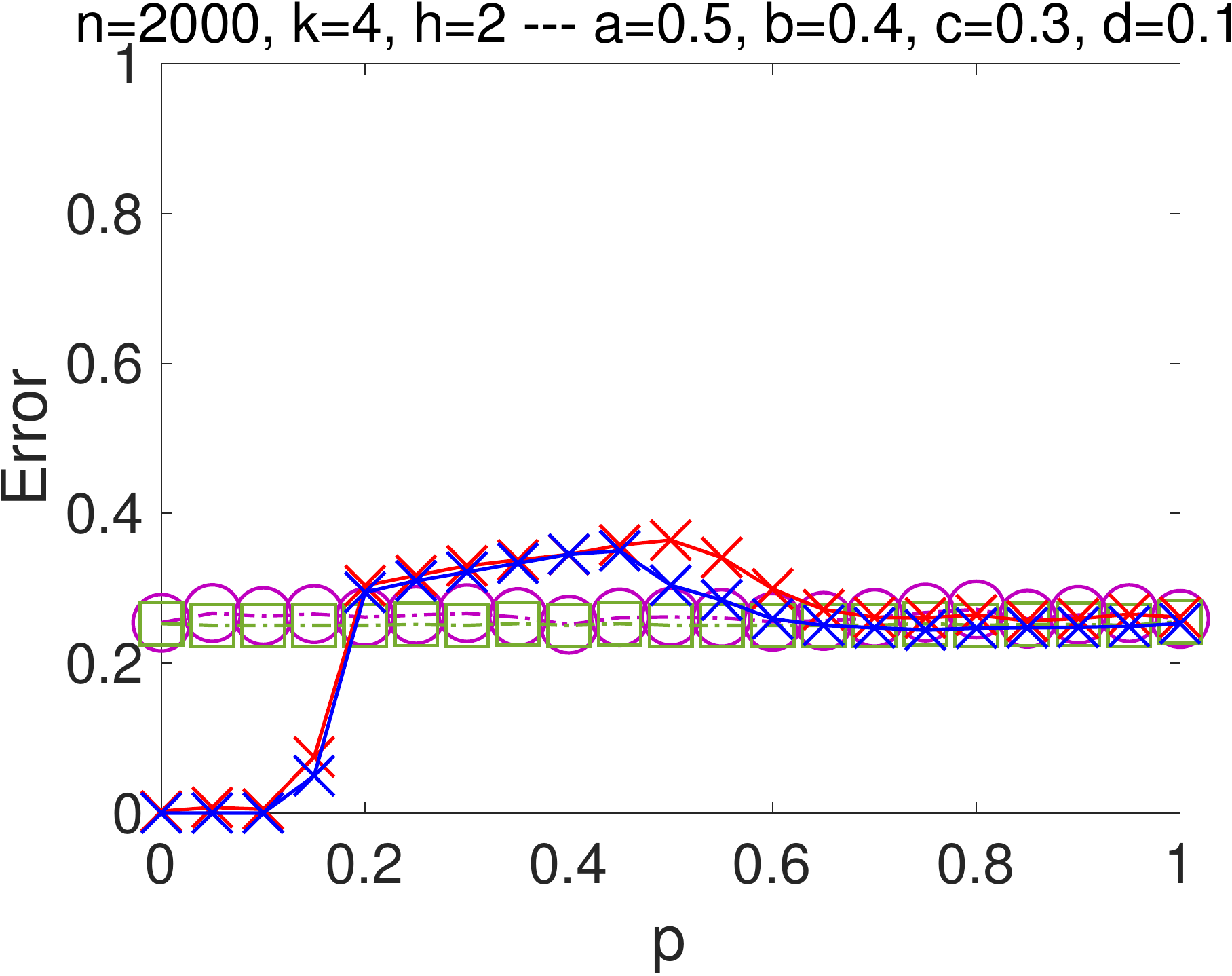}
}
\caption{Error of standard spectral clustering and our fair versions as a function of the perturbation parameter $p$.}\label{fig_exp_SBmodel_AsFuncOf_perturbation}
\end{figure}
For the plots in the second row, Assumption~\eqref{bal_assum_theorem} in Theorem~\ref{theorem_SB_model} is not satisfied. We consider various scenarios of cluster sizes~$|C_l|$ and group sizes~$|V_s|$ (however,
we always have $|V_s\cap C_l|/|C_l|=|V_s|/n$, $s\in[h]$, $l\in[k]$, so that
$C_1\dot{\cup}\ldots \dot{\cup}C_k$
is as fair as possible). When the cluster sizes are different, but the group sizes are
all equal to each other (1st plot in the 2nd row) or Assumption~\eqref{bal_assum_theorem} is only slightly violated (2nd plot), both  Algorithm~\ref{fair_SC_alg} and Algorithm~\ref{fair_SC_alg_normalized} are still able to recover the fair ground-truth clustering. Compared to the plots in the first row, Algorithm~\ref{fair_SC_alg} requires a larger value of $n$ though, even though $k$ is smaller. Algorithm~\ref{fair_SC_alg_normalized} achieves (almost) zero error already for $n=1000$ in these scenarios. When Assumption~\eqref{bal_assum_theorem} is strongly violated (3rd and 4th plot), Algorithm~\ref{fair_SC_alg} fails to recover the fair ground-truth clustering, but Algorithm~\ref{fair_SC_alg_normalized} still succeeds.

In the experiments shown in  Figure~\ref{fig_exp_SBmodel_AsFuncOfK}, we study the error of Algorithm~\ref{fair_SC_alg} (left plot) and Algorithm~\ref{fair_SC_alg_normalized} (right plot)  as a function of $k$ when $n$ is roughly fixed. More precisely, for $k\in\{2,\ldots,8\}$ and $h=4$, we have $n=kh\lceil\frac{5000}{kh}\rceil$ (Alg.~\ref{fair_SC_alg}; left) or $n=kh\lceil\frac{2000}{kh}\rceil$ (Alg.~\ref{fair_SC_alg_normalized}; right), which allows for fair ground-truth clusterings satisfying \eqref{bal_assum_theorem}. We consider
connection
probabilities
$a=F\cdot \frac{25}{100}$, $b=F\cdot \frac{2}{10}$, $c=F\cdot \frac{15}{100}$, $d=F\cdot \frac{1}{10}$ for
$F\in\{1,1.5,2,2.5,3\}$.
Unsurprisingly, for both Algorithm~\ref{fair_SC_alg}  and Algorithm~\ref{fair_SC_alg_normalized} the error is monotonically increasing with $k$. The
rate of increase critically depends on $F$
(or the probabilities $a,b,c,d$).
For Algorithm~\ref{fair_SC_alg}, this is even more severe. There is only a small range in which the various curves
exhibit polynomial growth,
which
makes it impossible to empirically evaluate whether our error guarantees~\eqref{quant_guaran_theo} and \eqref{quant_guaran_theo_norm} are tight with respect to $k$.

\renewcommand{\sizeA}{0.205}
\newcommand{\sizeAF}{0.202}
\renewcommand{\sizeB}{2mm}

\begin{figure*}[th!]
\centering{
\begin{overpic}[scale=\sizeA]{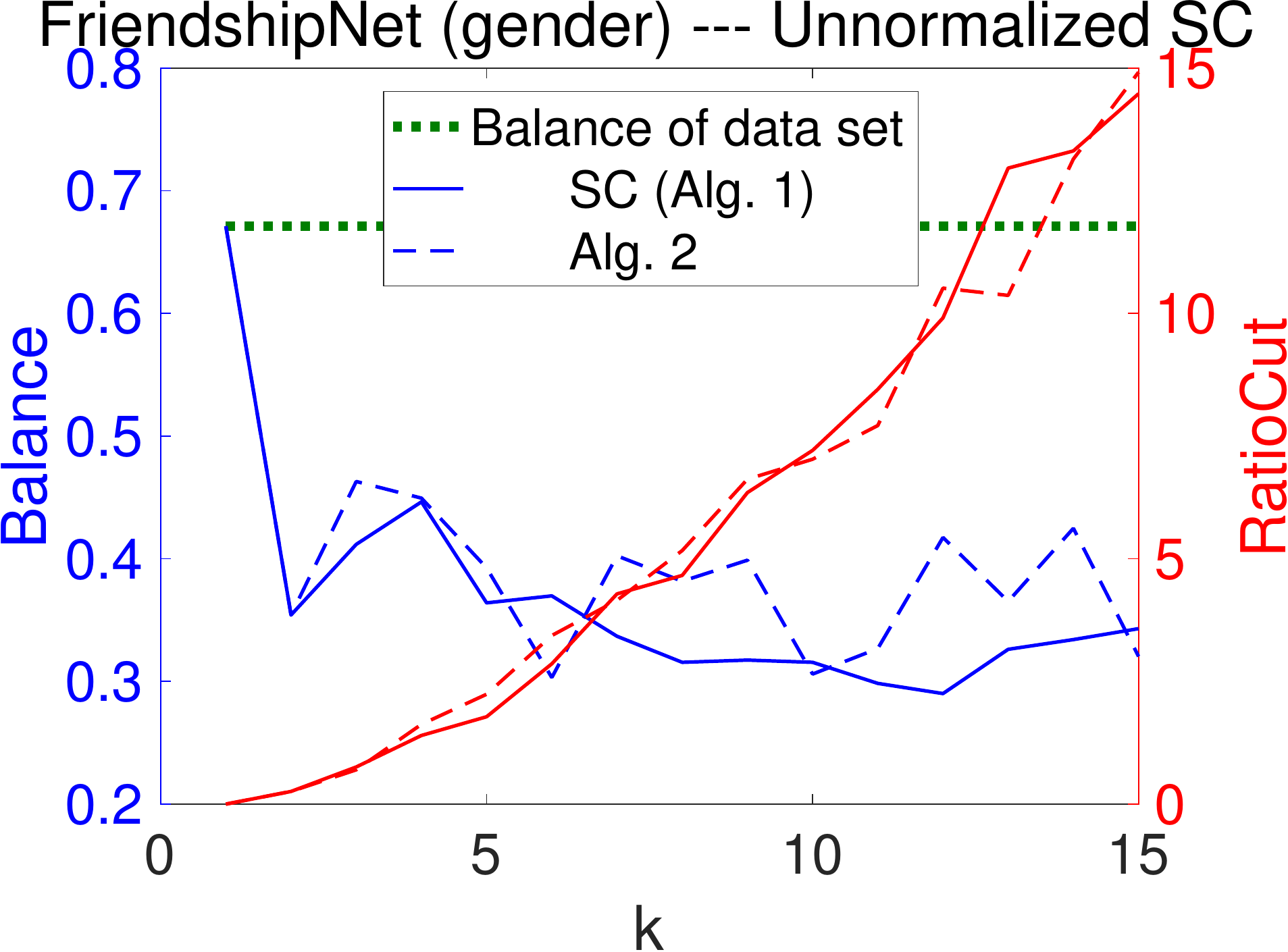}
\put(37,52.7){\includegraphics[scale=\sizeA]{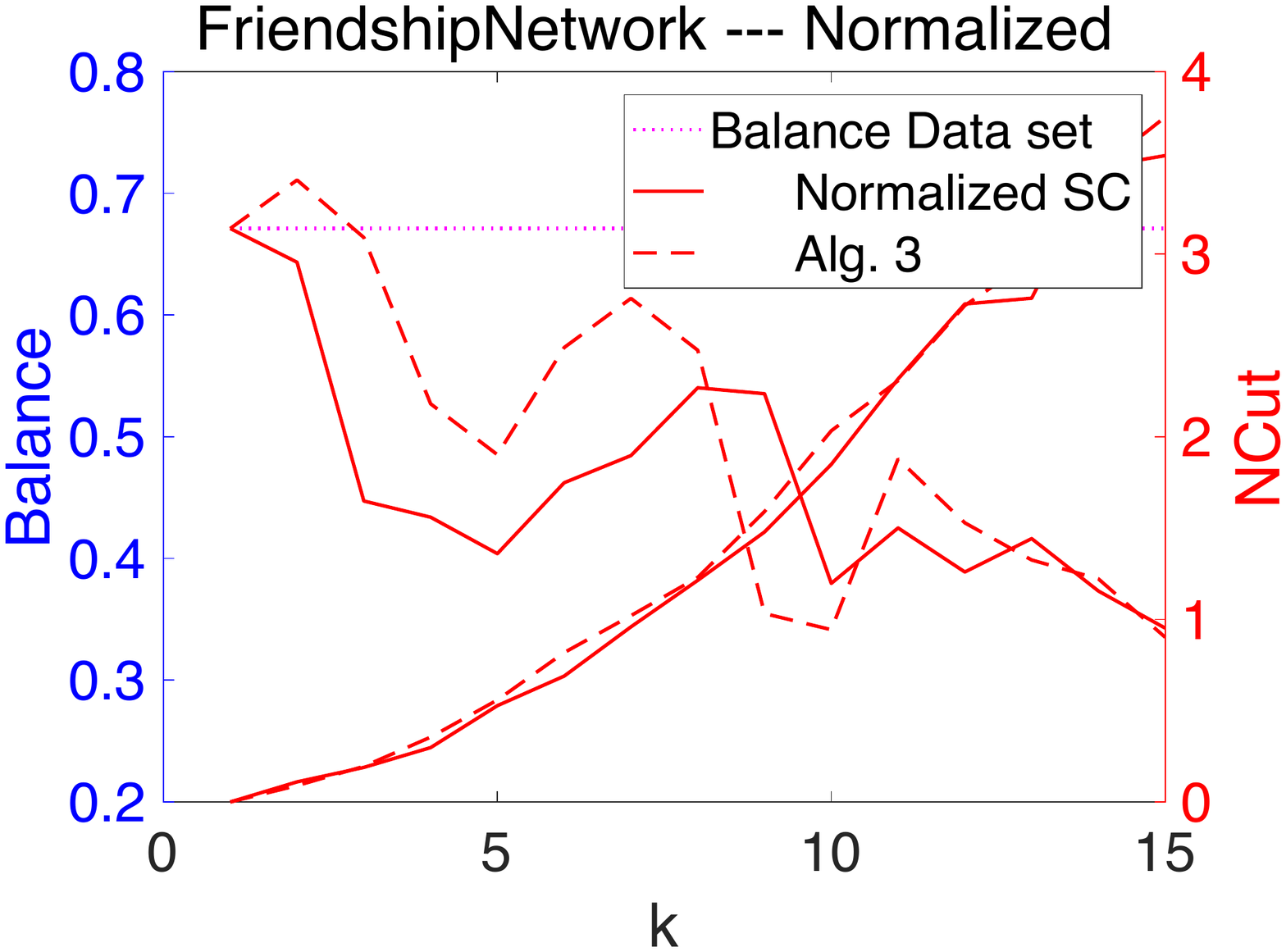}}
\end{overpic}
\hspace{\sizeB}
\begin{overpic}[scale=\sizeAF]{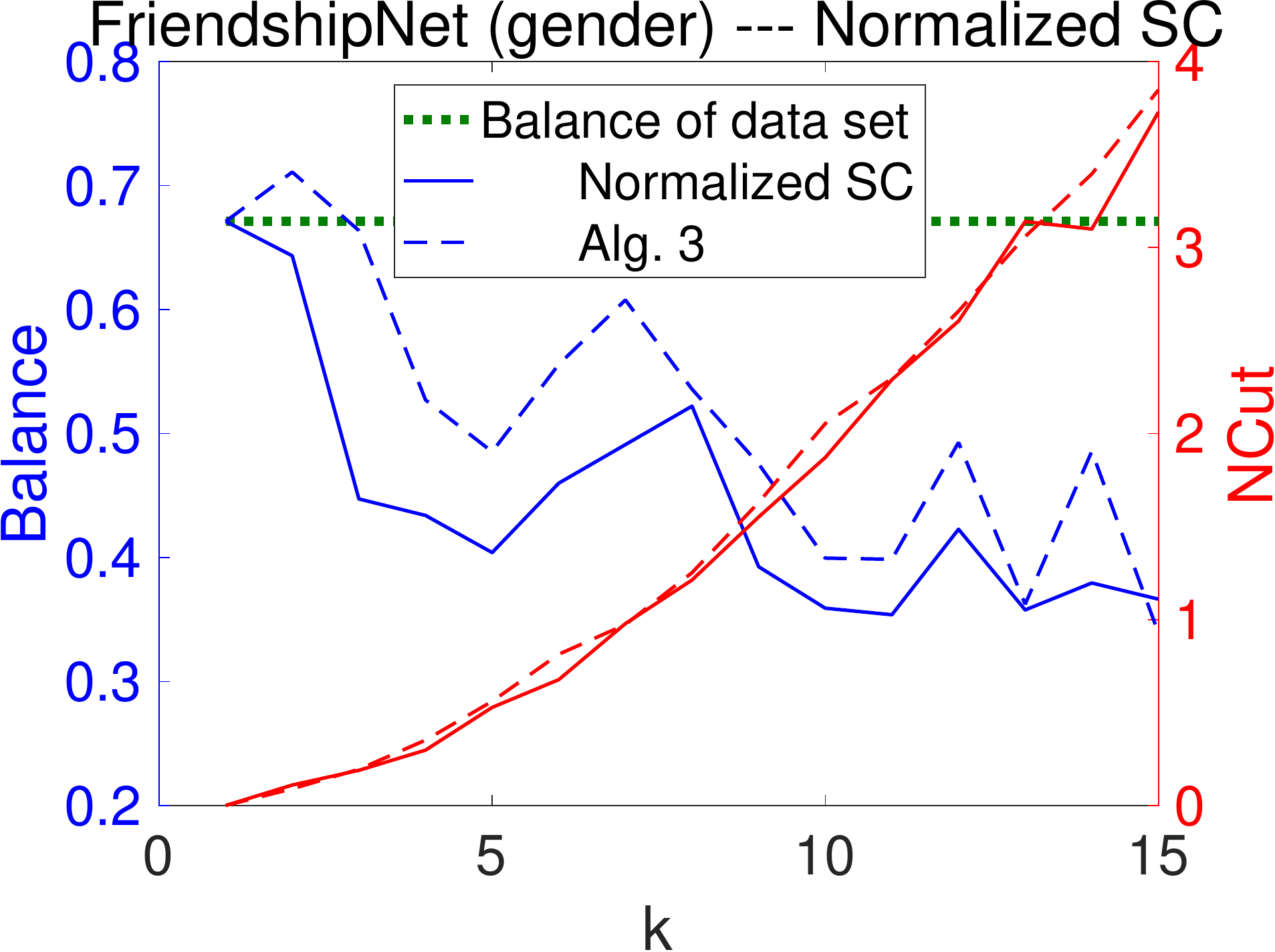}
\put(37.8,53.8){\includegraphics[scale=\sizeA]{Experiments_RealNetworks_NEW_NOTATION_2/legend_red_lines.pdf}}
\end{overpic}
\hspace{\sizeB}
\begin{overpic}[scale=\sizeA]{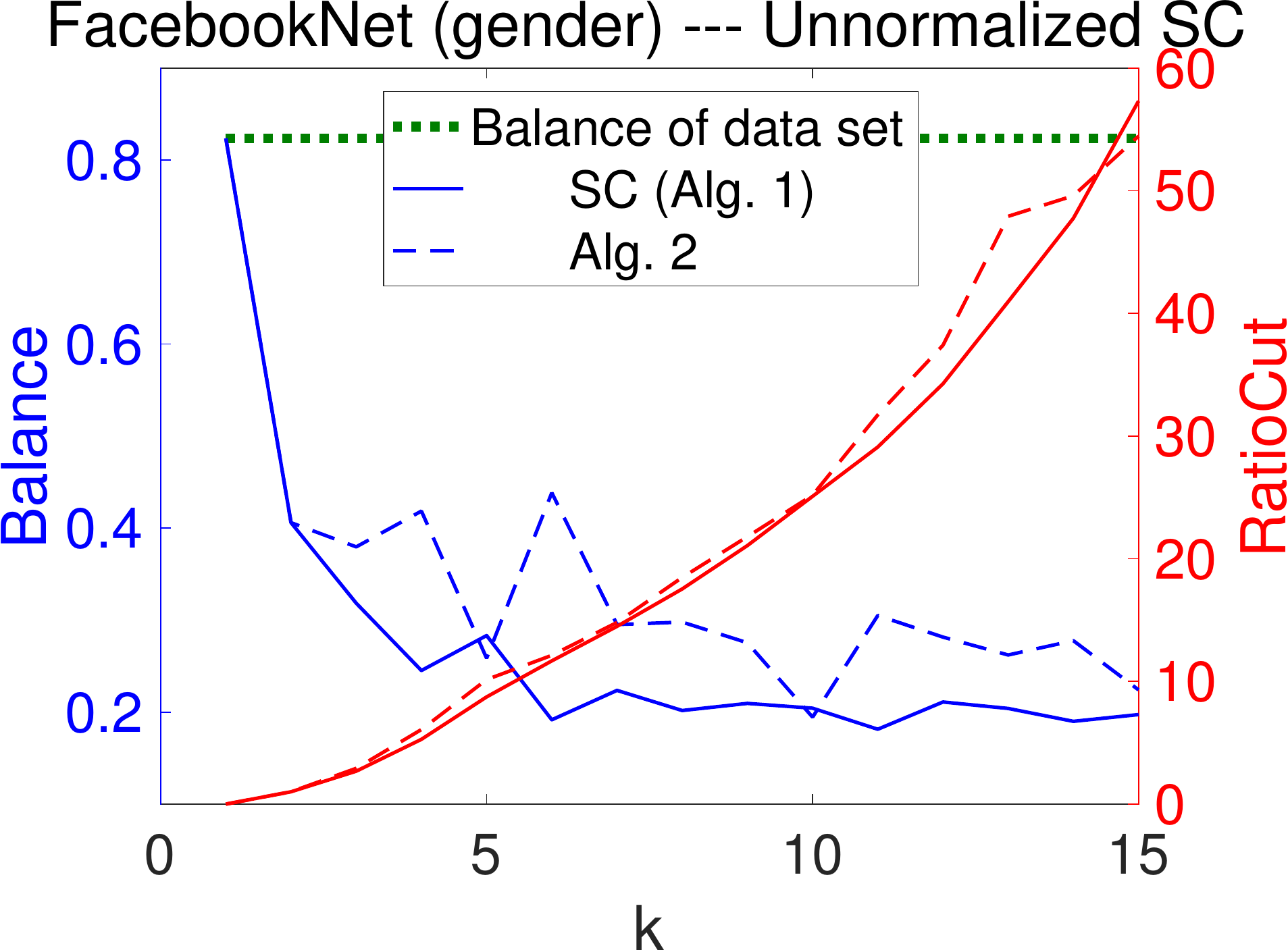}
\put(37,52.7){\includegraphics[scale=\sizeA]{Experiments_RealNetworks_NEW_NOTATION_2/legend_red_lines.pdf}}
\end{overpic}
\hspace{\sizeB}
\begin{overpic}[scale=\sizeA]{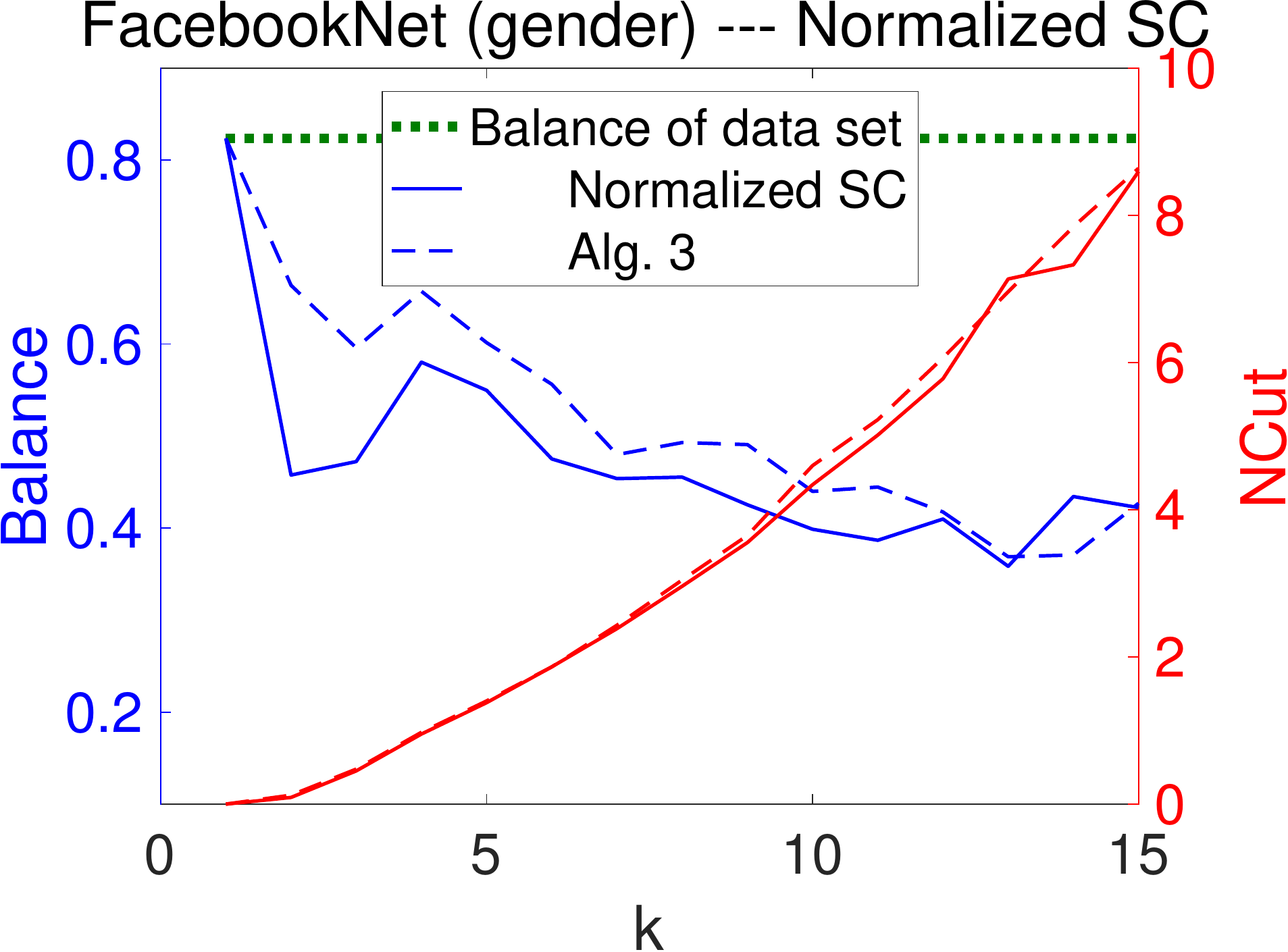}
\put(37,52.7){\includegraphics[scale=\sizeA]{Experiments_RealNetworks_NEW_NOTATION_2/legend_red_lines.pdf}}
\end{overpic}

\vspace{2mm}
\begin{overpic}[scale=\sizeA]{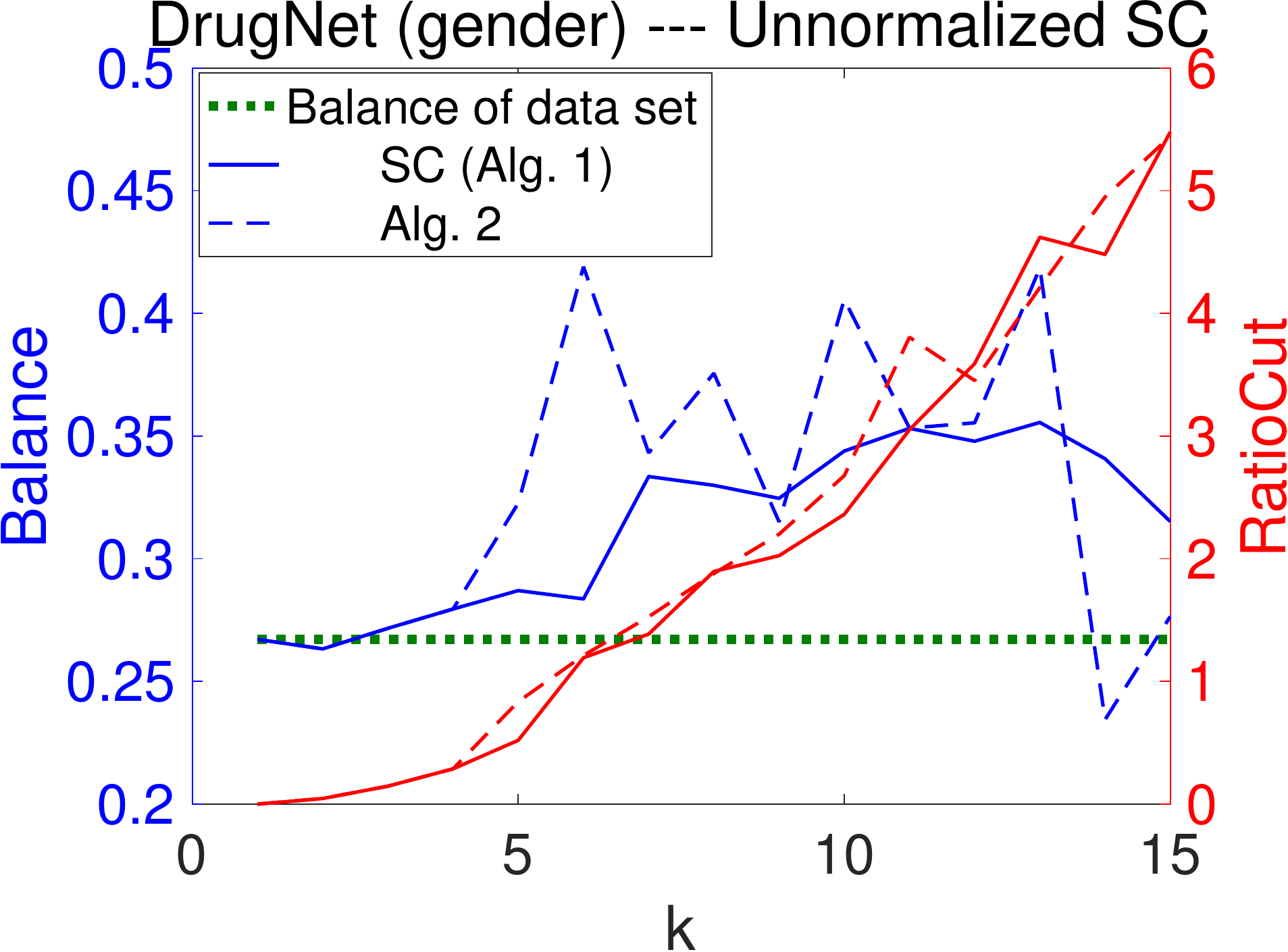}
\put(22.5,54.5){\includegraphics[scale=\sizeA]{Experiments_RealNetworks_NEW_NOTATION_2/legend_red_lines.pdf}}
\end{overpic}
\hspace{\sizeB}
\begin{overpic}[scale=\sizeA]{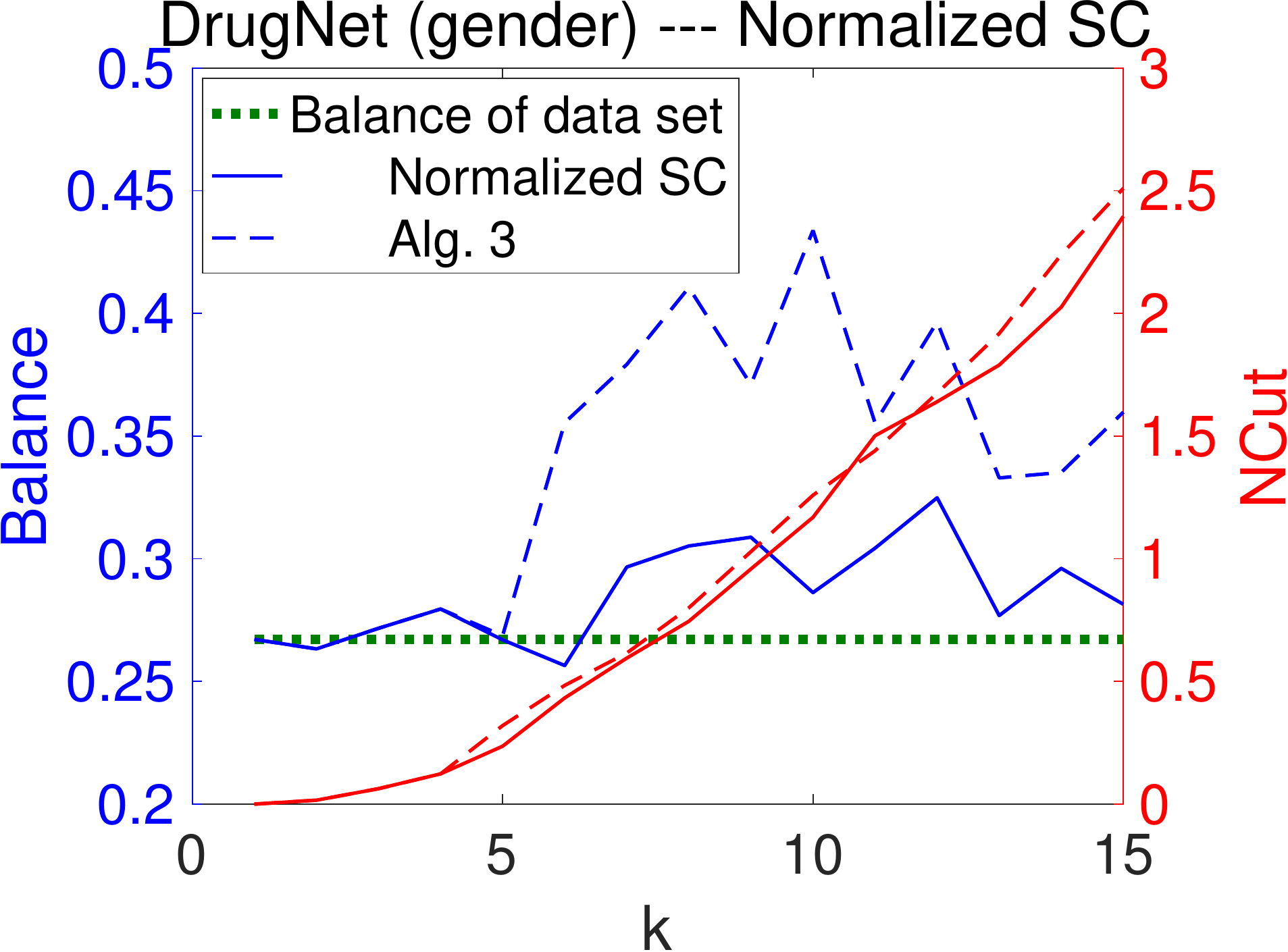}
\put(22.5,53.7){\includegraphics[scale=\sizeA]{Experiments_RealNetworks_NEW_NOTATION_2/legend_red_lines.pdf}}
\end{overpic}
\hspace{\sizeB}
\begin{overpic}[scale=\sizeA]{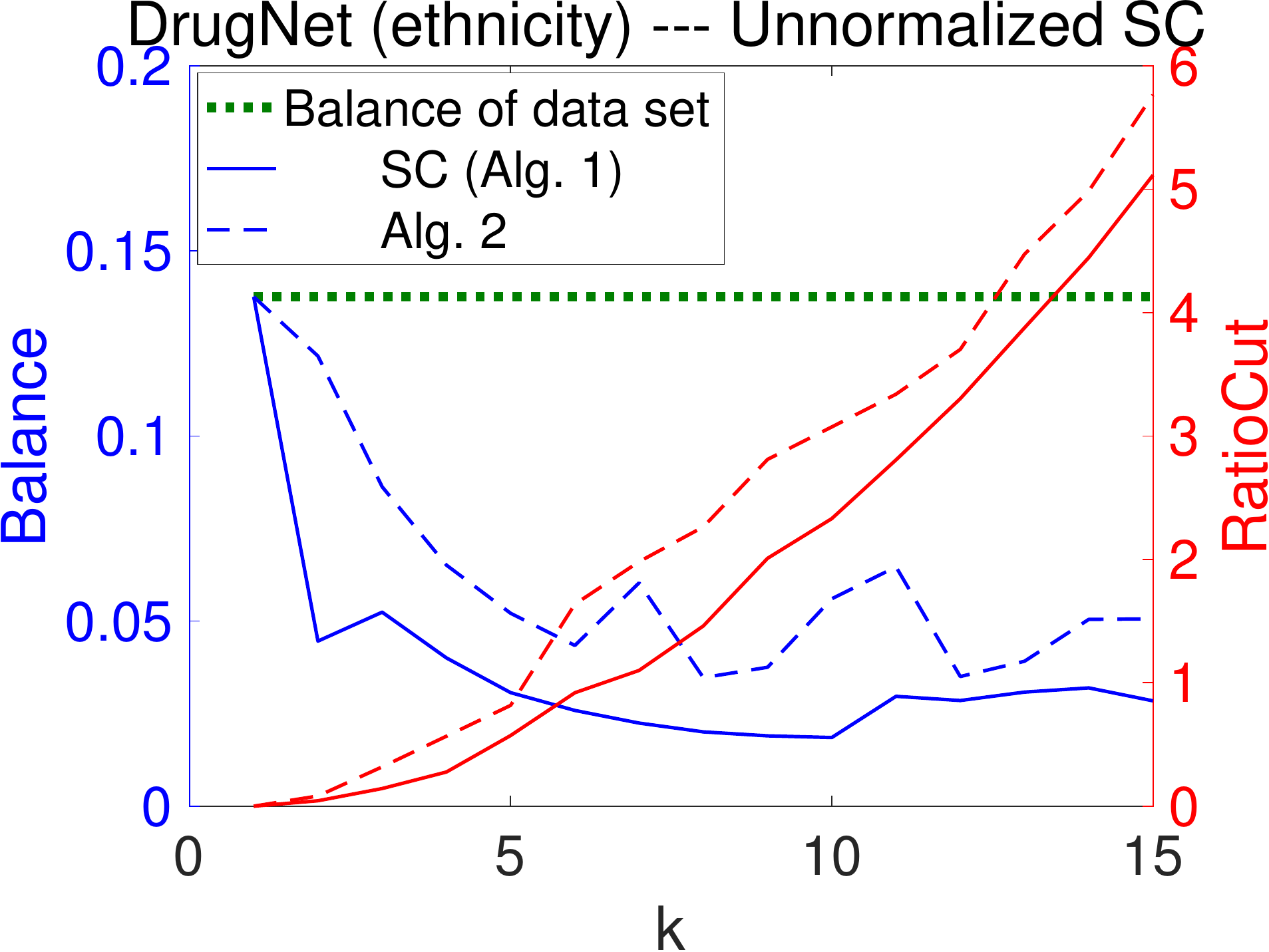}
\put(22.5,55.08){\includegraphics[scale=\sizeA]{Experiments_RealNetworks_NEW_NOTATION_2/legend_red_lines.pdf}}
\end{overpic}
\hspace{\sizeB}
\begin{overpic}[scale=\sizeA]{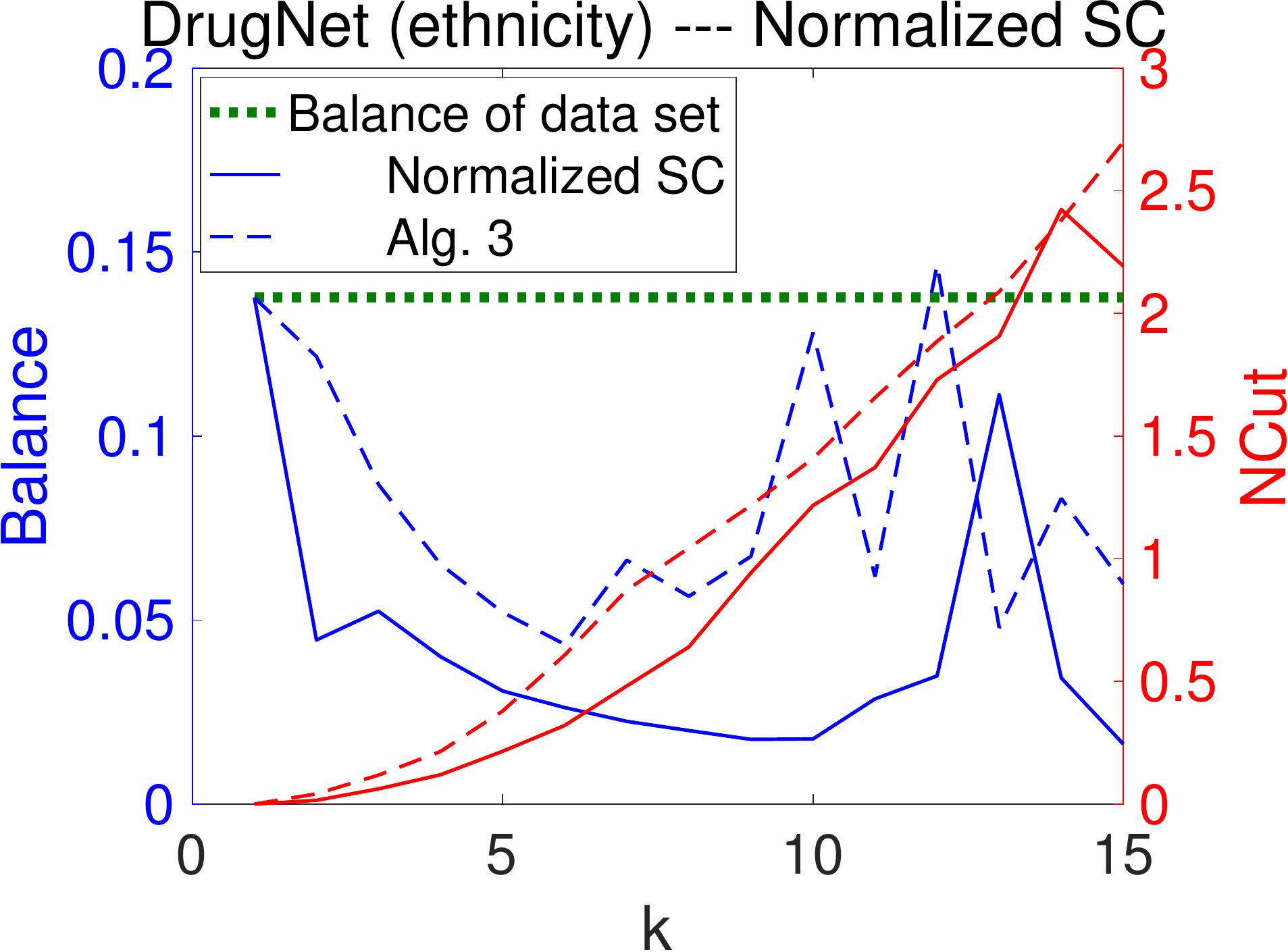}
\put(22.5,53.8){\includegraphics[scale=\sizeA]{Experiments_RealNetworks_NEW_NOTATION_2/legend_red_lines.pdf}}
\end{overpic}
}
\caption{Balance (left axis) and RatioCut / NCut value (right axis) of
standard SC and our fair versions
as a function of $k$ on
real networks.}\label{fig_exp_real_networks}
\end{figure*}

In the experiments of Figure~\ref{fig_exp_SBmodel_AsFuncOf_perturbation}, we consider a perturbation of our model as follows: first, for
$n=4000$ (left plot) or $n=2000$ (right plot),
$k=4$ and $h=2$ we generate a graph from our model
just
as before (Assumption~\eqref{bal_assum_theorem} is satisfied; in particular, the two groups have the same size), but then we assign some of the vertices in the first group to the other group. Concretely, for a perturbation parameter~$p\in[0,1]$,  each vertex in the first group is assigned to the second
one
with probability~$p$ independently of each other. The case $p=0$ is 
our model without any perturbation.
If
$p=1$, there is only one group and our algorithms technically coincide with standard unnormalized or normalized SC.
The
two plots
show the error of
our algorithms
and
standard SC
as a function of~$p$. Both our algorithms show the same behavior. They are robust against the perturbation up to $p=0.15$. They yield the same error as
standard SC
for $p\geq 0.7$.

\subsection{Real Data}\label{sec_real_experiments}

In the experiments of  Figure~\ref{fig_exp_real_networks},
we
evaluate
the performance of standard unnormalized and normalized SC versus our fair versions on real
network data. The quality of a clustering is measured through its
``Balance'' (defined as the average of the balance \eqref{def_balance} over all clusters; shown on left axis of the plots) and its
RatioCut~\eqref{def_ratio_cut}
or
NCut~\eqref{def_norm_cut}
value (right axis).
All networks that we are working with are the largest connected component of an originally unconnected network.

The first row of
Figure~\ref{fig_exp_real_networks} shows the results as a function of the number of clusters~$k$ for two high school friendship networks
\citep{mastrandrea2015contact}. Vertices correspond to students and are split into two groups of males and females.
\textsc{FriendshipNet}
has
127 vertices and  an edge
between two students
indicates that
one of them reported friendship with the other one.
\textsc{FacebookNet}
consists of
155 vertices and an edge between two students indicates friendship  on
Facebook. As we can see from the plots, compared to standard SC, our fair versions improve the output clustering's balance (by $10\%$ / $15\%$ / $34\%$ / $10\%$ on average over~$k$) while
almost not
changing its RatioCut or NCut~value.

The second row shows the results for
\textsc{DrugNet},
 a network encoding  acquaintanceship between drug users in Hartford,
CT
\citep{weeks2002social}.
In the left two plots, the network consists of  185
vertices split into two groups of males and females (we had to remove some vertices for which the gender was not known).
In the right two plots, the network has 193 vertices split into three ethnic groups of African Americans, Latinos and others. Again, our fair versions of SC quite 
significantly
improve the balance of the output clustering over standard SC (by $5\%$ / $18\%$ / $86\%$ / $167\%$ on average over~$k$). However, in the right two plots we also observe a  moderate increase of the RatioCut or NCut~value.


\section{Discussion}\label{sec_discussion}
In this work, we presented an
algorithmic
approach towards incorporating fairness constraints into the
SC framework. We provided a rigorous analysis of our algorithms and proved that they
can
recover fair
ground-truth
clusterings in a natural variant of the stochastic block model. Furthermore, we provided strong empirical evidence that often in real data sets, it is possible to achieve higher demographic proportionality at minimal additional cost in the clustering~objective.

An important direction for future work is to understand the price of fairness in the SC framework if one
 needs to satisfy the fairness constraints exactly. One way to achieve this would be to run the fair $k$-means
 algorithm of~\citet{sohler_kmeans} in the last step of our Algorithms~\ref{fair_SC_alg} or~\ref{fair_SC_alg_normalized}.
We want to point out that the algorithm of~\citeauthor{sohler_kmeans} currently does not extend beyond two
groups of the same size. Second, our experimental results on the stochastic block model provide evidence that our algorithms are robust to moderate levels of perturbations in the group assignments. Characterizing this robustness rigorously is an intriguing open problem.

\section*{Acknowledgements}

This research is supported by a Rutgers Research Council Grant and a Center for Discrete Mathematics and Theoretical Computer Science (DIMACS) postdoctoral fellowship.

\bibliography{mybibfile_fair_clustering}
\bibliographystyle{icml2019}


\clearpage

\icmltitlerunning{Appendix to Guarantees for Spectral Clustering with Fairness Constraints}
\onecolumn
\appendix

\section*{Appendix}\label{appendix}

\section{Adding Fairness Constraints to Normalized Spectral Clustering}\label{appendix_fair_SC_normalized}

In this section we derive a fair version of normalized spectral clustering (similarly to how we proceeded for unnormalized spectral clustering in Sections~\ref{sec_SC} and \ref{sec_Fair_SC} of the main paper).

Normalized spectral clustering aims at partitioning $V$ into $k$ clusters
with minimum value of
the NCut objective function
as follows \citep[see][for details]{Luxburg_tutorial}:
 for a clustering $V=C_1\dot{\cup}\ldots \dot{\cup}C_k$ we have
\begin{align}\label{def_norm_cut}
\NC(C_1,\ldots,C_k)=\sum_{l=1}^k \frac{\cut(C_l,V\setminus C_l)}{\vol(C_l)},
\end{align}
where
$\vol(C_l)=\sum_{i\in C_l} d_i=\sum_{i\in C_l,j\in[n]} W_{ij}$. Encoding a clustering $V=C_1\dot{\cup}\ldots \dot{\cup}C_k$ by a matrix
$H\in \R^{n\times k}$
with
\begin{align}\label{enc_clustering_normalized}
H_{il}= \begin{cases}
     1/\sqrt{\vol(C_l)}, & i\in C_l, \\
0,& i\notin C_l
   \end{cases},
\end{align}
we have $\NC(C_1,\ldots,C_k)=\trace(H^TLH)$. Note that any $H$ of the form \eqref{enc_clustering_normalized} satisfies $H^TDH=I_k$.
Normalized spectral clustering relaxes the problem of minimizing $\trace(H^TLH)$ over all $H$ of the form \eqref{enc_clustering_normalized} to
\begin{align}\label{relaxed_problem_normalized}
 \min_{H\in\R^{n\times k}}\trace(H^T L H) ~~ \text{subject to~}H^TDH=I_k.
\end{align}
Substituting $H=D^{-1/2}T$ for $T\in\R^{n\times k}$ (we need to assume that $G$ does not contain any isolated vertices since otherwise $D^{-1/2}$ does not exist), problem~\eqref{relaxed_problem_normalized} becomes
\begin{align*}
 \min_{T\in\R^{n\times k}}\trace(T^TD^{-1/2} LD^{-1/2} T) ~~ \text{subject to~}T^TT=I_k.
\end{align*}
Similarly to unnormalized spectral clustering, normalized spectral clustering computes an optimal $T$ by computing the $k$ smallest eigenvalues and some corresponding eigenvectors of $D^{-1/2} LD^{-1/2}$ and applies $k$-means clustering to the rows of $H=D^{-1/2}T$ (in practice, $H$ can be computed directly by solving the generalized eigenproblem $Lx=\lambda Dx$, $x\in\R^n$, $\lambda\in\R$; see \citealp{Luxburg_tutorial}).

\vspace{2mm}
Now we want to derive our fair version of normalized spectral clustering. The first step is to show that
Lemma~\ref{lemma_most_gen_const} holds true if we encode a clustering as in \eqref{enc_clustering_normalized}:

\begin{lemma}[Fairness constraint as linear constraint on $H$ for normalized spectral clustering]\label{lemma_most_gen_const_normalized}
For $s\in[h]$, let $f^{(s)}\in \{0,1\}^n$ be the group-membership vector of $V_s$, that is $f^{(s)}_i=1$ if $i\in V_s$ and $f^{(s)}_i=0$ otherwise. Let $V=C_1\dot{\cup}\ldots \dot{\cup}C_k$ be a clustering that is encoded as in \eqref{enc_clustering_normalized}.
We have,
for every $l\in[k]$,
 \begin{align*}
 \forall s\in[h-1]: \sum_{i=1}^n \left(f^{(s)}_i-\frac{|V_s|}{n}\right) H_{il}=0  ~~~~\Leftrightarrow~~~~\forall s\in[h]:\frac{|V_s\cap C_l|}{|C_l|}=\frac{|V_s|}{n}.
 \end{align*}
\end{lemma}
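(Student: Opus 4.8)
The plan is to mimic the proof of Lemma~\ref{lemma_most_gen_const} essentially verbatim, exploiting that the only change from the RatioCut encoding to the NCut encoding is the normalization constant attached to each column of $H$. First I would fix $l\in[k]$ and $s\in[h-1]$ and rewrite the left-hand sum. Since $H_{il}$ vanishes for $i\notin C_l$ and equals the constant $1/\sqrt{\vol(C_l)}$ for $i\in C_l$, I pull this constant out of the sum and obtain
\begin{align*}
\sum_{i=1}^n \left(f^{(s)}_i-\frac{|V_s|}{n}\right) H_{il}=\frac{1}{\sqrt{\vol(C_l)}}\left(\sum_{i\in C_l}f^{(s)}_i-\frac{|V_s|}{n}\sum_{i\in C_l}1\right)=\frac{1}{\sqrt{\vol(C_l)}}\left(|V_s\cap C_l|-\frac{|V_s|\cdot|C_l|}{n}\right),
\end{align*}
where I have used that $\sum_{i\in C_l}f^{(s)}_i=|V_s\cap C_l|$ and $\sum_{i\in C_l}1=|C_l|$. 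Because $\vol(C_l)>0$, setting this expression to zero is equivalent to $|V_s\cap C_l|/|C_l|=|V_s|/n$, which establishes the stated equivalence for every individual $s\in[h-1]$.

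The second step is to upgrade the range of $s$ from $[h-1]$ to $[h]$. For this I would use that $V=\dot{\cup}_{s\in[h]}V_s$ is a partition, so $\sum_{s=1}^h f^{(s)}_i=1$ for every $i$, giving $\sum_{s=1}^h |V_s\cap C_l|=|C_l|$ and $\sum_{s=1}^h |V_s|=n$. Summing the proportionality condition $|V_s\cap C_l|/|C_l|=|V_s|/n$ over $s=1,\ldots,h-1$ and comparing with these two identities forces the $s=h$ case as well; hence requiring the constraint for $s\in[h-1]$ already yields proportionality for all $h$ groups, and conversely the full proportionality trivially implies the constraint for $s\in[h-1]$.

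I expect no genuine obstacle here: the argument is a one-line computation identical in structure to that of Lemma~\ref{lemma_most_gen_const}, the point being that the per-cluster normalization $1/\sqrt{\vol(C_l)}$, whatever its value, is a strictly positive constant over $C_l$ and therefore cancels once the sum is set to zero. In other words, the equivalence is insensitive to whether we use the RatioCut encoding \eqref{enc_clustering} or the NCut encoding \eqref{enc_clustering_normalized}; only the support pattern of the columns of $H$ and the fact that $f^{(s)}$ is the indicator of $V_s$ matter. The sole bookkeeping care is to record that $\vol(C_l)>0$ (which holds since $G$ has no isolated vertices, an assumption already made for normalized SC) so that dividing through by the normalization constant is legitimate.
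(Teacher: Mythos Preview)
Your proof is correct and follows essentially the same approach as the paper: both compute the sum explicitly to obtain $\frac{|V_s\cap C_l|}{\sqrt{\vol(C_l)}}-\frac{|V_s|\cdot|C_l|}{n\sqrt{\vol(C_l)}}$ and then invoke $|C_l|=\sum_{s=1}^h |V_s\cap C_l|$ to handle the $s=h$ case. Your write-up is simply a more expanded version of the paper's two-line argument.
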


\begin{proof}
This simply follows from
\begin{align*}
 \sum_{i=1}^n \left(f^{(s)}_i-\frac{|V_s|}{n}\right) H_{il}=\frac{|V_s\cap C_l|}{\sqrt{\vol(C_l)}}-\frac{|V_s|\cdot|C_l|}{n\sqrt{\vol(C_l)}}
 \end{align*}
and
$|C_l|=\sum_{s=1}^h |V_s\cap C_l|$.
\end{proof}

\begin{algorithm}[t!]
   \caption{
   Normalized SC with fairness constraints
   }\label{fair_SC_alg_normalized}
\begin{algorithmic}
   \STATE {\bfseries Input:} weighted adjacency matrix $W\in\R^{n\times n}$
   \textbf{(the underlying graph must not contain any isolated vertices)};
   $k\in\N$; group-membership vectors  $f^{(s)}\in \{0,1\}^n$, $s\in [h]$

\vspace{1mm}
   \STATE {\bfseries Output:} a clustering of $[n]$ into $k$ clusters

   \begin{itemize}[leftmargin=*]
   \setlength{\itemsep}{-2pt}
   \item compute
the Laplacian matrix $L=D-W$ with the degree matrix $D$
\item build the matrix $F$ that has the vectors $f^{(s)}-\frac{|V_s|}{n}\cdot\mathbf{1}_n$, $s\in[h-1]$, as columns
\item compute a matrix $Z$  whose columns form an orthonormal basis of the nullspace of $F^T$
 \item compute the square root $Q$ of $Z^TDZ$
  \item compute some  orthonormal eigenvectors corresponding to the $k$ smallest eigenvalues (respecting multiplicities) of $Q^{-1}Z^TLZQ^{-1}$
  \item let $X$ be a matrix containing these eigenvectors as columns
  \item apply $k$-means clustering to the rows of $H=ZQ^{-1}X\in\R^{n\times k}$, which yields a clustering   of $[n]$ into $k$ clusters
   \end{itemize}
\end{algorithmic}
\end{algorithm}

Lemma \ref{lemma_most_gen_const_normalized} suggests that in a fair version of normalized spectral clustering, rather than solving \eqref{relaxed_problem_normalized}, we should solve
\begin{align}\label{relaxed_problem_normalized_fair}
 \min_{H\in\R^{n\times k}}\trace(H^T L H) ~~ \text{subject to~$H^TDH=I_k$~and $F^TH=0_{(h-1)\times k}$},
\end{align}
where $F\in\R^{n\times (h-1)}$ is the matrix that has the vectors $f^{(s)}-(|V_s|/n)\cdot\mathbf{1}_n$, $s\in[h-1]$, as columns (just as in Section~\ref{sec_Fair_SC}).
It is $\rank(F)=\rank(F^T)=h-1$ and
we need to assume that $k\leq n-h+1$ since otherwise \eqref{relaxed_problem_normalized_fair} does not have any solution.
 Let $Z\in \R^{n\times(n-h+1)}$ be a matrix whose columns form an orthonormal basis of the nullspace of $F^T$. We  substitute $H=ZY$ for  $Y\in \R^{(n-h+1)\times k}$, and then problem \eqref{relaxed_problem_normalized_fair} becomes
\begin{align}\label{relaxed_problem_normalized_fair_subst1}
 \min_{Y\in \R^{(n-h+1)\times k}}\trace(Y^TZ^T LZY) ~~ \text{subject to~$Y^TZ^TDZY=I_k$}.
\end{align}
Assuming that $G$ does not contain any isolated vertices, $Z^TDZ$ is
positive definite and hence has a
positive definite square root, that is there exists a
positive definite $Q\in \R^{(n-h+1)\times (n-h+1)}$ with  $Z^TDZ=Q^2$. We can substitute $Y=Q^{-1}X$ for   $X\in \R^{(n-h+1)\times k}$,
and then
problem  \eqref{relaxed_problem_normalized_fair_subst1} becomes
\begin{align}\label{relaxed_problem_normalized_fair_subst2}
 \min_{X\in \R^{(n-h+1)\times k}}\trace(X^TQ^{-1}Z^T LZQ^{-1}X) ~~ \text{subject to~$X^TX=I_k$}.
\end{align}
A  solution to \eqref{relaxed_problem_normalized_fair_subst2} is given by a matrix $X$ that contains some
orthonormal eigenvectors corresponding to the $k$ smallest eigenvalues
(respecting multiplicities) of $Q^{-1}Z^T LZQ^{-1}$ as columns. This gives rise to our fair version of normalized spectral clustering as stated in Algorithm~\ref{fair_SC_alg_normalized}.

%
%

\vspace{6mm}
\section{Computational Complexity of our Algorithms}\label{appendix_complexity}

The costs of standard
spectral clustering
(e.g., Algorithm~\ref{SC_alg}) are
dominated by the complexity of the eigenvector computations and are
 commonly stated to be, in general, in $\mathcal{O}(n^3)$ regarding time and $\mathcal{O}(n^2)$ regarding space for an arbitrary number of clusters~$k$, unless approximations are applied \citep{yan2009,li2011}. In addition to the computations performed in
 Algorithm~\ref{SC_alg},
 in 
 Algorithm~\ref{fair_SC_alg} and Algorithm~\ref{fair_SC_alg_normalized}
 we have to compute an orthonormal basis of the nullspace of $F^T$, perform some matrix multiplications, and (only for Algorithm~\ref{fair_SC_alg_normalized})
 compute the square root of an   $(n-h+1)\times  (n-h+1)$-matrix and the inverse of this square root.
 All these computations can be done in $\mathcal{O}(n^3)$ regarding time and $\mathcal{O}(n^2)$ regarding space (an orthonormal basis of the nullspace
of $F^T$
 can be computed by means of an SVD; see, e.g., \citealp{golub_book}), and hence
our algorithms have  the same worst-case complexity as standard
spectral clustering.
On the other hand, if the graph~$G$, and thus the Laplacian
matrix~$L$, is sparse or $k$ is small, then the eigenvector computations in Algorithm~\ref{SC_alg} can be done more efficiently than
with cubic running time \citep{bai2000}. This is not the
case for our algorithms as stated. However, one could apply one of the techniques suggested in the existing literature
on constrained spectral clustering to speed up computation (e.g., \citealp{stella_journal_2004}, or \citealp{xu2009}; see Section~\ref{sec_related_work} of the main paper).
With the implementations as stated, in our experiments in Section~\ref{sec_experiments} of the main paper we observe that Algorithm~\ref{fair_SC_alg} has a similar running time as standard normalized spectral clustering
while the running time of Algorithm~\ref{fair_SC_alg_normalized}
is significantly higher.

\vspace{4mm}
\section{Proof of Theorem \ref{theorem_SB_model}}\label{appendix_proofs}

We split the proof of Theorem~\ref{theorem_SB_model} into four parts. In the first part, we analyze the eigenvalues and eigenvectors of the expected adjacency matrix~$\mathcal{W}$ and of the matrix
$Z^T\mathcal{L}Z$, where $\mathcal{L}$ is the expected Laplacian matrix and $Z$ is the matrix computed
in the execution of Algorithm~\ref{fair_SC_alg} or Algorithm~\ref{fair_SC_alg_normalized}. In the second part, we study the
deviation of the observed matrix $Z^TLZ$ from the expected matrix $Z^T\mathcal{L}Z$.
In the third part, we use the results from the first and the second part to prove Theorem~\ref{theorem_SB_model} for Algorithm~\ref{fair_SC_alg} (unnormalized SC with fairness constraints).  In the fourth part, we prove Theorem~\ref{theorem_SB_model} for Algorithm~\ref{fair_SC_alg_normalized} (normalized SC with fairness constraints).

\paragraph{Notation}
For $x \in\R^n$, by $\|x\|$ we denote the Euclidean norm of $x$, that is $\|x\|=\sqrt{x_1^2+\ldots+x_n^2}$. For $A\in \R^{n\times m}$, by $\|A\|$ we denote the
operator norm (also known as spectral norm) and by $\|A\|_F$ the Frobenius norm of $A$.
It is
\begin{align}\label{op_norm}
 \|A\|=\max_{x\in\R^m:\|x\|=1}\|Ax\|=\sqrt{\lambda_{\text{max}}(A^TA)},
\end{align}
where $\lambda_{\text{max}}(A^TA)$ is the largest eigenvalue of $A^TA$, and
\begin{align}\label{frob_norm}
 \|A\|_F=\sqrt{\sum_{i=1}^n \sum_{j=1}^m A_{ij}^{\;2}}=\sqrt{\trace(A^TA)}.
\end{align}
Note that
for a symmetric matrix $A\in\R^{n\times n}$ with $A=A^T$ we have $\|A\|=\max\{|\lambda_i|:\lambda_i\text{~is an eigenvalue of~}A\}$. It follows from
\eqref{op_norm} and \eqref{frob_norm} that for any $A\in \R^{n\times m}$ with rank at most $r$ we have
\begin{align}\label{absch_frob_norm}
\|A\|\leq  \|A\|_F \leq \sqrt{r}\|A\|.
\end{align}
We use $\const(X)$ to denote a universal constant that only depends on $X$ and that may change from line to line.

\vspace{8mm}
\textbf{Part 1: Eigenvalues and eigenvectors of $\mathcal{W}$ and of $Z^T\mathcal{L}Z$}

 Assuming the $n$ vertices $1,\ldots,n$ are sorted in a way such that
\begin{align}\label{order_vertices}
\begin{split}
1,\ldots,\frac{n}{kh}\in C_1\cap V_1,~~~~\frac{n}{kh}+1,\ldots,\frac{2n}{kh} \in C_1\cap V_2,~~~~\ldots~\ldots~\ldots,~~~~\frac{(h-1)n}{kh}+1,\ldots,\frac{n}{k}\in C_1\cap V_h,\\
\frac{n}{k}+1,\ldots,\frac{n}{k}+\frac{n}{kh}\in C_2\cap V_1,~~~~\ldots~\ldots~\ldots,~~~~\frac{n}{k}+\frac{(h-1)n}{kh}+1,\ldots,\frac{2n}{k}\in C_2\cap V_h,\\
\vdots~~~~~~~~~~~~~~~~~~~~~~~~~~~~~~~~~~~~~~~~~~~~~~~~~~~~~~~~~~~~~~~~~~~~~~~~~~~~~~~~~~~\\
\frac{(k-1)n}{k}+1,\ldots,\frac{(k-1)n}{k}+\frac{n}{kh}\in C_k\cap V_1,~~~~\ldots~\ldots~\ldots,~~~~\frac{(k-1)n}{k}+\frac{(h-1)n}{kh}+1,\ldots,n\in C_k\cap V_h,\\
\end{split}
\end{align}
the expected adjacency matrix $\mathcal{W}\in\R^{n\times n}$ is given by the block matrix
\begin{align}\label{qwqwqwqwqw}
\renewcommand\arraystretch{1.5}
\mathcal{W}=\underbrace{\begin{pmatrix}
[R] & [S] & [S] & [S] & \cdots & [S] & [S]\\
[S] & [R] & [S] & [S] & \cdots & [S] & [S]\\
[S] & [S] & [R] & [S] & \cdots & [S] & [S]\\
&\vdots&& \ddots &&\vdots&\\
[S] & [S] & [S] & [S] & \cdots & [S] & [R]\\
\end{pmatrix}}_{=:\widetilde{\mathcal{W}}}-a I_n,
\end{align}
where $[R]\in\{a,c\}^{\frac{n}{k}\times \frac{n}{k}}$ and $[S]\in\{b,d\}^{\frac{n}{k}\times \frac{n}{k}}$ are themselves block matrices
\begin{align*}
\renewcommand\arraystretch{1.5}
[R]=\begin{pmatrix}
[a] & [c] & [c] & [c] & \cdots & [c] & [c]\\
[c] & [a] & [c] & [c] & \cdots & [c] & [c]\\
[c] & [c] & [a] & [c] & \cdots & [c] & [c]\\
&\vdots&& \ddots &&\vdots&\\
[c] & [c] & [c] & [c] & \cdots & [c] & [a]\\
\end{pmatrix},\qquad
[S]=\begin{pmatrix}
[b] & [d] & [d] & [d] & \cdots & [d] & [d]\\
[d] & [b] & [d] & [d] & \cdots & [d] & [d]\\
[d] & [d] & [b] & [d] & \cdots & [d] & [d]\\
&\vdots&& \ddots &&\vdots&\\
[d] & [d] & [d] & [d] & \cdots & [d] & [b]\\
\end{pmatrix}
\end{align*}
with
$[a]$, $[b]$, $[c]$ and $[d]$ being matrices of size $\frac{n}{kh}\times \frac{n}{kh}$ with all entries equaling $a$, $b$, $c$ and $d$, respectively. We denote the matrix $\mathcal{W}+aI_n$ with $\mathcal{W}$
as in \eqref{qwqwqwqwqw} by $\widetilde{\mathcal{W}}$.
 If the vertices are not ordered as in \eqref{order_vertices}, the expected adjacency matrix~$\mathcal{W}$ is rather given by
$\mathcal{W}=P^T\widetilde{\mathcal{W}}P-aI_n$ for some  permutation matrix $P\in\{0,1\}^{n\times n}$ with $PP^T=P^TP=I_n$.
Note that $v\in\R^n$ is an eigenvector of $\widetilde{\mathcal{W}}$ with eigenvalue $\lambda$ if and only if $P^Tv$ is an eigenvector of $P^T\widetilde{\mathcal{W}}P$ with eigenvalue $\lambda$.
Keeping track of the permutation matrices $P$ and $P^T$ throughout the proof of  Theorem~\ref{theorem_SB_model} does not impose any technical challenges, but makes the writing more complicated. Hence, for simplicity and without loss of generality, we assume in the following
that the vertices are ordered as in \eqref{order_vertices}.

The following lemma characterizes the eigenvalues and eigenvectors of $\widetilde{\mathcal{W}}$. Clearly, this also characterizes the eigenvalues and eigenvectors of $\mathcal{W}$:
$v\in\R^n$ is an eigenvector of $\widetilde{\mathcal{W}}$ with eigenvalue $\lambda$ if and only if $v$ is an eigenvector of $\mathcal{W}$ with eigenvalue $\lambda-a$.

\begin{lemma}\label{lemma_spectrum_h_groups}
Assuming that $a>b>c>d\geq 0$, the matrix $\widetilde{\mathcal{W}}$ has rank $kh$ or rank $k+h-1$ (the latter is true if and only if $a-c=b-d$). It has the following eigenvalues $\lambda_1,\ldots,\lambda_n$:
\begin{align}\label{eigenvalues_h_groups}
\begin{split}
\lambda_1&=\frac{n}{kh}\left[\big(a+(h-1)c\big)+(k-1)\big(b+(h-1)d\big)\right],\\
\lambda_2=\lambda_3=\ldots=\lambda_h&=\frac{n}{kh}\left[\big(a-c\big)+(k-1)\big(b-d\big)\right],\\
\lambda_{h+1}=\lambda_{h+2}=\ldots=\lambda_{h+k-1}&=\frac{n}{kh}\left[\big(a+(h-1)c\big)-\big(b+(h-1)d\big)\right],\\
\lambda_{h+k},\lambda_{h+k+1}=\ldots=\lambda_{hk}&=\frac{n}{kh}\left[\big(a-c\big)-\big(b-d\big)\right],\\
\lambda_{hk+1}=\lambda_{hk+2}=\ldots=\lambda_{n}&=0.
\end{split}
\end{align}
It is $\lambda_1>\lambda_2=\ldots=\lambda_h>0$ as well as  $\lambda_1>\lambda_{h+1}=\ldots=\lambda_{h+k-1}>0$ and  $\lambda_2=\ldots=\lambda_h>|\lambda_{h+k}|=\ldots=|\lambda_{hk}|$
as well as $\lambda_{h+1}=\ldots=\lambda_{h+k-1}>|\lambda_{h+k}|=\ldots=|\lambda_{hk}|$.

\vspace{2mm}
An eigenvector corresponding to $\lambda_1$ is given by $v_1=\mathbf{1}_n$. The eigenspace corresponding to the eigenvalue $\lambda_2=\ldots=\lambda_h$ is spanned by the vectors $v_2,\ldots,v_h$ with
\begin{align*}
\renewcommand\arraystretch{1.5}
v_2=\begin{pmatrix}
[1]\\
[-\frac{1}{h-1}]\\
[-\frac{1}{h-1}]\\
[-\frac{1}{h-1}]\\
\vdots\\
[-\frac{1}{h-1}]\\
\\
[1]\\
[-\frac{1}{h-1}]\\
[-\frac{1}{h-1}]\\
[-\frac{1}{h-1}]\\
\vdots\\
[-\frac{1}{h-1}]\\
\\
\vdots\\
\\
[1]\\
[-\frac{1}{h-1}]\\
[-\frac{1}{h-1}]\\
[-\frac{1}{h-1}]\\
\vdots\\
[-\frac{1}{h-1}]
\end{pmatrix},~~~
v_3=
\begin{pmatrix}
[-\frac{1}{h-1}]\\
[1]\\
[-\frac{1}{h-1}]\\
[-\frac{1}{h-1}]\\
\vdots\\
[-\frac{1}{h-1}]\\
\\
[-\frac{1}{h-1}]\\
[1]\\
[-\frac{1}{h-1}]\\
[-\frac{1}{h-1}]\\
\vdots\\
[-\frac{1}{h-1}]\\
\\
\vdots\\
\\
[-\frac{1}{h-1}]\\
[1]\\
[-\frac{1}{h-1}]\\
[-\frac{1}{h-1}]\\
\vdots\\
[-\frac{1}{h-1}]
\end{pmatrix},~~~~\cdots,~~~~
v_h=\begin{pmatrix}
[-\frac{1}{h-1}]\\
[-\frac{1}{h-1}]\\
\vdots\\
[-\frac{1}{h-1}]\\
[1]\\
[-\frac{1}{h-1}]\\
\\
[-\frac{1}{h-1}]\\
[-\frac{1}{h-1}]\\
\vdots\\
[-\frac{1}{h-1}]\\
[1]\\
[-\frac{1}{h-1}]\\
\\
\vdots\\
\\
[-\frac{1}{h-1}]\\
[-\frac{1}{h-1}]\\
\vdots\\
[-\frac{1}{h-1}]\\
[1]\\
[-\frac{1}{h-1}]
\end{pmatrix},\\
\end{align*}
where for $z\in\R$, by $[z]$ we denote a block of size $\frac{n}{kh}$ with all entries equaling $z$.
The eigenspace corresponding to the eigenvalue $\lambda_{h+1}=\ldots=\lambda_{h+k-1}$ is spanned by the vectors $v_{h+1},\ldots,v_{h+k-1}$ with
\begin{align*}
\renewcommand\arraystretch{1.5}
v_{h+1}=\begin{pmatrix}
[1]\\
[-\frac{1}{k-1}]\\
[-\frac{1}{k-1}]\\
\vdots\\
[-\frac{1}{k-1}]\\
[-\frac{1}{k-1}]\\
[-\frac{1}{k-1}]
\end{pmatrix},~~~
v_{h+2}\begin{pmatrix}
[-\frac{1}{k-1}]\\
[1]\\
[-\frac{1}{k-1}]\\
\vdots\\
[-\frac{1}{k-1}]\\
[-\frac{1}{k-1}]\\
[-\frac{1}{k-1}]
\end{pmatrix},~~~~\cdots,~~~~
v_{h+k-1}=\begin{pmatrix}
[-\frac{1}{k-1}]\\
[-\frac{1}{k-1}]\\
[-\frac{1}{k-1}]\\
\vdots\\
[-\frac{1}{k-1}]\\
[1]\\
[-\frac{1}{k-1}]
\end{pmatrix},\\
\end{align*}
where for $z\in\R$, by $[z]$ we denote a block of size $\frac{n}{k}$ with all entries equaling $z$.
The eigenspace corresponding to the eigenvalue $\lambda_{h+k}=\ldots=\lambda_{hk}$ is spanned by the vectors $v_{h+k},\ldots,v_{hk}$ with
\begin{align*}
\renewcommand\arraystretch{1.5}
\underbrace{\underbrace{
\underbrace{\begin{pmatrix}
[1]\\
[-\frac{1}{h-1}]\\
[-\frac{1}{h-1}]\\
\vdots\\
[-\frac{1}{h-1}]\\
[-\frac{1}{h-1}]\\
\\
[-1]\\
[\frac{1}{h-1}]\\
[\frac{1}{h-1}]\\
\vdots\\
[\frac{1}{h-1}]\\
[\frac{1}{h-1}]\\
\\
[0]\\
\vdots\\
[0]\\
\\
[0]\\
\vdots\\
[0]]\\
\\
\vdots
\end{pmatrix}}_{=v_{h+k}},
\underbrace{\begin{pmatrix}
[-\frac{1}{h-1}]\\
[1]\\
[-\frac{1}{h-1}]\\
\vdots\\
[-\frac{1}{h-1}]\\
[-\frac{1}{h-1}]\\
\\
[\frac{1}{h-1}]\\
[-1]\\
[\frac{1}{h-1}]\\
\vdots\\
[\frac{1}{h-1}]\\
[\frac{1}{h-1}]\\
\\
[0]\\
\vdots\\
[0]\\
\\
[0]\\
\vdots\\
[0]\\
\\
\vdots
\end{pmatrix}}_{=v_{h+k+1}},\cdots,
\underbrace{\begin{pmatrix}
[-\frac{1}{h-1}]\\
[-\frac{1}{h-1}]\\
\vdots\\
[-\frac{1}{h-1}]\\
[1]\\
[-\frac{1}{h-1}]\\
\\
[\frac{1}{h-1}]\\
[\frac{1}{h-1}]\\
\vdots\\
[\frac{1}{h-1}]\\
[-1]\\
[\frac{1}{h-1}]\\
\\
[0]\\
\vdots\\
[0]\\
\\
[0]\\
\vdots\\
[0]\\
\\
\vdots
\end{pmatrix}}_{=v_{h+k+(h-2)}}}_{h-1~\text{many}},~~~~~
\underbrace{\underbrace{\begin{pmatrix}
[1]\\
[-\frac{1}{h-1}]\\
[-\frac{1}{h-1}]\\
\vdots\\
[-\frac{1}{h-1}]\\
[-\frac{1}{h-1}]\\
\\
[0]\\
\vdots\\
[0]\\
\\
[-1]\\
[\frac{1}{h-1}]\\
[\frac{1}{h-1}]\\
\vdots\\
[\frac{1}{h-1}]\\
[\frac{1}{h-1}]\\
\\
[0]\\
\vdots\\
[0]\\
\\
\vdots
\end{pmatrix}}_{=v_{h+k+(h-1)}},
\cdots,
\underbrace{\begin{pmatrix}
[-\frac{1}{h-1}]\\
[-\frac{1}{h-1}]\\
\vdots\\
[-\frac{1}{h-1}]\\
[1]\\
[-\frac{1}{h-1}]\\
\\
[0]\\
\vdots\\
[0]\\
\\
[\frac{1}{h-1}]\\
[\frac{1}{h-1}]\\
\vdots\\
[\frac{1}{h-1}]\\
[-1]\\
[\frac{1}{h-1}]\\
\\
[0]\\
\vdots\\
[0]\\
\\
\vdots
\end{pmatrix}}_{=v_{h+k+(2h-3)}}}_{h-1~\text{many}},~~~~~~\cdots~~~~~~
\underbrace{\underbrace{\begin{pmatrix}
[1]\\
[-\frac{1}{h-1}]\\
[-\frac{1}{h-1}]\\
\vdots\\
[-\frac{1}{h-1}]\\
[-\frac{1}{h-1}]\\
\\
[0]\\
\vdots\\
[0]\\
\\
\vdots\\
\\
[0]\\
\vdots\\
[0]\\
\\
[-1]\\
[\frac{1}{h-1}]\\
[\frac{1}{h-1}]\\
\vdots\\
[\frac{1}{h-1}]\\
[\frac{1}{h-1}]\\
\end{pmatrix}}_{=v_{hk-h+2}},
\cdots,
\underbrace{\begin{pmatrix}
[-\frac{1}{h-1}]\\
[-\frac{1}{h-1}]\\
\vdots\\
[-\frac{1}{h-1}]\\
[1]\\
[-\frac{1}{h-1}]\\
\\
[0]\\
\vdots\\
[0]\\
\\
\vdots\\
\\
[0]\\
\vdots\\
[0]\\
\\
[\frac{1}{h-1}]\\
[\frac{1}{h-1}]\\
\vdots\\
[\frac{1}{h-1}]\\
[-1]\\
[\frac{1}{h-1}]\\
\end{pmatrix}}_{=v_{hk}}}_{h-1~\text{many}}}_{(k-1)(h-1)~\text{many}},
\end{align*}
where for $z\in\R$, by $[z]$ we denote a block of size $\frac{n}{kh}$ with all entries equaling $z$.
\end{lemma}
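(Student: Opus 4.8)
The plan is to recognize that the block structure of $\widetilde{\mathcal{W}}$ is exactly a Kronecker product, which reduces the entire computation to the spectra of three tiny matrices. Write $J_m$ for the all-ones $m\times m$ matrix and set $m=\frac{n}{kh}$, so that each constant block satisfies $[a]=aJ_m$, and similarly for $[b],[c],[d]$. Then the inner block is $[R]=M_R\otimes J_m$ with $M_R=(a-c)I_h+cJ_h$, and $[S]=M_S\otimes J_m$ with $M_S=(b-d)I_h+dJ_h$. Assembling the outer $k\times k$ pattern ($[R]$ on the diagonal, $[S]$ off it) gives
\begin{align*}
\widetilde{\mathcal{W}}=\big[\,I_k\otimes M_R+(J_k-I_k)\otimes M_S\,\big]\otimes J_m=:N\otimes J_m .
\end{align*}
First I would verify this identity by a direct comparison of blocks (routine, using associativity of $\otimes$), so that everything reduces to understanding $N$ and $J_m$.

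Next I would diagonalize in the common eigenbasis of all the factors. Every matrix in sight is a linear combination of an identity and an all-ones matrix in each tensor slot, so they are simultaneously diagonalized by the orthonormal basis built from $\mathbf{1}$ and its orthogonal complement in each of $\R^k,\R^h,\R^m$. On a basis vector $x\otimes y\otimes z$ the factor $J_m$ contributes $m$ when $z=\mathbf{1}_m$ and $0$ otherwise, which instantly produces the $n-kh$ zero eigenvalues (any $z\perp\mathbf{1}_m$). For $z=\mathbf{1}_m$ the eigenvalue is $m$ times the eigenvalue of $N$ on $x\otimes y$, and since $N=I_k\otimes(M_R-M_S)+J_k\otimes M_S$, the action splits into four cases according to whether $x=\mathbf{1}_k$ or $x\perp\mathbf{1}_k$ (so $J_k$ acts as $k$ or $0$) and whether $y=\mathbf{1}_h$ or $y\perp\mathbf{1}_h$ ($J_h$ acts as $h$ or $0$). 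Substituting $M_R,M_S$ and $\frac{n}{kh}=m$ yields precisely the four values $\lambda_1,\lambda_2,\lambda_{h+1},\lambda_{h+k}$ with multiplicities $1,\,h-1,\,k-1,\,(h-1)(k-1)$, whose sum together with $n-kh$ is $n$. The eigenvectors are the tensors $x\otimes y\otimes\mathbf{1}_m$; reading these off gives $v_1=\mathbf{1}_k\otimes\mathbf{1}_h\otimes\mathbf{1}_m=\mathbf{1}_n$, and the stated block patterns $(1,-\tfrac1{h-1},\dots)$ and $(1,-\tfrac1{k-1},\dots)$ are exactly representatives of $\mathbf{1}_h^\perp$ and $\mathbf{1}_k^\perp$, matching the lemma.

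Given the eigenvalues, the remaining claims are short. The rank of $N\otimes J_m$ is $\rank(N)\cdot\rank(J_m)=\rank(N)$, which equals the number of nonzero eigenvalues of the symmetric matrix $N$. Under $a>b>c>d\ge 0$ one checks $\mu_1,\mu_2,\mu_3>0$ directly, whereas $\mu_4\propto(a-c)-(b-d)$ vanishes exactly when $a-c=b-d$; hence the rank is $kh$ in general and drops by the multiplicity $(h-1)(k-1)$ to $k+h-1$ in that degenerate case. The strict orderings follow by forming the relevant differences, each of which factors into a product of positive quantities; for example $\lambda_1-\lambda_2=\frac{n}{k}\,[c+(k-1)d]$, and $\lambda_2-|\lambda_{h+k}|$ reduces (up to the factor $\frac{n}{kh}$) to $k(b-d)$ when $\mu_4\ge0$ and to $2(a-c)+(k-2)(b-d)$ when $\mu_4<0$.

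The main obstacle is not conceptual but bookkeeping: establishing the Kronecker identity cleanly and, above all, checking that the explicitly tabulated vectors $v_{h+k},\dots,v_{hk}$ really form an independent spanning set of the $(h-1)(k-1)$-dimensional space $\mathbf{1}_k^\perp\otimes\mathbf{1}_h^\perp\otimes\mathbf{1}_m$. I would handle this by identifying the $(k-1)(h-1)$ pairs $(x,y)$ to which they correspond (each $x$ of the form $e_1-e_j$ and each $y$ one of the chosen generators of $\mathbf{1}_h^\perp$) and deducing linear independence from that of these generators of $\mathbf{1}_k^\perp$ and $\mathbf{1}_h^\perp$, rather than by a direct Gram-matrix computation.
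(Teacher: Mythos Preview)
Your proof is correct and takes a genuinely different, more structural route than the paper. The paper proceeds by \emph{direct verification}: it observes that $\widetilde{\mathcal{W}}$ has at most $kh$ distinct columns (so rank $\le kh$), states that each displayed vector $v_i$ can be checked by hand to satisfy $\widetilde{\mathcal{W}}v_i=\lambda_i v_i$, and then proves linear independence of each block of vectors by an explicit coordinate argument (assume $\sum\alpha_j v_j=0$, read off two well-chosen coordinates, conclude $\alpha_j=0$). The inequalities among the $\lambda_i$ are likewise established by writing out each difference and observing positivity term by term.

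Your Kronecker decomposition $\widetilde{\mathcal{W}}=\big[I_k\otimes(M_R-M_S)+J_k\otimes M_S\big]\otimes J_m$ replaces all of this with the spectra of $J_k$, $J_h$, $J_m$, so the four eigenvalues and their multiplicities drop out of the $2\times2\times2$ case split on whether each tensor factor lies in the $\mathbf{1}$-direction or its complement. This explains \emph{why} the spectrum has the form it does and makes linear independence of $v_{h+k},\dots,v_{hk}$ automatic: they are $x_j\otimes y_s\otimes\mathbf{1}_m$ with $x_j=e_1-e_{j+1}$ and $y_s$ the displayed generators of $\mathbf{1}_h^\perp$, and tensor products of independent families are independent. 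The paper's approach requires no algebraic machinery beyond matrix multiplication; yours requires knowing Kronecker products but is shorter, generalizes more readily (e.g., to unequal block sizes via weighted versions), and makes the rank dichotomy transparent since $\rank(N\otimes J_m)=\rank N$. One minor point: your phrase ``factors into a product of positive quantities'' is slightly loose for $\lambda_1-\lambda_2=\frac{n}{k}[c+(k-1)d]$, which is a \emph{sum} of nonnegative terms with $c>0$; but the conclusion is of course correct.
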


\begin{proof}
The matrix $\widetilde{\mathcal{W}}$ has only $kh$ different columns and hence $\rank\widetilde{\mathcal{W}}\leq kh$ and there are at most $kh$ many non-zero eigenvalues.
The statement about the rank of $\widetilde{\mathcal{W}}$ follows from the statement about the eigenvalues of $\widetilde{\mathcal{W}}$.

\vspace{2mm}
It is easy to verify that any of the vectors $v_i$ is actually an eigenvector of $\widetilde{\mathcal{W}}$ corresponding to eigenvalue $\lambda_i$, $i\in[hk]$. We need to show that the vectors
 $v_2,\ldots,v_h$,  the vectors $v_{h+1},\ldots,v_{h+k-1}$, as well as the vectors $v_{h+k},\ldots,v_{hk}$ are linearly independent.
 For example, let us show that $v_2,\ldots,v_h$ are linearly independent: assume that
$\sum_{j\in\{2,\ldots,h\}}\alpha_j v_j=0$ for some $\alpha_j\in\R$. We need to show that $\alpha_j=0$, $j\in\{2,\ldots,h\}$.
Looking at the $n$-th coordinate of $\sum_{j\in\{2,\ldots,h\}}\alpha_j v_j$, we infer that $0=-\frac{1}{h-1}\sum_{j\in\{2,\ldots,h\}}\alpha_j$. Looking at a coordinate where $v_i$ is $1$, we infer that
 $$0=\alpha_i-\frac{1}{h-1}\sum_{j\in\{2,\ldots,h\}\setminus\{i\}}\alpha_j=\alpha_i\left(1+\frac{1}{h-1}\right)-\frac{1}{h-1}\sum_{j\in\{2,\ldots,h\}}\alpha_j=\alpha_i\left(1+\frac{1}{h-1}\right)$$
and hence $\alpha_i=0$, $i\in\{2,\ldots,h\}$. Similarly, we can show that the vectors $v_{h+1},\ldots,v_{h+k-1}$ as well as the vectors $v_{h+k},\ldots,v_{hk}$ are linearly independent.

\vspace{2mm}
Since we assume that $a>b>c>d\geq 0$, we  have
\begin{align*}
\big(a+(h-1)c\big)+(k-1)\big(b+(h-1)d\big)>\big(a-c\big)+(k-1)\big(b-d\big)>0
\end{align*}
and
\begin{align*}
\big(a+(h-1)c\big)+(k-1)\big(b+(h-1)d\big)>\big(a+(h-1)c\big)-\big(b+(h-1)d\big)=(a-b)+(h-1)(c-d)>0,
\end{align*}
which shows that $\lambda_1>\lambda_2=\ldots=\lambda_h>0$ and  $\lambda_1>\lambda_{h+1}=\ldots=\lambda_{h+k-1}>0$.
It is
\begin{align*}
\big(a-c\big)+(k-1)\big(b-d\big)>(a-c)-(b-d) \qquad\text{and}\qquad
\big(a-c\big)+(k-1)\big(b-d\big)>-(a-c)+(b-d),
\end{align*}
and also
\begin{align*}
\big(a+(h-1)c\big)-\big(b+(h-1)d\big)&=(a-b)+(h-1)(c-d)\\
&\geq (a-b)+(c-d) >(a-b)+(d-c)=(a-c)-(b-d)
\end{align*}
and
\begin{align*}
\big(a+(h-1)c\big)-\big(b+(h-1)d\big)&=(a-b)+(h-1)(c-d)\\
&\geq (a-b)+(c-d)>(b-a)+(c-d)=-(a-c)+(b-d),
\end{align*}
which shows $\lambda_2=\ldots=\lambda_h>|\lambda_{h+k}|=\ldots=|\lambda_{hk}|$ and $\lambda_{h+1}=\ldots=\lambda_{h+k-1}>|\lambda_{h+k}|=\ldots=|\lambda_{hk}|$.
\end{proof}

\vspace{6mm}
Note that we have
\begin{align}\label{char_2nd_eigenvecs}
 f^{(s)}-\frac{|V_s|}{n}\cdot \mathbf{1}_n=  f^{(s)}-\frac{1}{h}\cdot \mathbf{1}_n=\frac{h-1}{h}v_{1+s}, \quad s\in [h-1],
\end{align}
where $f^{(s)}$ is the group-membership vector of $V_s$ and  $f^{(s)}-\frac{|V_s|}{n}$ is the vector  encountered in the second step of Algorithm~\ref{fair_SC_alg}
or Algorithm~\ref{fair_SC_alg_normalized}.

The next lemma provides an orthonormal basis of the eigenspace associated with eigenvalue $\lambda_{h+1}=\ldots=\lambda_{h+k-1}$.

\begin{lemma}\label{lemma_spectrum_h_groups_normalized}
An orthonormal basis of the eigenspace of  $\widetilde{\mathcal{W}}$ corresponding to the eigenvalue $\lambda_{h+1}=\ldots=\lambda_{h+k-1}$ is given by $\{n_1,\ldots,n_{k-1}\}$ with
\begin{align}\label{def_ni}
\renewcommand\arraystretch{1.5}
n_1=
\begin{pmatrix}
[(k-1)q_1]\\
[-q_1]\\
[-q_1]\\
[-q_1]\\
\vdots\\
[-q_1]\\
[-q_1]
\end{pmatrix},~~~
n_2=
\begin{pmatrix}
[0]\\
[(k-2)q_2]\\
[-q_2]\\
[-q_2]\\
\vdots\\
[-q_2]\\
[-q_2]
\end{pmatrix},~~~\ldots,~~~
n_i=
\begin{pmatrix}
[0]\\
\vdots\\
[0]\\
[(k-i)q_i]\\
[-q_i]\\
\vdots\\
[-q_i]
\end{pmatrix},~~~\ldots,~~~
n_{k-1}=\begin{pmatrix}
[0]\\
[0]\\
\vdots\\
[0]\\
[0]\\
[q_{k-1}]\\
[-q_{k-1}]
\end{pmatrix},
\end{align}
where for $z\in\R$, by $[z]$ we denote a block of size $\frac{n}{k}$ with all entries equaling $z$ and
\begin{align}\label{def_qi}
q_i=\frac{1}{\sqrt{\left(\frac{n}{k}(k-i)^2 +\frac{n}{k}(k-i)\right)}}, \quad i\in[k-1].
\end{align}
\end{lemma}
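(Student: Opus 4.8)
The plan is to verify three things: that each $n_i$ lies in the eigenspace associated with $\lambda_{h+1}=\ldots=\lambda_{h+k-1}$, that the family $\{n_1,\ldots,n_{k-1}\}$ is orthonormal, and that it has the right cardinality $k-1$ to span the whole $(k-1)$-dimensional eigenspace. The cardinality is immediate from \eqref{def_ni}, and orthonormality together with membership in the eigenspace forces linear independence, so the three facts combine to give the claim.

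For membership, the key observation I would isolate first is a clean description of the eigenspace. Every vector $n_i$ in \eqref{def_ni} is \emph{constant on each of the $k$ cluster blocks} of size $\frac{n}{k}$; let me denote by $U\subseteq\R^n$ the $k$-dimensional space of all such block-constant vectors. Because $\widetilde{\mathcal{W}}$ consists of the blocks $[R]$ on the diagonal and $[S]$ off-diagonal, and each row of $[R]$ sums to $\frac{n}{kh}(a+(h-1)c)=:\rho$ while each row of $[S]$ sums to $\frac{n}{kh}(b+(h-1)d)=:\sigma$, the matrix $\widetilde{\mathcal{W}}$ maps $U$ into itself and, identifying a block-constant vector with its vector of $k$ block-values, acts on $U$ as the $k\times k$ matrix $(\rho-\sigma)I_k+\sigma\mathbf{1}_k\mathbf{1}_k^T$. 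This matrix has eigenvalue $\rho+(k-1)\sigma=\lambda_1$ on $\spa\{\mathbf{1}_k\}$ and eigenvalue $\rho-\sigma=\lambda_{h+1}$ on the orthogonal complement $\{y:\mathbf{1}_k^Ty=0\}$. Hence the $\lambda_{h+1}$-eigenspace contains exactly the block-constant vectors whose $k$ block-values sum to zero. I would then check that each $n_i$ has this zero-sum property — its block-values are $(k-i)q_i$ once and $-q_i$ on the remaining $k-i$ blocks, which sum to $(k-i)q_i-(k-i)q_i=0$ — so each $n_i$ is indeed an eigenvector for $\lambda_{h+1}$.

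Orthonormality is a direct computation using the block structure. For the norm, $\|n_i\|^2=\frac{n}{k}\big((k-i)^2+(k-i)\big)q_i^2$, which equals $1$ by the choice of $q_i$ in \eqref{def_qi}. For $i<j$, the two vectors overlap only on blocks $j,\ldots,k$, where $n_i$ equals $-q_i$ throughout; on block $j$ the factor from $n_j$ is $(k-j)q_j$ and on each of the remaining $k-j$ blocks it is $-q_j$, so the inner product is $\frac{n}{k}q_iq_j\big(-(k-j)+(k-j)\big)=0$. This is the bulk of the work but is entirely mechanical.

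I do not expect a genuine obstacle here: the whole statement is a verification. The one conceptual step worth stating cleanly is the reduction to the action of $\widetilde{\mathcal{W}}$ on the block-constant subspace $U$, which both identifies the eigenvalue $\lambda_{h+1}$ with $\rho-\sigma$ and characterizes its eigenspace as the zero-sum block-constant vectors; everything after that is bookkeeping. One should also note, consistent with Lemma~\ref{lemma_spectrum_h_groups} and the eigenvalues listed in \eqref{eigenvalues_h_groups}, that the $n_i$ are nothing but an orthogonalized (Helmert-type) version of the spanning set $v_{h+1},\ldots,v_{h+k-1}$, which is why they automatically land in the same eigenspace.
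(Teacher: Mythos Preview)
Your proposal is correct and follows essentially the same approach as the paper: verify that each $n_i$ is an eigenvector for $\lambda_{h+1}$, then check orthonormality by the same two computations the paper carries out. The only difference is that where the paper writes ``it is easy to verify that any $n_i$ is indeed an eigenvector,'' you supply the conceptual explanation via the block-constant invariant subspace $U$ and the reduced $k\times k$ matrix $(\rho-\sigma)I_k+\sigma\mathbf{1}_k\mathbf{1}_k^T$; this is a clean way to make that step explicit but does not change the substance of the argument.
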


\begin{proof}
It is easy to verify
that any $n_i$ is indeed an eigenvector of $\widetilde{\mathcal{W}}$ with eigenvalue  $\lambda_{h+1}=\ldots=\lambda_{h+k-1}$, $i\in[k-1]$. Furthermore, we have
 \begin{align*}
 \|n_i\|^2=q_i^2\left(\frac{n}{k}(k-i)^2 +\frac{n}{k}(k-i)\right)=1, \quad i\in[k-1],
 \end{align*}
 and
  \begin{align*}
 \langle n_i,n_j \rangle=\frac{n}{k}\big(-q_i\cdot (k-j)q_j\big)+\frac{n}{k}(k-j)(q_i\cdot q_j)=0, \quad 1\leq i<j\leq n.
 \end{align*}
\end{proof}

\vspace{6mm}
Let $\mathcal{L}$ be the expected Laplacian matrix. We have $\mathcal{L}=\mathcal{D}-\mathcal{W}$, where $\mathcal{D}$ is the expected degree matrix.
The expected degree of  vertex $i$ in a random graph constructed according to our variant of the stochastic block model equals $\sum_{j\in[n]\setminus\{i\}}\mathcal{W}_{ij}= \lambda_1-a$ (with $\lambda_1$ defined in
\eqref{eigenvalues_h_groups}) and hence $\mathcal{D}=(\lambda_1-a)I_n$.

The following lemma characterizes the eigenvalues and eigenvectors of $Z^T\mathcal{L}Z$, where $Z\in\R^{n\times (n-h+1)}$ is the matrix computed
in the execution of Algorithm~\ref{fair_SC_alg}
or Algorithm~\ref{fair_SC_alg_normalized}.

\begin{lemma}\label{lemma_spectrum_ZtransLZ}
Let $Z\in\R^{n\times (n-h+1)}$ be any matrix whose columns form an orthonormal basis of the nullspace of $F^T$, where $F$ is the matrix that has the vectors
$f^{(s)}-\frac{|V_s|}{n}\cdot \mathbf{1}_n$, $s\in[h-1]$, as columns. Then the eigenvalues of $Z^T\mathcal{L}Z$ are
\begin{align*}
\lambda_1-\lambda_1,\lambda_1-\lambda_{h+1},\lambda_1-\lambda_{h+2},\ldots,\lambda_1-\lambda_{n}
\end{align*}
 with $\lambda_i$ defined in
\eqref{eigenvalues_h_groups}.
It is
\begin{align}\label{lemma_spectrum_ZtransLZ_cons1}
\begin{split}
&\lambda_1-\lambda_1=0,\\
&\lambda_1-\lambda_{h+1}=\lambda_1-\lambda_{h+2}=\ldots=\lambda_1-\lambda_{h+k-1},\\
&\lambda_1-\lambda_{h+k}=\lambda_1-\lambda_{h+k+1}=\ldots=\lambda_1-\lambda_{hk},\\
&\lambda_1-\lambda_{hk+1}=\lambda_1-\lambda_{hk+2}=\ldots=\lambda_1-\lambda_{n}=\lambda_1
\end{split}
\end{align}
with
\begin{align}\label{lemma_spectrum_ZtransLZ_cons2}
&\lambda_1-\lambda_1<\lambda_1-\lambda_{h+1}<\min\{\lambda_1-\lambda_{h+k},\lambda_1-\lambda_{hk+1}\},
\end{align}
so  that the $k$ smallest eigenvalues of $Z^T\mathcal{L}Z$ are $\lambda_1-\lambda_1,\lambda_1-\lambda_{h+1},\lambda_1-\lambda_{h+2},\ldots,\lambda_1-\lambda_{h+k-1}$.

Furthermore, there exists an orthonormal basis
$\{r_1,r_{h+1},r_{h+2},\ldots,r_{n}\}$
of eigenvectors of $Z^T\mathcal{L}Z$ with $r_i$ corresponding to eigenvalue $\lambda_1-\lambda_i$ such that
\begin{align*}
Zr_1=\mathbf{1}_n/\sqrt{n}\qquad\text{and}\qquad Zr_{h+i}=n_i, \quad i\in[k-1],
\end{align*}
with $n_i$ defined in \eqref{def_ni}.
\end{lemma}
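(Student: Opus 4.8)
The plan is to exploit the clean algebraic relationship between $\mathcal{L}$ and $\widetilde{\mathcal{W}}$. Since we have just established $\mathcal{D}=(\lambda_1-a)I_n$ and since $\mathcal{W}=\widetilde{\mathcal{W}}-aI_n$, it follows that $\mathcal{L}=\mathcal{D}-\mathcal{W}=\lambda_1 I_n-\widetilde{\mathcal{W}}$. Consequently $\mathcal{L}$ and $\widetilde{\mathcal{W}}$ share every eigenvector, and if $\widetilde{\mathcal{W}}v=\lambda_i v$ then $\mathcal{L}v=(\lambda_1-\lambda_i)v$. Thus the full spectrum of $\mathcal{L}$ on $\R^n$ is $\{\lambda_1-\lambda_i\}_{i\in[n]}$, realized on the same eigenvectors $v_i$ as in Lemma~\ref{lemma_spectrum_h_groups}.

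Next I would identify the subspace $\text{Null}(F^T)$. By \eqref{char_2nd_eigenvecs} the columns of $F$ are scalar multiples of $v_2,\ldots,v_h$, so the column space of $F$ equals $\spa\{v_2,\ldots,v_h\}$, which by Lemma~\ref{lemma_spectrum_h_groups} is precisely the eigenspace of $\widetilde{\mathcal{W}}$ for the eigenvalue $\lambda_2=\ldots=\lambda_h$. Hence $\text{Null}(F^T)=(\spa\{v_2,\ldots,v_h\})^{\perp}$ is the orthogonal complement of one eigenspace of the symmetric matrix $\widetilde{\mathcal{W}}$, and is therefore $\widetilde{\mathcal{W}}$-invariant (equivalently $\mathcal{L}$-invariant), equal to the direct sum of the remaining eigenspaces. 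So $\mathcal{L}$ restricted to $\text{Null}(F^T)$ is symmetric with eigenvalues $\{\lambda_1-\lambda_i : i\in\{1\}\cup\{h+1,\ldots,n\}\}$, which is exactly the claimed list.

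The observation linking this to $Z^T\mathcal{L}Z$ is that, since the columns of $Z$ are an orthonormal basis of $\text{Null}(F^T)$, the matrix $Z^T\mathcal{L}Z$ is the coordinate representation of $\mathcal{L}|_{\text{Null}(F^T)}$ in that basis. Concretely $ZZ^T$ is the orthogonal projection onto $\text{Null}(F^T)$ and $Z^TZ=I$, so for any eigenvector $v\in\text{Null}(F^T)$ with $\mathcal{L}v=\mu v$ the vector $r:=Z^T v$ satisfies $Z^T\mathcal{L}Z\,r=Z^T\mathcal{L}(ZZ^T v)=Z^T\mathcal{L}v=\mu\,Z^T v=\mu r$ and $Zr=ZZ^T v=v$; moreover $v\mapsto Z^T v$ is an isometry on $\text{Null}(F^T)$, carrying an orthonormal eigenbasis there to one of $Z^T\mathcal{L}Z$. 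To obtain the specified eigenvectors I would check $\mathbf{1}_n\in\text{Null}(F^T)$ directly (from $\langle f^{(s)}-\tfrac{|V_s|}{n}\mathbf{1}_n,\mathbf{1}_n\rangle=|V_s|-|V_s|=0$) and $n_i\in\text{Null}(F^T)$ by noting that each $n_i$ from Lemma~\ref{lemma_spectrum_h_groups_normalized} is constant on every cluster block while each $v_{1+s}$ sums to zero over every cluster block, so $\langle n_i,v_{1+s}\rangle=0$; then set $r_1=Z^T(\mathbf{1}_n/\sqrt n)$ and $r_{h+i}=Z^T n_i$, whose orthonormality follows from that of $\{\mathbf{1}_n/\sqrt n,n_1,\ldots,n_{k-1}\}$ together with the isometry property, and which satisfy $Zr_1=\mathbf{1}_n/\sqrt n$ and $Zr_{h+i}=n_i$. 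Completing $\{\mathbf{1}_n/\sqrt n,n_1,\ldots,n_{k-1}\}$ to an orthonormal eigenbasis of $\mathcal{L}|_{\text{Null}(F^T)}$ and pulling it back through $Z^T$ yields the full basis $\{r_1,r_{h+1},\ldots,r_n\}$.

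Finally, the relations \eqref{lemma_spectrum_ZtransLZ_cons1} follow from grouping equal $\lambda_i$ exactly as in Lemma~\ref{lemma_spectrum_h_groups}, and the strict inequalities \eqref{lemma_spectrum_ZtransLZ_cons2} translate into $\lambda_{h+1}<\lambda_1$, $\lambda_{h+k}<\lambda_{h+1}$ (using $\lambda_{h+1}>|\lambda_{h+k}|$, so $\lambda_{h+k}\le|\lambda_{h+k}|<\lambda_{h+1}$), and $\lambda_{h+1}>0$, all recorded in Lemma~\ref{lemma_spectrum_h_groups}; this pins down the $k$ smallest eigenvalues as $\lambda_1-\lambda_1$ and $\lambda_1-\lambda_{h+1}$ (with multiplicity $k-1$). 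I expect the only genuine subtlety to be the invariance-and-pullback step, namely carefully justifying that passing to $Z^T\mathcal{L}Z$ deletes exactly the eigenvalues $\lambda_2,\ldots,\lambda_h$ and leaves everything else intact; the membership checks and the inequalities are routine given the preceding lemmas.
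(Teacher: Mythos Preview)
Your proposal is correct and follows essentially the same route as the paper: both identify $\mathcal{L}=\lambda_1 I_n-\widetilde{\mathcal{W}}$, recognize that $\text{Null}(F^T)$ is the orthogonal complement of $\spa\{v_2,\ldots,v_h\}$ and hence coincides with the span of the remaining orthonormal eigenvectors $\{u_1,u_{h+1},\ldots,u_n\}$ of $\widetilde{\mathcal{W}}$, and then pull these eigenvectors back through $Z^T$ (the paper writes this as $U=ZR$ with $R$ orthogonal, so its columns $r_j$ are exactly your $Z^T u_j$). Your phrasing in terms of an invariant subspace and the isometry $v\mapsto Z^Tv$ is a slightly more conceptual packaging of the same computation the paper carries out explicitly.
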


\begin{proof}
Because of $Z^TZ=I_{(n-h+1)}$ we have
\begin{align*}
 Z^T\mathcal{L}Z=Z^T(\mathcal{D}-\mathcal{W})Z=Z^T\mathcal{D}Z-Z^T(\widetilde{\mathcal{W}}-aI_n)Z=(\lambda_1-a)I_n-Z^T\widetilde{\mathcal{W}}Z+aI_n=\lambda_1I_n-Z^T\widetilde{\mathcal{W}}Z.
\end{align*}
Let $\{u_1,\ldots,u_n\}$ be an orthonormal basis of eigenvectors of $\widetilde{\mathcal{W}}$ with $u_i$ corresponding to eigenvalue $\lambda_i$. According to Lemma~\ref{lemma_spectrum_h_groups} and Lemma~\ref{lemma_spectrum_h_groups_normalized} we can choose $u_1=\mathbf{1}_n/\sqrt{n}$ and $u_{h+i}=n_i$ for $i\in[k-1]$. We can write  $\widetilde{\mathcal{W}}$ as  $\widetilde{\mathcal{W}}=\sum_{i=1}^n\lambda_iu_iu_i^T$.

The nullspace of $F^T$, where $F$ is the matrix that has the vectors
$f^{(s)}-\frac{|V_s|}{n}\cdot \mathbf{1}_n$, $s\in[h-1]$, as columns, equals the orthogonal complement of $\{f^{(s)}-(|V_s|/n)\cdot \mathbf{1}_n, s\in[h-1]\}$. According to \eqref{char_2nd_eigenvecs}, the orthogonal complement of $\{f^{(s)}-(|V_s|/n)\cdot \mathbf{1}_n, s\in[h-1]\}$ equals the orthogonal complement of $\{v_{1+s}, s\in[h-1]\}$, with $v_i$ defined in  Lemma~\ref{lemma_spectrum_h_groups} and being an eigenvalue of  $\widetilde{\mathcal{W}}$ with eigenvalue $\lambda_i$. According to Lemma~\ref{lemma_spectrum_h_groups}, $\{v_{1+s}, s\in[h-1]\}$ is a basis of the eigenspace of $\widetilde{\mathcal{W}}$ corresponding to eigenvalue $\lambda_2=\lambda_3=\ldots=\lambda_h$, and hence the orthogonal complement of $\{v_{1+s}, s\in[h-1]\}$ equals the orthogonal complement of $\{u_2,\ldots,u_h\}$, which is the subspace spanned by $\{u_1,u_{h+1},u_{h+2},\ldots,u_n\}$. Let $U\in\R^{n\times (n-h+1)}$ be a matrix that has the vectors  $u_1,u_{h+1},u_{h+2},\ldots,u_n$ as columns (in this order). It follows that $U=ZR$ for some $R\in \R^{(n-h+1)\times (n-h+1)}$ with $R^TR=RR^T=I_{(n-h+1)}$.
It is
\begin{align*}
 Z^T\mathcal{L}Z=\lambda_1I_n-Z^T\widetilde{\mathcal{W}}Z=\lambda_1I_n-RU^T\left(\sum_{i=1}^n\lambda_iu_iu_i^T\right)UR^T.
\end{align*}

Let $r_1$ be the first column of $R$,  $r_{h+1}$ be the second column of $R$, $r_{h+2}$ be the third column of $R$, and so on. We have
\begin{align*}
 Z^T\mathcal{L}Zr_1&=\left[\lambda_1I_n-RU^T\left(\sum_{i=1}^n\lambda_iu_iu_i^T\right)UR^T\right]r_1\\
 &=\lambda_1 r_1-RU^T\left(\sum_{i=1}^n\lambda_iu_iu_i^T\right)Ue_1\\
 &=\lambda_1 r_1-RU^T\left(\sum_{i=1}^n\lambda_iu_iu_i^T\right)u_1\\
 &=\lambda_1 r_1-\lambda_1RU^Tu_1\\
 &=\lambda_1 r_1-\lambda_1Re_1\\
 &=(\lambda_1 -\lambda_1)r_1,
\end{align*}
where $e_1$ denotes the first natural basis vector. Similarly, we obtain $Z^T\mathcal{L}Zr_{h+i}=(\lambda_1 -\lambda_{h+i})r_{h+i}$, $i\in[n-h]$.
This proves that the eigenvalues of $Z^T\mathcal{L}Z$ are
$\lambda_1-\lambda_1,\lambda_1-\lambda_{h+1},\lambda_1-\lambda_{h+2},\ldots,\lambda_1-\lambda_{n}$. The claims in \eqref{lemma_spectrum_ZtransLZ_cons1} and \eqref{lemma_spectrum_ZtransLZ_cons2} immediately follow from Lemma~\ref{lemma_spectrum_h_groups}. Clearly, it is $Zr_1=u_1=\mathbf{1}_n/\sqrt{n}$ and $Zr_{h+i}=u_{h+i}=n_i$ for $i\in[k-1]$.
\end{proof}

\vspace{3mm}
We need one more simple lemma.

\begin{lemma}\label{lemma_distance_between_rows}
Let $T\in\R^{n\times k}$ be a matrix that contains the vectors $\mathbf{1}_n/\sqrt{n}, n_1,n_2,\ldots,n_{k-1}$, with $n_i$ defined in \eqref{def_ni},  as columns. For $i\in[n]$, let $t_i$ denote the $i$-th row of $T$. For all $i,j\in[n]$, we have $t_i=t_j$ if and only if the vertices $i$ and $j$ are in the same cluster~$C_l$. If the vertices $i$ and $j$ are not in the same cluster, then $\|t_i-t_j\|=\sqrt{2k/n}$.
\end{lemma}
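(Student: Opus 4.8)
The plan is to exploit the block structure of the vectors $\mathbf{1}_n/\sqrt{n}, n_1,\ldots,n_{k-1}$ together with the orthonormality of the columns of $T$. First I would observe that, under the vertex ordering \eqref{order_vertices}, each cluster $C_l$ consists of a single block of $n/k$ consecutive indices, and that every column of $T$ is constant on each such block: $\mathbf{1}_n/\sqrt{n}$ is constant everywhere, and each $n_i$ from \eqref{def_ni} is constant on blocks of size $n/k$. Consequently the rows $t_i$ of $T$ are constant across each cluster, which immediately yields the forward direction: if $i$ and $j$ lie in the same $C_l$, then $t_i=t_j$. This reduces the problem to understanding the $k$ distinct row values $t^{(1)},\ldots,t^{(k)}$, one per cluster.

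Next I would introduce the reduced matrix $\bar{T}\in\R^{k\times k}$ whose $l$-th row is $t^{(l)}$. Since each cluster has exactly $n/k$ vertices, every row of $\bar{T}$ is repeated $n/k$ times in $T$, so $T^TT=\tfrac{n}{k}\,\bar{T}^T\bar{T}$. On the other hand, the columns of $T$ are orthonormal: the $n_i$ are orthonormal by Lemma~\ref{lemma_spectrum_h_groups_normalized}, and $\mathbf{1}_n$ is orthogonal to each $n_i$ because the entries of $n_i$ sum to zero (its nonzero block $(k-i)q_i$ is cancelled by the $k-i$ blocks equal to $-q_i$). Hence $T^TT=I_k$, and therefore $\bar{T}^T\bar{T}=\tfrac{k}{n}I_k$.

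The key step is then to note that $\bar{T}$ is \emph{square}, so having orthogonal columns of equal norm forces it to be a scalar multiple of an orthogonal matrix; equivalently $\sqrt{n/k}\,\bar{T}$ is orthogonal, and an orthogonal matrix has orthonormal rows as well as columns. This gives $\bar{T}\bar{T}^T=\tfrac{k}{n}I_k$, i.e.\ $\|t^{(l)}\|^2=k/n$ for every $l$ and $\langle t^{(l)},t^{(l')}\rangle=0$ for $l\neq l'$. From this, for $i\in C_l$ and $j\in C_{l'}$ with $l\neq l'$,
\[
\|t_i-t_j\|^2=\|t^{(l)}\|^2+\|t^{(l')}\|^2-2\langle t^{(l)},t^{(l')}\rangle=\frac{2k}{n},
\]
which is strictly positive, so $t_i\neq t_j$; this simultaneously establishes the converse direction and the claimed distance $\sqrt{2k/n}$.

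There is no real obstacle here beyond correctly identifying the square-matrix argument. The naive temptation is to read off all $k$ row vectors explicitly from \eqref{def_ni} and \eqref{def_qi} and then expand the pairwise distances, which is doable but tedious and obscures why the answer is uniform across \emph{all} pairs $l\neq l'$. The orthonormality-plus-square trick avoids this entirely and makes transparent that the equal spacing $\sqrt{2k/n}$ is forced by $T^TT=I_k$ alone, independent of the precise values $q_i$.
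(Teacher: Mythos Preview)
Your argument is correct and takes a genuinely different route from the paper. The paper's proof is direct and computational: it writes out the $k$ distinct row vectors explicitly in terms of the $q_i$ from \eqref{def_qi} (e.g., $t_i=(1/\sqrt{n},-q_1,\ldots,-q_{l-1},(k-l)q_l,0,\ldots,0)$ for $i\in C_l$) and then asserts that one can verify $\|t_i-t_j\|^2=2k/n$ by expanding the difference. You instead bypass all explicit computation by observing that the columns of $T$ are orthonormal and constant on equal-sized clusters, so the $k\times k$ reduced row matrix $\bar{T}$ satisfies $\bar{T}^T\bar{T}=\tfrac{k}{n}I_k$; since $\bar{T}$ is square this forces $\bar{T}\bar{T}^T=\tfrac{k}{n}I_k$, giving orthogonal rows of equal norm and hence the uniform pairwise distance. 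Your approach is cleaner and, as you note, makes transparent that the result depends only on $T^TT=I_k$ and the equal cluster sizes, not on the specific values of the $q_i$; the paper's approach has the minor advantage of being entirely self-contained and not requiring the reader to recall that orthogonal columns in a square matrix imply orthogonal rows.
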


\begin{proof}
This simply follows from the structure of the vectors $n_i$. It is, up to a permutation of the entries,
\begin{align*}
t_i=\left(\frac{1}{\sqrt{n}},-q_1,-q_2,\ldots,-q_{l-1},(k-l)q_l,0,0,\ldots,0\right),
\end{align*}
with $q_l$ defined in \eqref{def_qi},  for all $i\in[n]$ such that vertex $i$ is in cluster $C_l$, $l\in[k-1]$,  and
\begin{align*}
t_i=\left(\frac{1}{\sqrt{n}},-q_1,-q_2,\ldots,-q_{k-1}\right)
\end{align*}
for all $i\in[n]$ such that vertex $i$ is in cluster $C_k$.
It is easy to verify that $\|t_i-t_j\|^2=2k/n$ for all $i,j\in[n]$ such that the vertices~$i$ and $j$ are not in the same cluster.
\end{proof}

\vspace{8mm}
\textbf{Part 2: Deviation of $Z^TLZ$ from $Z^T\mathcal{L}Z$}

We want to obtain an upper bound on $\|Z^TLZ-Z^T\mathcal{L}Z\|$.
Because of $Z^TZ=I_{(n-h+1)}$,  it is $\|Z\|=\|Z^T\|=1$ and hence
\begin{align}\label{absch_getting_rid_Z}
\|Z^TLZ-Z^T\mathcal{L}Z\|\leq \|Z^T\|\cdot \|L-\mathcal{L}\|\cdot\|Z\|\leq \|L-\mathcal{L}\|.
\end{align}
We have
\begin{align*}
 \|L-\mathcal{L}\|=\|(D-W)-(\mathcal{D}-\mathcal{W})\|\leq \|D-\mathcal{D}\|+\|W-\mathcal{W}\|,
\end{align*}
with $\mathcal{D}=(\lambda_1-a)I_n$ as we have seen in Part 1. We
 bound both terms separately.

\begin{itemize}
\item Upper bound on $\|W-\mathcal{W}\|$:

\vspace{2mm}
Theorem 5.2 of \citet{lei2015} provides a bound on $\|W-\mathcal{W}\|$: assuming that $a\geq C \ln n/n$ for some $C>0$,
for every $r>0$
 there exists a constant $\const(C,r)$ such that
\begin{align}\label{absch_w}
\|W-\mathcal{W}\|\leq \const(C,r)\sqrt{a\cdot n}
\end{align}
with probability at least $1-n^{-r}$.

\item Upper bound on $\|D-\mathcal{D}\|$:

\vspace{2mm}
The matrix $D-\mathcal{D}$ is a diagonal matrix and hence $\|D-\mathcal{D}\|=\max_{i\in[n]}|D_{ii}-\mathcal{D}_{ii}|=\max_{i\in[n]}|D_{ii}-(\lambda_1-a)|$.
The random variable $D_{ii}=\sum_{j\in[n]\setminus\{i\}}\charfct[i\sim j]$, where $\charfct[i\sim j]$ denotes the indicator function of the event that there is an edge between vertices $i$ and $j$, is a sum of independent Bernoulli random variables. It is $\Ex[D_{ii}]=\lambda_1-a$.
For a fixed $i\in[n]$, we want to obtain an upper bound on $|D_{ii}-(\lambda_1-a)|=|D_{ii}-\Ex[D_{ii}]|$ and distinguish two cases:
\begin{enumerate}
 \item $a>\frac{1}{2}$:

 \vspace{2mm}
Hoeffding's inequality
\citep[e.g.,][Theorem 1]{lugosi_conc_ineq}
yields
\begin{align*}
\Pr[|D_{ii}-(\lambda_1-a)|\geq t]\leq  2\exp\left(-\frac{2t^2}{n}\right)
\end{align*}
for any $t> 0$. Choosing $t=\sqrt{2(r+1)}\sqrt{a\cdot n\ln n}$ for $r>0$, we have with $\const(r)=\sqrt{2(r+1)}$ that
\begin{align}\label{ungl_case_agr}
\Pr\left[|D_{ii}-(\lambda_1-a)|\geq \const(r)\cdot\sqrt{a\cdot n\ln n}\right]&\leq  2\exp\left(-4(r+1)a\ln n\right)
\leq n^{-(r+1)}.
\end{align}

 \item $a\leq \frac{1}{2}$:

 \vspace{2mm}
Bernstein's inequality \citep[e.g.,][Theorem 3]{lugosi_conc_ineq}
yields
\begin{align*}
\Pr\left[|D_{ii}-(\lambda_1-a)|> tn\right]\leq 2\exp\left(-\frac{nt^2}{2\left(\frac{\Var[D_{ii}]}{n}+\frac{t}{3}\right)}\right)
\end{align*}
for any $t> 0$. It is
\begin{align*}
\Var[D_{ii}]=\sum_{j\in[n]\setminus\{i\}}\Var\left[\charfct[i\sim j]\right]
=\sum_{j\in[n]\setminus\{i\}} \Pr[\charfct[i\sim j]](1-\Pr[\charfct[i\sim j]])\leq n a(1-a)\leq na
\end{align*}
since the function $x\mapsto x(1-x)$ is monotonically increasing on
$[0,1/2]$.
If we choose $t=\const\cdot\frac{\sqrt{a\cdot n \ln n}}{n}$ for some $\const>0$, assuming that $a\geq C \ln n/n$ for some $C>0$, we
have
\begin{align*}
\frac{\Var[D_{ii}]}{n}+\frac{t}{3}\leq a\left(1+\frac{\const}{3\sqrt{C}}\right)
\end{align*}
and hence
\begin{align*}
\Pr\left[|D_{ii}-(\lambda_1-a)|> \const\cdot\sqrt{a\cdot n \ln n}\right]\leq
 2\exp\left(-\frac{\const^2\cdot \ln n}{2+\frac{2\const}{3\sqrt{C}}}\right).
\end{align*}
Because of
\begin{align*}
\frac{\const^2}{2+\frac{2\const}{3\sqrt{C}}}\rightarrow \infty\quad \text{as}\const\rightarrow\infty,
\end{align*}
for every $r>0$ we can choose $\const=\const(C,r)$ large enough such that
$\const^2/\big(2+\frac{2\const}{3\sqrt{C}}\big)\geq 2(r+1)$ and
\begin{align}\label{ungl_case_asm}
\Pr\left[|D_{ii}-(\lambda_1-a)|> \const(C,r)\cdot\sqrt{a\cdot n \ln n}\right]\leq
n^{-(r+1)}.
\end{align}
\end{enumerate}

Choosing  $\const(C,r)$ as the maximum of $\const(r)$ encountered in \eqref{ungl_case_agr} and $\const(C,r)$ encountered in \eqref{ungl_case_asm}, we see that there exists   $\const(C,r)$ such that
\begin{align*}
\Pr\left[|D_{ii}-(\lambda_1-a)|> \const(C,r)\cdot\sqrt{a\cdot n \ln n}\right]\leq
n^{-(r+1)},
\end{align*}
no matter whether $a>1/2$ or $1/2\geq a \geq C \ln n/n$.
Applying a union bound we obtain
\begin{align*}
\Pr\left[\max_{i\in[n]}|D_{ii}-(\lambda_1-a)|>  \const(C,r)\cdot\sqrt{a\cdot n\ln n}\right]\leq n\cdot n^{-(r+1)}=n^{-r},
\end{align*}
and hence with probability at least $1-n^{-r}$ we have
\begin{align}\label{absch_d}
\|D-\mathcal{D}\|\leq \const(C,r)\sqrt{a\cdot n\ln n}.
\end{align}
\end{itemize}

From \eqref{absch_w} and \eqref{absch_d} we see that for every $r>0$ there
exists   $\const(C,r)$ such that
 with probability at least $1-n^{-r}$ we have
\begin{align}\label{absch_w_und_d}
\|W-\mathcal{W}\|\leq \const(C,r)\sqrt{a\cdot n}\qquad\text{and}\qquad \|D-\mathcal{D}\|\leq \const(C,r)\sqrt{a\cdot n\ln n}
\end{align}
and hence
\begin{align}\label{absch_l}
\|Z^TLZ-Z^T\mathcal{L}Z\|\leq\|L-\mathcal{L}\|\leq\|D-\mathcal{D}\|+\|W-\mathcal{W}\|\leq  \const(C,r)\sqrt{a\cdot n\ln n}.
\end{align}

\begin{figure}[t]
\centering
\includegraphics[scale=0.6]{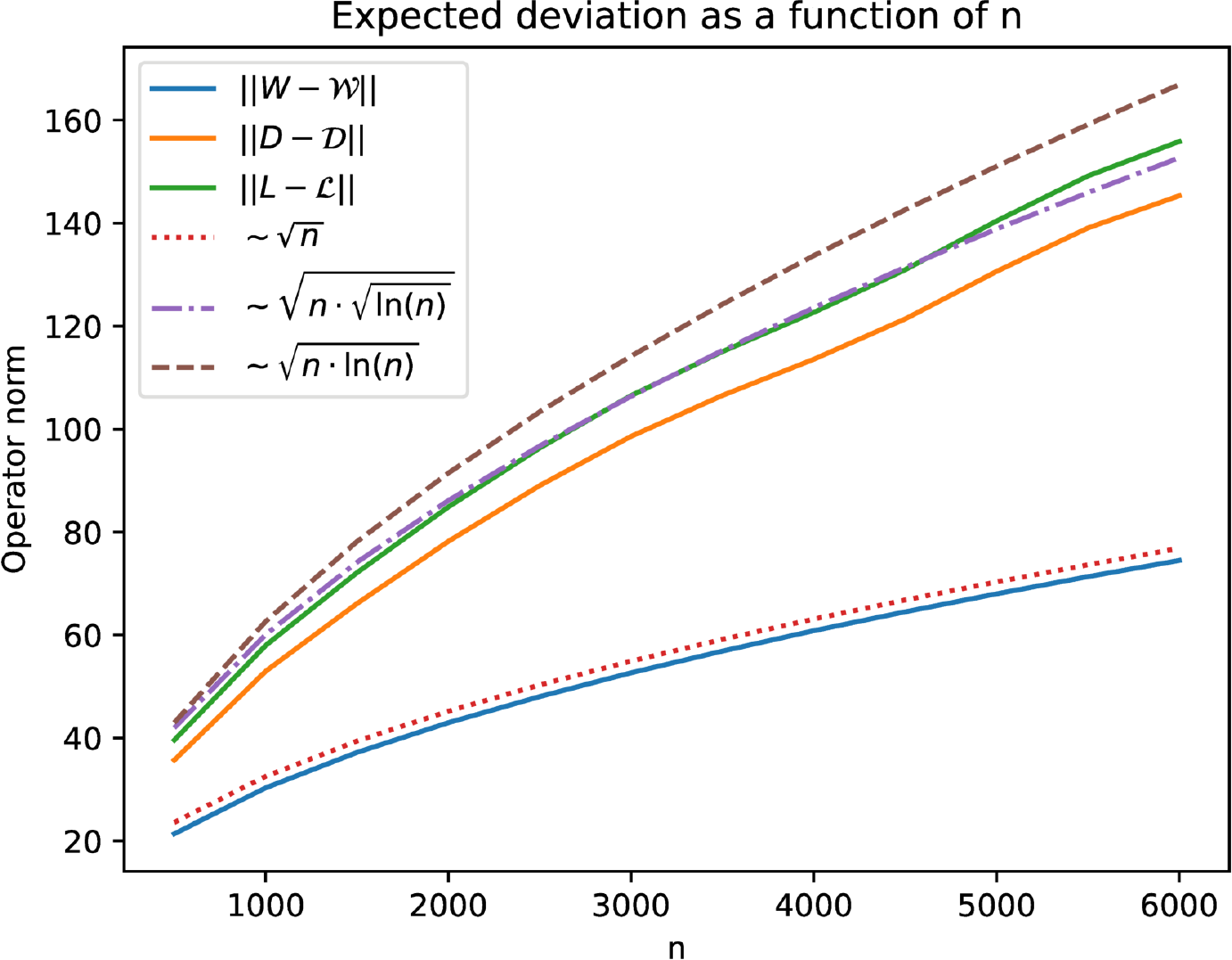}
\caption{Average  deviations  $\|W-\mathcal{W}\|$, $\|D-\mathcal{D}\|$ and $\|L-\mathcal{L}\|$  as a function of $n$ when $a=0.6,b=0.5,c=0.4,d=0.3$ are constant, $k=5$ and $h=2$. The average is computed over sampling the graph for 100 times.}\label{plot_deviation}
\end{figure}

\vspace{5mm}
For illustrative purposes, we show empirically that, in general,  our upper  bounds on  $\|W-\mathcal{W}\|$, $\|D-\mathcal{D}\|$ and $\|L-\mathcal{L}\|$ in  \eqref{absch_w_und_d} and \eqref{absch_l}, respectively, are tight, up to a factor of
at most
$\sqrt[4]{\ln n}$ in case of $\|D-\mathcal{D}\|$ and $\|L-\mathcal{L}\|$. The plot in Figure~\ref{plot_deviation} shows the observed deviations  $\|W-\mathcal{W}\|$, $\|D-\mathcal{D}\|$ and $\|L-\mathcal{L}\|$  as a function of $n$ when $a=0.6,b=0.5,c=0.4,d=0.3$ are constant, $k=5$ and $h=2$. The shown curves are average results, obtained from  sampling the graph for 100 times.

\vspace{8mm}
\textbf{Part 3: Proving Theorem \ref{theorem_SB_model} for Algorithm~\ref{fair_SC_alg} (unnormalized SC with fairness constraints)}

In the last step of Algorithm~\ref{fair_SC_alg} we apply $k$-means clustering to the rows of the matrix $ZY$, where $Y\in\R^{(n-h+1)\times k}$  contains some orthonormal eigenvectors corresponding to the $k$ smallest eigenvalues of $Z^TLZ$ as columns. We want to show that up to some orthogonal transformation, the rows of $ZY$  are close to the rows of $Z\mathcal{Y}$, where $\mathcal{Y}\in\R^{(n-h+1)\times k}$  contains some orthonormal eigenvectors corresponding to the $k$ smallest eigenvalues of $Z^T\mathcal{L}Z$ as columns. According to Lemma~\ref{lemma_spectrum_ZtransLZ}, we can choose $\mathcal{Y}$ in such a way that $Z\mathcal{Y}=T$ with $T$ as in Lemma~\ref{lemma_distance_between_rows}, that is $T$ contains the vectors  $\mathbf{1}_n/\sqrt{n}, n_1,n_2,\ldots,n_{k-1}$, with $n_i$ defined in \eqref{def_ni},  as columns.

We want to obtain an upper bound on $\min_{U\in\R^{k\times k}: U^TU=UU^T=I_k}\|Z\mathcal{Y}-ZYU\|_{F}$.
For any $U\in\R^{k\times k}$ with  $U^TU=UU^T=I_k$, because of $Z^TZ=I_{(n-h+1)}$ we have
\begin{align*}
\|Z\mathcal{Y}-ZYU\|^2_{F}=\|Z(\mathcal{Y}-YU)\|^2_{F}=\trace((\mathcal{Y}-YU)^TZ^TZ(\mathcal{Y}-YU))=\|\mathcal{Y}-YU\|^2_{F}
\end{align*}
and hence
\begin{align}\label{absch_popoloplo}
\min_{U\in\R^{k\times k}: U^TU=UU^T=I_k}\|Z\mathcal{Y}-ZYU\|_{F}=\min_{U\in\R^{k\times k}: U^TU=UU^T=I_k}\|\mathcal{Y}-YU\|_{F}.
\end{align}
We proceed similarly to \citet{lei2015}.
According to Proposition 2.2 and Equation (2.6) in \citet{vu2013} we have (note that the set of all orthogonal matrices $U\in\R^{k\times k}$ is a compact subset of $\R^{k\times k}$ and hence the infimum is indeed a minimum)
\begin{align}\label{absch_first_step}
\min_{U\in\R^{k\times k}: U^TU=UU^T=I_k}\|\mathcal{Y}-YU\|_{F}\leq\sqrt{2}\|\mathcal{Y}\mathcal{Y}^T(I_{(n-h+1)}-YY^T)\|_F\stackrel{\eqref{absch_frob_norm}}{\leq}\sqrt{2}\sqrt{k}\,\|\mathcal{Y}\mathcal{Y}^T(I_{(n-h+1)}-YY^T)\|.
\end{align}
According to Lemma~\ref{lemma_spectrum_ZtransLZ} the eigenvalues of $Z^T\mathcal{L}Z$ are
$\lambda_1-\lambda_1,\lambda_1-\lambda_{h+1},\lambda_1-\lambda_{h+2},\ldots,\lambda_1-\lambda_{n}$.
The $k$ smallest eigenvalues are
$\lambda_1-\lambda_1,\lambda_1-\lambda_{h+1},\lambda_1-\lambda_{h+2},\ldots,\lambda_1-\lambda_{h+k-1}$ and  the $(k+1)$-th smallest eigenvalue is either $\lambda_1-\lambda_{h+k}$ or $\lambda_1$. Hence, for the eigengap~$\gamma$ between the $k$-th and the $(k+1)$-th smallest eigenvalue we have
\begin{align*}
\gamma=\min\left\{(\lambda_1-\lambda_{h+k})-(\lambda_1-\lambda_{h+k-1}),\lambda_1-(\lambda_1-\lambda_{h+k-1})\right\}
=\min\left\{\frac{n}{k}(c-d),\frac{n}{kh}((a-b)+(h-1)(c-d))\right\}.
\end{align*}
It is
\begin{align}\label{absch_gamma}
\frac{n}{2k}(c-d)\leq \frac{n(h-1)}{hk}(c-d) \leq \gamma\leq \frac{n}{k}(c-d).
\end{align}
We want to show that
\begin{align}\label{absch_poiul}
\|\mathcal{Y}\mathcal{Y}^T(I_{(n-h+1)}-YY^T)\|\leq \frac{4}{\gamma}\|Z^T\mathcal{L}Z-Z^TLZ\|.
\end{align}
If $\|Z^T\mathcal{L}Z-Z^TLZ\|> \frac{\gamma}{4}$, then \eqref{absch_poiul} holds trivially because of
\begin{align*}
\|\mathcal{Y}\mathcal{Y}^T(I_{(n-h+1)}-YY^T)\|\leq \|\mathcal{Y}\mathcal{Y}^T\|\cdot\|I_{(n-h+1)}-YY^T\| =1\cdot 1=1.
\end{align*}
Assume that $\|Z^T\mathcal{L}Z-Z^TLZ\|\leq \frac{\gamma}{4}$ and let $\mu_1\leq \mu_2\leq\ldots\leq\mu_{n-h+1}$ be the eigenvalues of $Z^TLZ$.
Since $L$ is
positive semi-definite, so is $Z^TLZ$, and hence $\mu_1\geq 0$. Let $\lambda'_1\leq \lambda'_2\leq \ldots,\lambda'_{n-h+1}$ be the eigenvalues $\lambda_1-\lambda_1,\lambda_1-\lambda_{h+1},\lambda_1-\lambda_{h+2},\ldots,\lambda_1-\lambda_{n}$ of $Z^T\mathcal{L}Z$ in ascending order.
According to Weyl's Perturbation Theorem
\citep[e.g.,][Corollary III.2.6]{bhatia_book}
it is
\begin{align*}
|\mu_i-\lambda'_i|\leq\|Z^T\mathcal{L}Z-Z^TLZ\|\leq \frac{\gamma}{4}, \quad i\in[n-h+1].
\end{align*}
In particular, we have
\begin{align*}
\mu_1,\ldots,\mu_k\in \left[0,\lambda'_k+\frac{\gamma}{4}\right], \quad \mu_{k+1},\ldots,\mu_n \in \left[\lambda'_{k+1}-\frac{\gamma}{4},\infty\right)
\end{align*}
with $\left(\lambda'_{k+1}-\frac{\gamma}{4}\right)-\left(\lambda'_k+\frac{\gamma}{4}\right)=\frac{\gamma}{2}$. The Davis-Kahan sin$\Theta$ 
Theorem \citep[e.g.,][Theorem VII.3.1]{bhatia_book}
yields that
\begin{align*}
\|\mathcal{Y}\mathcal{Y}^T(I_{(n-h+1)}-YY^T)\|\leq \frac{2}{\gamma}\|Z^T\mathcal{L}Z-Z^TLZ\|
\end{align*}
and hence \eqref{absch_poiul}.
Combining \eqref{absch_popoloplo} to \eqref{absch_poiul}, we end up with
\begin{align}\label{absch_after_sin_kahan}
\min_{U\in\R^{k\times k}: U^TU=UU^T=I_k}\|Z\mathcal{Y}-ZYU\|_{F}\leq \frac{16\sqrt{k^3}}{n(c-d)}\|Z^T\mathcal{L}Z-Z^TLZ\|.
\end{align}

\vspace{3mm}
Using \eqref{absch_l} from Part~2, we see that with probability at least $1-n^{-r}$
\begin{align}\label{absch_ev1}
\min_{U\in\R^{k\times k}: U^TU=UU^T=I_k}\|Z\mathcal{Y}-ZYU\|_{F}\leq \const(C,r)\cdot\frac{\sqrt{k^3}}{c-d}\cdot\sqrt{\frac{a\cdot  \ln n}{n}}.
\end{align}
We use Lemma~5.3 in  \citet{lei2015} to complete the proof of Theorem~\ref{theorem_SB_model} for Algorithm~\ref{fair_SC_alg}. Assume that \eqref{absch_ev1} holds and let $U\in\R^{k\times k}$ be an orthogonal matrix attaining the minimum, that is we have
\begin{align}\label{absch_ev1_f}
\|Z\mathcal{Y}U^T-ZY\|_{F}=\|Z\mathcal{Y}-ZYU\|_{F}\leq \const(C,r)\cdot\frac{\sqrt{k^3}}{c-d}\cdot\sqrt{\frac{a\cdot  \ln n}{n}}.
\end{align}
As we have noted above, we can choose $\mathcal{Y}$ in such a way that $Z\mathcal{Y}=T$ with $T$ as in Lemma~\ref{lemma_distance_between_rows}. According to Lemma~\ref{lemma_distance_between_rows}, if we denote the $i$-th row of $T$ by $t_i$, then $t_i=t_j$ if the vertices $i$ and $j$ are in the same cluster and $\|t_i-t_j\|=\sqrt{2k/n}$ if the vertices $i$ and $j$ are not in the same cluster. Since multiplying $T$ by $U^T$ from the right side has the effect of applying an orthogonal transformation to the rows of $T$, the same properties are true for the matrix $TU^T$. Lemma~5.3 in  \citet{lei2015} guarantees that for any $\delta\leq \sqrt{2k/n}$, if
\begin{align}\label{condition_lemma53_lirin}
\frac{16+8M}{\delta^2}\|TU^T-ZY\|_F^2<\underbrace{|C_l|}_{=\frac{n}{k}},\quad l\in[k],
\end{align}
with $|C_l|$ being the size of cluster $C_l$, then a $(1+M)$-approximation algorithm for $k$-means clustering applied to the rows of the matrix $ZY$ returns  a clustering that misclassifies at most
\begin{align}\label{guaran_lemma53_lirin}
\frac{4(4+2M)}{\delta^2}\|TU^T-ZY\|_F^2
\end{align}
many vertices. If we choose $\delta= \sqrt{2k/n}$, then for a small enough $\widehat{C}_1=\widehat{C}_1(C,r)$
the condition~\eqref{condition_on_probabilities} implies
\eqref{condition_lemma53_lirin} because of  \eqref{absch_ev1_f}. Also, for a large enough $\widetilde{C}_1=\widetilde{C}_1(C,r)$, the expression~\eqref{guaran_lemma53_lirin} is upper bounded by the expression~\eqref{quant_guaran_theo}.

\vspace{8mm}
\textbf{Part 4: Proving Theorem \ref{theorem_SB_model} for Algorithm~\ref{fair_SC_alg_normalized} (normalized SC with fairness constraints)}

According to Part~2, for every $r>0$ there
exists   $\const(C,r)$ such that
 with probability at least $1-n^{-r}$ we have
\begin{align}\label{absch_first_part4}
 \|D-\mathcal{D}\|\leq \const(C,r)\sqrt{a\cdot n\ln n}.
\end{align}
Condition~\eqref{condition_on_probabilities_norm}, with a suitable $\widehat{C}_2=\widehat{C}_2(C,r)$, implies that in this case we also have
\begin{align}\label{absch_tretorn}
 \|D-\mathcal{D}\|\leq \frac{\lambda_1-a}{2}.
\end{align}
Let $\mu'_1,\ldots,\mu'_{n-h+1}$ denote the eigenvalues of $Z^TDZ$. It is $\mathcal{D}=(\lambda_1-a)I_n$ (see Part~1) and because of $Z^TZ=I_{(n-h+1)}$ we have $Z^T\mathcal{D}Z=(\lambda_1-a)I_{(n-h+1)}$. According to Weyl's Perturbation Theorem
\citep[e.g.,][Corollary III.2.6]{bhatia_book}
it is
\begin{align}\label{absch_mu_ZTDZ}
|\mu'_i-(\lambda_1-a)|\leq\|Z^TDZ-Z^T\mathcal{D}Z\|\leq \|D-\mathcal{D}\|, \quad i\in[n-h+1],
\end{align}
where the second inequality follows analogously to \eqref{absch_getting_rid_Z}. It follows from \eqref{absch_tretorn} that
\begin{align}\label{absch_ew_ZTDZ}
\mu'_i\geq \frac{\lambda_1 -a}{2}\stackrel{\eqref{condition_on_probabilities_norm}}{>}0, \quad i\in[n-h+1],
\end{align}
In particular, this shows that $Z^TDZ$ is positive definite and hence  Algorithm~\ref{fair_SC_alg_normalized} is well-defined.

Now we proceed similarly to Part~3.
In the last step of Algorithm~\ref{fair_SC_alg_normalized} we apply $k$-means clustering to the rows of the matrix $ZQ^{-1}X$, where $Q\in\R^{(n-h+1)\times(n-h+1)}$ is the
positive definite square root of $Z^TDZ$ and $X\in\R^{(n-h+1)\times k}$  contains some orthonormal eigenvectors corresponding to the $k$ smallest eigenvalues of $Q^{-1}Z^TLZQ^{-1}$ as columns.
We want to show that up to some orthogonal transformation, the rows of $ZQ^{-1}X$  are close to the rows of $Z\mathcal{Q}^{-1}\mathcal{X}$,
where $\mathcal{Q}\in\R^{(n-h+1)\times(n-h+1)}$ is the
positive definite square root of $Z^T\mathcal{D}Z$ and $\mathcal{X}\in\R^{(n-h+1)\times k}$  contains some orthonormal eigenvectors corresponding to the $k$ smallest eigenvalues of $\mathcal{Q}^{-1}Z^T\mathcal{L}Z\mathcal{Q}^{-1}$ as columns.
%
It is  $Z^T\mathcal{D}Z=(\lambda_1-a)I_{(n-h+1)}$.
Consequently, $\mathcal{Q}=\sqrt{\lambda_1-a}\cdot I_{(n-h+1)}$ and $\mathcal{Q}^{-1}=\frac{1}{\sqrt{\lambda_1-a}}\cdot I_{(n-h+1)}$ and it is  $\mathcal{Q}^{-1} Z^T\mathcal{L} Z\mathcal{Q}^{-1}=\frac{1}{\lambda_1-a}\cdot Z^T\mathcal{L} Z$. Hence, the eigenvalues of $\mathcal{Q}^{-1} Z^T\mathcal{L} Z\mathcal{Q}^{-1}$ are the eigenvalues of $Z^T\mathcal{L} Z$ rescaled by $(\lambda_1-a)^{-1}$ with the same eigenvectors as for $Z^T\mathcal{L} Z$.
 According to Lemma~\ref{lemma_spectrum_ZtransLZ}, we can choose $\mathcal{X}$ in such a way that $Z\mathcal{Q}^{-1}\mathcal{X}=\frac{1}{\sqrt{\lambda_1-a}}\cdot Z\mathcal{X}=\frac{1}{\sqrt{\lambda_1-a}}\cdot T$ with $T$ as in Lemma~\ref{lemma_distance_between_rows}, that is $T$ contains the vectors  $\mathbf{1}_n/\sqrt{n}, n_1,n_2,\ldots,n_{k-1}$, with $n_i$ defined in \eqref{def_ni},  as columns.

 We want to obtain an upper bound on $\min_{U\in\R^{k\times k}: U^TU=UU^T=I_k}\|Z\mathcal{Q}^{-1}\mathcal{X}-ZQ^{-1}XU\|_{F}$. Analogously to
\eqref{absch_popoloplo} we obtain
\begin{align*}
\min_{U\in\R^{k\times k}: U^TU=UU^T=I_k}\|Z\mathcal{Q}^{-1}\mathcal{X}-ZQ^{-1}XU\|_{F}=\min_{U\in\R^{k\times k}: U^TU=UU^T=I_k}\|\mathcal{Q}^{-1}\mathcal{X}-Q^{-1}XU\|_{F}.
\end{align*}
The rank of both $\mathcal{Q}^{-1}\mathcal{X}$ and $Q^{-1}XU$ equals $k$ and hence the rank of $\mathcal{Q}^{-1}\mathcal{X}-Q^{-1}XU$ is not greater than $2k$. We have
\begin{align*}
\|\mathcal{Q}^{-1}\mathcal{X}-Q^{-1}XU\|_F\stackrel{\eqref{absch_frob_norm}}{\leq} \sqrt{2k}\cdot \|\mathcal{Q}^{-1}\mathcal{X}-Q^{-1}XU\|\leq \sqrt{2k}\cdot\|\mathcal{Q}^{-1}\|\cdot \|\mathcal{X}-XU\|+\sqrt{2k}\cdot\|\mathcal{Q}^{-1}-Q^{-1}\|\cdot \|XU\|
\end{align*}
with
$\|\mathcal{Q}^{-1}\|=\frac{1}{\sqrt{\lambda_1-a}}$
and
$\|XU\|=1$ because of $X^TX=I_k$ and $U^TU=I_k$.
Hence
\begin{align}\label{central_absch_normalized}
\begin{split}
\min_{U:\, U^TU=UU^T=I_k}\|Z\mathcal{Q}^{-1}\mathcal{X}-ZQ^{-1}XU\|_{F}
\leq  \frac{\sqrt{2k}}{\sqrt{\lambda_1-a}}\cdot \min_{U: \,U^TU=UU^T=I_k}\|\mathcal{X}-XU\|+\sqrt{2k}\cdot\|\mathcal{Q}^{-1}-Q^{-1}\|.
  \end{split}
 \end{align}
 Because of \eqref{absch_frob_norm} we have
 \begin{align}\label{apply_absch_frob_in_normproof}
 \min_{U\in\R^{k\times k}:U^TU=UU^T=I_k}\|\mathcal{X}-XU\|\leq \min_{U\in\R^{k\times k}:U^TU=UU^T=I_k}\|\mathcal{X}-XU\|_F
 \end{align}
and similarly to how we obtained the bound \eqref{absch_after_sin_kahan} in Part~2, we can show that
 \begin{align}\label{absch_sin_kahan_norm}
  \min_{U\in\R^{k\times k}:U^TU=UU^T=I_k}\|\mathcal{X}-XU\|_F\leq \frac{16\sqrt{k^3}(\lambda_1-a)}{n(c-d)}\|\mathcal{Q}^{-1}Z^T\mathcal{L}Z\mathcal{Q}^{-1}-Q^{-1}Z^TLZQ^{-1}\|.
 \end{align}

Before looking at $\|\mathcal{Q}^{-1}Z^T\mathcal{L}Z\mathcal{Q}^{-1}-Q^{-1}Z^TLZQ^{-1}\|$
let us first look at the second term in \eqref{central_absch_normalized}. Because $Q^{-1}$ is symmetric and $\mathcal{Q}^{-1}=\frac{1}{\sqrt{\lambda_1-a}}\cdot I_{(n-h+1)}$ we have
 \begin{align*}
 \|\mathcal{Q}^{-1}-Q^{-1}\|=\max\left\{\left|\nu_i-\frac{1}{\sqrt{\lambda_1-a}}\right|:\nu_i\text{ is an eigenvalue of }Q^{-1}\right\}.
 \end{align*}
 It is $Q^2=Z^TDZ$. Denoting the eigenvalues of $Z^TDZ$ by $\mu'_1,\ldots,\mu'_{n-h+1}$ (note that all of them are greater than zero according to \eqref{absch_ew_ZTDZ}),  the eigenvalues of $Q^{-1}$ are $1/\sqrt{\mu'_1},\ldots,1/\sqrt{\mu'_{n-h+1}}$.
For any $z_1,z_2>0$ we have
\begin{align}\label{absch_wurzel}
|\sqrt{z_1}-\sqrt{z_2}|=\frac{|(\sqrt{z_1}-\sqrt{z_2})(\sqrt{z_1}+\sqrt{z_2})|}{\sqrt{z_1}+\sqrt{z_2}}=\frac{|z_1-z_2|}{\sqrt{z_1}+\sqrt{z_2}}\leq\frac{|z_1-z_2|}{\sqrt{z_2}}
\end{align}
and
\begin{align}\label{absch_wurzel2}
\left| \frac{1}{\sqrt{z_1}}-\frac{1}{\sqrt{z_2}}\right|=\frac{|\sqrt{z_1}-\sqrt{z_2}|}{\sqrt{z_1}\sqrt{z_2}}\stackrel{\text{ for  $z_1\geq \frac{z_2}{2}$ }}{\leq} \frac{\sqrt{2}\cdot|\sqrt{z_1}-\sqrt{z_2}|}{z_2}\stackrel{\eqref{absch_wurzel}}{\leq}\frac{\sqrt{2}\cdot |z_1-z_2|}{\sqrt{z_2^{\,3}}}.
\end{align}
According to \eqref{absch_ew_ZTDZ} we have $\mu'_i\geq \frac{\lambda_1 -a}{2}>0$, $i\in[n-h+1]$, and hence
\begin{align*}
\left| \frac{1}{\sqrt{\mu'_i}}-\frac{1}{\sqrt{\lambda_1-a}}\right|\stackrel{\eqref{absch_wurzel2}}{\leq} \frac{\sqrt{2}\cdot |\mu'_i-(\lambda_1 -a)|}{\sqrt{(\lambda_1 -a)^{3}}}\stackrel{\eqref{absch_mu_ZTDZ}}{\leq} \frac{\sqrt{2}\cdot \|D-\mathcal{D}\|}{\sqrt{(\lambda_1 -a)^{3}}}, \quad i\in[n-h+1],
\end{align*}
and
\begin{align}\label{absch_q_inverse}
 \|\mathcal{Q}^{-1}-Q^{-1}\|\leq \frac{\sqrt{2}\cdot \|D-\mathcal{D}\|}{\sqrt{(\lambda_1 -a)^{3}}}.
 \end{align}

 Let us now look at $\|\mathcal{Q}^{-1}Z^T\mathcal{L}Z\mathcal{Q}^{-1}-Q^{-1}Z^TLZQ^{-1}\|$.
It is
\begin{align}\label{retw2}
\begin{split}
 &\|\mathcal{Q}^{-1} Z^T\mathcal{L} Z\mathcal{Q}^{-1}-Q^{-1} Z^TL ZQ^{-1}\|
 \leq  \|\mathcal{Q}^{-1} -Q^{-1}\|\cdot \| Z^T\mathcal{L} Z\|\cdot\|\mathcal{Q}^{-1}\|+\\
 &~~~~~~~~~~~~~~~~~~~~~~~~~~~~~~~~~~~\|Q^{-1}\|\cdot \| Z^T\mathcal{L} Z- Z^TL Z\|\cdot\|\mathcal{Q}^{-1}\|
+ \|Q^{-1}\|\cdot\| Z^TL Z\|\cdot \|\mathcal{Q}^{-1}-Q^{-1}\|.
\end{split}
\end{align}
It is $\|\mathcal{Q}^{-1}\|=\frac{1}{\sqrt{\lambda_1-a}}$. According to Lemma~\ref{lemma_spectrum_ZtransLZ}, the largest eigenvalue of $Z^T\mathcal{L} Z$ is $\lambda_1$ or $\lambda_1-\lambda_{hk}$, where $\lambda_1-\lambda_{hk}\leq 2\lambda_1$ according to Lemma~\ref{lemma_spectrum_h_groups}. Consequently,  $\| Z^T\mathcal{L} Z\|\leq 2\lambda_1$.
It is
\begin{align*}
\|Q^{-1}\|\leq \|\mathcal{Q}^{-1}-Q^{-1}\|+\|\mathcal{Q}^{-1}\|\stackrel{\eqref{absch_q_inverse}}{\leq}\frac{\sqrt{2}\cdot \|D-\mathcal{D}\|}{\sqrt{(\lambda_1 -a)^{3}}}+\frac{1}{\sqrt{\lambda_1-a}}
\end{align*}
and
\begin{align*}
  \|Z^TL Z\|\leq   \| Z^TL Z-Z^T\mathcal{L} Z\|+    \|Z^T\mathcal{L} Z\|\stackrel{\eqref{absch_getting_rid_Z}}{\leq} \|L-\mathcal{L}\|+2\lambda_1.
\end{align*}
 It follows that
 \begin{align}\label{retw324242}
\begin{split}
 &\|\mathcal{Q}^{-1} Z^T\mathcal{L} Z\mathcal{Q}^{-1}-Q^{-1} Z^TL ZQ^{-1}\|
 \leq  \frac{4\lambda_1\cdot \|D-\mathcal{D}\|}{(\lambda_1 -a)^2}+\left(\frac{\sqrt{2}\cdot \|D-\mathcal{D}\|}{(\lambda_1 -a)^{2}}+\frac{1}{\lambda_1-a}\right)\cdot\|\mathcal{L} -L \|+\\
 &~~~~~~~~~~~~~~~~~~~~~~~~~~~~~~~~~~~~~~~~~~~~\left( \frac{2\cdot \|D-\mathcal{D}\|^2}{(\lambda_1 -a)^{3}}+\frac{\sqrt{2}\cdot \|D-\mathcal{D}\|}{(\lambda_1 -a)^{2}}\right)\cdot(\|L-\mathcal{L}\|+2\lambda_1)\\
 &~~~~~~~~~~\leq \frac{8\lambda_1\cdot \|D-\mathcal{D}\|}{(\lambda_1 -a)^2}+\left(\frac{4\cdot \|D-\mathcal{D}\|}{(\lambda_1 -a)^{2}}+\frac{1}{\lambda_1-a}\right)\cdot\|\mathcal{L} -L \|+\frac{2\cdot \|D-\mathcal{D}\|^2}{(\lambda_1 -a)^{3}}\cdot(\|L-\mathcal{L}\|+2\lambda_1).
\end{split}
\end{align}
 If \eqref{absch_w_und_d} and \eqref{absch_l} hold, then, after combining \eqref{central_absch_normalized}, \eqref{apply_absch_frob_in_normproof}, \eqref{absch_sin_kahan_norm}, \eqref{absch_q_inverse},  \eqref{retw324242} and using that $\lambda_1-a>\lambda_1/2$, which follows from  \eqref{condition_on_probabilities_norm}, we end up with
 \begin{align*}
 \min_{U\in\R^{k\times k}: U^TU=UU^T=I_k}\|Z\mathcal{Q}^{-1}\mathcal{X}-ZQ^{-1}XU\|_{F}
 \leq \frac{\const(C,r) \cdot k^2}{n(c-d)\sqrt{\lambda_1 -a}}\left(\sqrt{a\cdot n\ln n}+\frac{a\cdot n\ln n}{\lambda_1 -a}+\frac{(a\cdot n\ln n)^{3/2}}{(\lambda_1 -a)^{2}}\right)+\\
 \frac{\const(C,r)}{\sqrt{\lambda_1 -a}}\cdot\frac{\sqrt{k}\cdot \sqrt{a\cdot n\ln n}}{\lambda_1 -a}
 \end{align*}
 for some
 $\const(C,r)$.
Using that  $\sqrt{a\cdot n\ln n}\leq\sqrt{k\cdot a\cdot n\ln n}\leq \frac{\widehat{C}_2}{1+M}(\lambda_1-a)\leq \widehat{C}_2(\lambda_1-a)$ due to \eqref{condition_on_probabilities_norm}, for some $\widehat{C}_2=\widehat{C}_2(C,r)$ that we will specify shortly (we will choose it smaller than $1$), we can simplify this bound such that
\begin{align}\label{absch_ev_1_norm}
\min_{U\in\R^{k\times k}: U^TU=UU^T=I_k}\|Z\mathcal{Q}^{-1}\mathcal{X}-ZQ^{-1}XU\|_{F}
 \leq \frac{\const(C,r) \cdot k^2}{n(c-d)\sqrt{\lambda_1 -a}}\cdot\sqrt{a\cdot n\ln n}+ \frac{\const(C,r)}{\sqrt{\lambda_1 -a}}\cdot\frac{\sqrt{k}\cdot \sqrt{a\cdot n\ln n}}{\lambda_1 -a}.
 \end{align}

 \vspace{3mm}
Similarly to Part~3, we use Lemma~5.3 in  \citet{lei2015} to complete the proof of Theorem~\ref{theorem_SB_model} for Algorithm~\ref{fair_SC_alg_normalized}. Assume that \eqref{absch_ev_1_norm} holds and let $U\in\R^{k\times k}$ be an orthogonal matrix attaining the minimum, that is we have
\begin{align}\label{absch_ev1_f_norm}
\begin{split}
\|Z\mathcal{Q}^{-1}\mathcal{X}U^T-ZQ^{-1}X\|_{F}&=\|Z\mathcal{Q}^{-1}\mathcal{X}-ZQ^{-1}XU\|_{F} \\
& \leq \frac{\const(C,r) \cdot k^2}{n(c-d)\sqrt{\lambda_1 -a}}\cdot\sqrt{a\cdot n\ln n}+ \frac{\const(C,r)}{\sqrt{\lambda_1 -a}}\cdot\frac{\sqrt{k}\cdot \sqrt{a\cdot n\ln n}}{\lambda_1 -a}.
\end{split}
\end{align}
As we have noted above, we can choose $\mathcal{X}$ in such a way that $Z\mathcal{Q}^{-1}\mathcal{X}=\frac{1}{\sqrt{\lambda_1-a}}\cdot T$ with $T$ as in Lemma~\ref{lemma_distance_between_rows}. According to Lemma~\ref{lemma_distance_between_rows}, if we denote the $i$-th row of $\frac{1}{\sqrt{\lambda_1-a}}\cdot T$ by $\tilde{t}_i$, then $\tilde{t}_i=\tilde{t}_j$ if the vertices $i$ and $j$ are in the same cluster and
$\|\tilde{t}_i-\tilde{t}_j\|=\sqrt{\frac{2k}{n(\lambda_1-a)}}$
if the vertices $i$ and $j$ are not in the same cluster. Since multiplying $\frac{1}{\sqrt{\lambda_1-a}}\cdot T$ by $U^T$ from the right side has the effect of applying an orthogonal transformation to the rows of $\frac{1}{\sqrt{\lambda_1-a}}\cdot T$, the same properties are true for the matrix $\frac{1}{\sqrt{\lambda_1-a}}\cdot TU^T$. Lemma~5.3 in  \citet{lei2015} guarantees that for any $\delta\leq \sqrt{\frac{2k}{n(\lambda_1-a)}}$, if
\begin{align}\label{condition_lemma53_lirin_norm}
\frac{16+8M}{\delta^2}\left\|\frac{1}{\sqrt{\lambda_1-a}}\cdot TU^T-ZQ^{-1}X\right\|_F^2<\underbrace{|C_l|}_{=\frac{n}{k}},\quad l\in[k],
\end{align}
with $|C_l|$ being the size of cluster $C_l$, then a $(1+M)$-approximation algorithm for $k$-means clustering applied to the rows of the matrix $ZQ^{-1}X$ returns  a clustering that misclassifies at most
\begin{align}\label{guaran_lemma53_lirin_norm}
\frac{4(4+2M)}{\delta^2}\left\|\frac{1}{\sqrt{\lambda_1-a}}\cdot TU^T-ZQ^{-1}X\right\|_F^2
\end{align}
many vertices. If we choose $\delta= \sqrt{\frac{2k}{n(\lambda_1-a)}}$, then for a small enough $\widehat{C}_2=\widehat{C}_2(C,r)$
(chosen smaller than 1 and also so small that \eqref{absch_first_part4} implies \eqref{absch_tretorn}),  the condition~\eqref{condition_on_probabilities_norm} implies
\eqref{condition_lemma53_lirin_norm} because of \eqref{absch_ev1_f_norm}. Also, for a large enough $\widetilde{C}_2=\widetilde{C}_2(C,r)$ the expression~\eqref{guaran_lemma53_lirin_norm} is upper bounded by the expression~\eqref{quant_guaran_theo_norm}.

 \vspace{6mm}

\section{Why Running Standard Spectral Clustering on Each Group $V_s$ Separately is not a Good Idea}\label{appendix_baseline_strategy}

\begin{figure}[t]
\centering
\includegraphics[scale=0.6]{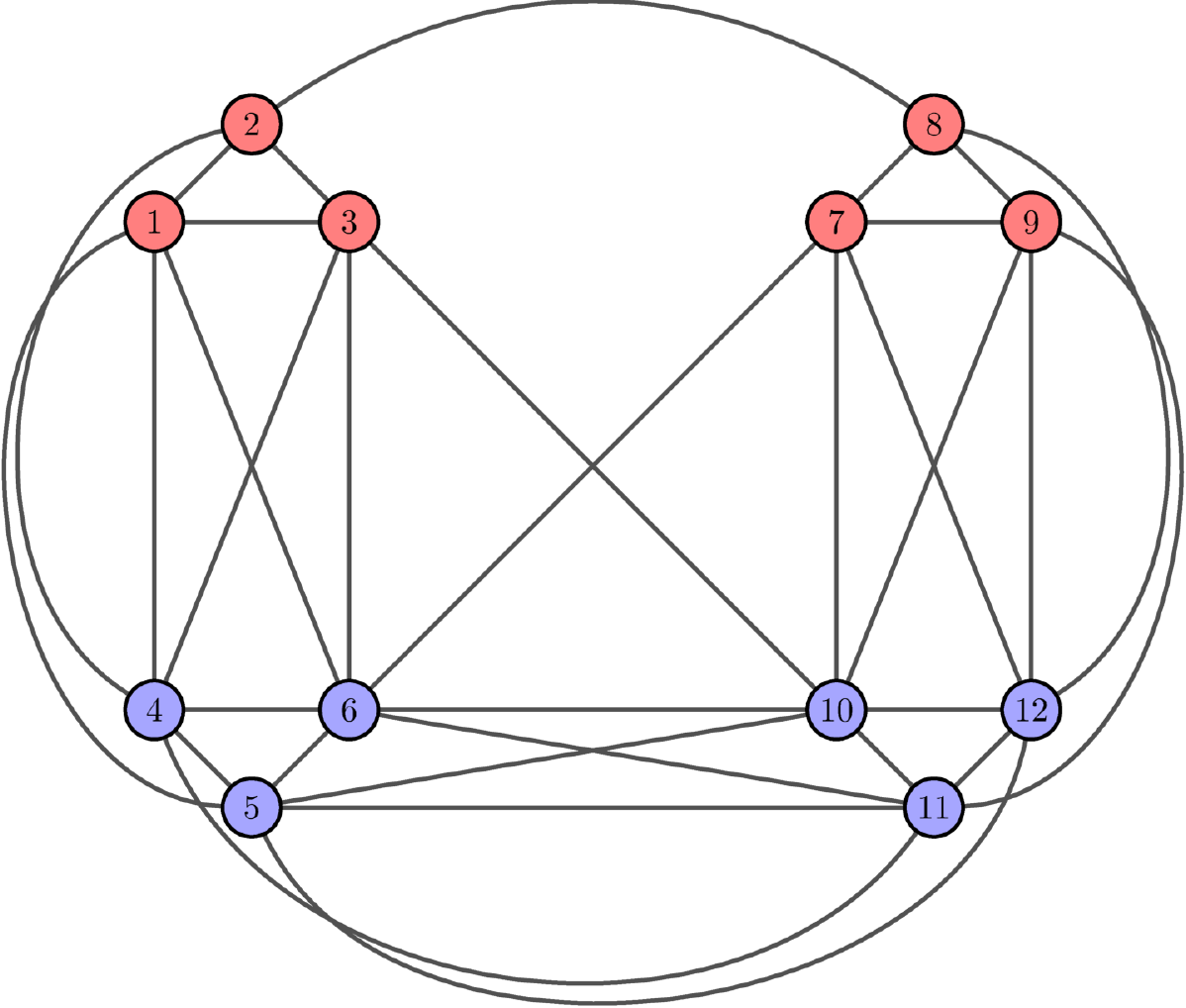}
\caption{Example of a graph for which both standard spectral clustering and our fair versions are able to recover the fair meaningful ground-truth clustering while a naive approach that runs standard spectral clustering on each group separately fails to do so. It is $V=[12]$, $V_1=\{1,2,3,7,8,9\}$,  $V_2=\{4,5,6,10,11,12\}$ and the fair ground-truth clustering is $V=\{1,2,3,4,5,6\}\dot{\cup}\{7,8,9,10,11,12\}$.}\label{pic_counterex}
\end{figure}

One might think that the following was a good  idea for partitioning $V=V_1\dot{\cup}\ldots \dot{\cup}V_h$ into $k$ clusters such that every cluster has a high balance value: we could try to  run  standard spectral clustering with $k$ clusters on each of the groups $V_s$, $s\in[h]$, separately and then to merge the $k\cdot h$ many clusters to end up with $k$  clusters.

The graph shown in Figure~{\ref{pic_counterex} illustrates that such an approach, in general, fails to recover an underlying fair ground-truth clustering, even when standard spectral clustering succeeds. We have $V=[12]$ and two  groups $V_1=\{1,2,3,7,8,9\}$ (shown in red) and $V_2=\{4,5,6,10,11,12\}$ (shown in blue). We want to partition $V$ into two clusters. It can be verified that a clustering with minimum RatioCut value is given by $V=\{1,2,3,4,5,6\}\dot{\cup}\{7,8,9,10,11,12\}$ and that this clustering is found by running standard spectral clustering. This clustering is perfectly fair with $\bal(\{1,2,3,4,5,6\})=\bal(\{7,8,9,10,11,12\})=1$ and is also returned by our fair versions of spectral clustering.
Let us now look at the idea of running  standard spectral clustering on $V_1$ and $V_2$ separately:
when running spectral clustering
on the subgraph induced by $V_1$,  we obtain the clustering $V_1=\{1,2,3\}\dot{\cup}\{7,8,9\}$ as we would hope for.
However, in the subgraph induced by $V_2$ the clustering $V_2=\{4,5,6\}\dot{\cup}\{10,11,12\}$  does not have minimum RatioCut value and is not returned  by spectral clustering. Consequently, no matter how we merge the two clusters for $V_1$ and the two clusters for $V_2$, we do not end up with the  clustering $V=\{1,2,3,4,5,6\}\dot{\cup}\{7,8,9,10,11,12\}$.

Note that for these findings to hold we do not require the specific graph shown in Figure~{\ref{pic_counterex}. The key is its structure: let
$V_1=\{1,2,3,7,8,9\}$,  $V_2=\{4,5,6,10,11,12\}$, $C_1=\{1,2,3,4,5,6\}$ and $C_2=\{7,8,9,10,11,12\}$. Then the graph looks like a realization of the following random graph model: as in our variant of the stochastic block model introduced in Section~\ref{section_SBmodel}, two vertices~$i$ and $j$ are connected with an edge with a certain probability $\Pr(i,j)$, which is now given by
\begin{align*}
\Pr(i,j)=
\begin{cases}
  a, & i,j\in C_1~\vee~ i,j\in C_2 ~\vee~ i,j\in V_2, \\
b, & \text{else},
    \end{cases}
    \end{align*}
with $a$ large and $b$ small.

\end{document}